\newtheorem{theorem}{Theorem}
\newtheorem{prop}{Proposition}
\newtheorem{lemma}{Lemma}
\newtheorem{cor}{Corollary}
\newtheorem{asm}{Assumption}
\DeclareMathOperator*{\sgn}{sign}
\DeclareMathOperator*{\conv}{conv}
\DeclareMathOperator*{\argmin}{argmin}
\DeclareMathOperator*{\argmax}{argmax}
\DeclareMathOperator*{\rob}{rob}
\DeclareMathOperator*{\poly}{poly}
\DeclareMathOperator*{\margin}{margin}
\newcommand{\PP}{\mathbb{P}}
\newcommand{\EE}{\mathbb{E}}
\newcommand{\real}{\mathbb{R}}
\newcommand{\mA}{\mathcal{A}}
\newcommand{\mC}{\mathcal{C}}
\newcommand{\mO}{\mathcal{O}}
\newcommand{\tmO}{\tilde{\mathcal{O}}}
\newcommand{\mS}{\mathcal{S}}
\newcommand{\mD}{\mathcal{D}}
\newcommand{\mX}{\mathcal{X}}
\newcommand{\mY}{\mathcal{Y}}
\newcommand{\mW}{\mathcal{W}}
\newcommand{\ie}{\emph{i.e.}}
\newcommand{\z}[1]{z^{(#1)}}
\title{Convergence and Margin of Adversarial Training\\ on Separable Data}
\author{%
  Zachary Charles\\
  University of Wisconsin-Madison\\
  \texttt{zcharles@wisc.edu} \\
  \And
  Shashank Rajput \\
  University of Wisconsin-Madison \\
  \texttt{rajput3@wisc.edu} \\
  \AND
  Stephen Wright \\
  University of Wisconsin-Madison \\
  \texttt{swright@cs.wisc.edu} \\
  \And
  Dimitris Papailiopoulos \\
  University of Wisconsin-Madison\\
  \texttt{dimitris@papail.io} \\
}
\begin{document}

\maketitle

\begin{abstract}
Adversarial training is a technique for training robust machine learning models.
To encourage robustness, it iteratively computes adversarial examples for the model, and then re-trains on these examples via some update rule. 
This work analyzes the performance of adversarial training on linearly separable data, and provides bounds on the number of iterations required for large margin.  
We show that when the update rule is given by an arbitrary empirical risk minimizer, adversarial training may require exponentially many iterations to obtain large margin. 
However, if gradient or stochastic gradient update rules are used, only polynomially many iterations are required to find a large-margin separator. 
By contrast, without the use of adversarial examples, gradient methods may require exponentially many iterations to achieve large margin.  
Our results are derived by showing that adversarial training with gradient updates minimizes a robust version of the empirical risk at a $\mathcal{O}(\ln(t)^2/t)$ rate, despite non-smoothness.
We corroborate our theory empirically.
\end{abstract}

\section{Introduction}

Machine learning models trained through standard methods
often lack robustness against {\it adversarial examples}. These are small
perturbations of input examples, designed to ``fool'' the model into
misclassifying the original input \cite{biggio2013evasion, goodfellow2014explaining, nguyen2015deep, szegedy2013intriguing}. Unfortunately, even small perturbations can cause a large degradation in the test accuracy of popular machine learning models, including deep
neural networks \cite{szegedy2013intriguing}. 
This lack of robustness has spurred a large body of work on designing attack methods for
crafting effective adversarial examples
\cite{grosse2016adversarial,hendrik2017universal,moosavi2016deepfool,mopuri2017fast,
  papernot2016transferability, tramer2017ensemble} and defense mechanisms
for training models that are more robust to norm bounded perturbations \cite{tramer2017ensemble, madry2017towards, sinha2017certifiable, zantedeschi2017efficient, samangouei2018defense, ilyas2017robust, shaham2018understanding}.

{\it Adversarial training} is a family of optimization-based methods for defending
against adversarial perturbations. 
These methods generally operate by computing adversarial examples, and retraining the model on these examples \cite{goodfellow2014explaining, madry2017towards, shaham2018understanding}. This two-step process is repeated iteratively. While adversarial training methods have achieved empirical success
\cite{madry2017towards, shaham2018understanding,ford2019adversarial, hendrycks2018benchmarking}, there is currently little theoretical analysis of their convergence and capacity for guaranteeing robustness.

A parallel line of research has investigated whether standard
optimization methods, such as gradient descent (GD) and stochastic
gradient descent (SGD), exhibit an {\it implicit bias} toward
robust and generalizable models \cite{gunasekar2018characterizing, gunasekar2018implicit, ji2018risk, nacson2019convergence, nacson2018stochastic, soudry2018implicit}. 
This line of work shows
that GD and SGD both converge to the max-margin classifier of linearly
separable data, provided that the loss function is chosen
appropriately. Notably, the max-margin classifier is the most robust
 model against $\ell_2$ bounded perturbations. Thus, gradient descent is indeed biased towards robustness in some settings. Unfortunately, convergence to this desirable limit can be slow, and in some cases an exponential number of iterations may be needed \cite{nacson2019convergence, nacson2018stochastic, soudry2018implicit}.

\paragraph{Our contributions.}

In this work, we merge these two previously separate lines of work,
studying whether (and how) various types of adversarial training
exhibit a bias towards robust models. We focus on linear
classification tasks and study robustness primarily through the lens of margin, the
minimum distance between the classification boundary and the
(unperturbed) training examples. Our results show that alone, neither
adversarial training with generic update rules, nor gradient-based training on the original data set, can find large-margin models quickly. 
However, by combining the two --- interspersing
gradient-based update rules with the addition of adversarial examples to the
training set --- we can train robust models significantly faster.

We show that for logistic regression, gradient-based update rules evaluated on adversarial examples minimize a robust form of the empirical risk function at a rate of $O(\ln(t)^2/t)$, where $t$ is the
number of iterations of the adversarial training process. 
This convergence rate mirrors the convergence of GD and SGD on the
standard empirical risk, {\it despite} the non-smoothness of the robust empirical risk function. 
We then use this analysis to quantify the
number of iterations required to obtain a given margin. 
We show that while GD may require exponentially many iterations to achieve large margin in non-adversarial training, adversarial training with (stochastic) gradient-based rules requires only polynomially many iterations to achieve large margin. 
We support our theoretical bounds with experimental results.


\subsection{Related Work}

Our results are most similar in spirit to \cite{ji2018risk}, which
uses techniques inspired by the Perceptron
\cite{novikoff1962convergence} to analyze the convergence of GD and
SGD on logistic regression. It derives a high probability $O(\ln(t)^2/t)$
convergence rate for SGD on separable data, as well as an $O(\ln(t)^2/t)$
convergence rate for GD in general. We adapt these techniques for
adversarial training. Our work also connects to work on ``implicit
bias'', which studies the parameter convergence of GD and SGD for
logistic regression on separable data \cite{gunasekar2018characterizing, gunasekar2018implicit, ji2018risk, nacson2019convergence, nacson2018stochastic, soudry2018implicit}. These works
show that the parameters generated by GD and SGD converge to the
parameters that correspond to the max-margin classifier at
polylogarithmic rates. This line of work, among other tools, employs techniques developed in the context of
 AdaBoost \cite{freund1997decision,
  mukherjee2013rate, telgarsky2012primal}. Our analysis is related in
particular to margin analyses of boosting \cite{rosset2004boosting, telgarsky2013margins}, which show that the path taken by boosting on
exponentially tailed losses approximates the max-margin classifier.

There is a large and active body of theoretical work on adversarial
robustness. While there are various hardness results in learning
robust models \cite{bubeck2018adversarial, gilmer2018adversarial, schmidt2018adversarially, tsipras2018robustness,
  tsipras2018there},
our analysis shows that such results may not apply to practical
settings. Our analysis uses a robust optimization lens previously
applied to machine learning in work such as \cite{sinha2017certifiable, caramanis201214, xu2009robustness}. While
\cite{xu2009robustness} shows that the max-margin classifier is the
solution to a robust empirical loss function, our work derives
explicit convergence rates for SGD and GD on such losses. Finally, we
note that adversarial training can be viewed as a data augmentation
technique. While the relation between margin and static data
augmentation was previously studied in \cite{rajput2019does}, our work
can be viewed as analyzing {\em adaptive} data augmentation methods.

\section{Overview}\label{sec:prelim}


		Let $\mX, \mY$, and $\mW$ denote the feature space, label space, and model space, respectively, and let $\ell: \mW \times \mX \times \mY \to \real_{\geq 0}$ be some loss function. Given a dataset $S \subseteq \mX \times \mY$, the empirical risk minimization objective is given by
		\begin{equation}\label{eq:emp_risk}
		\min_{w \in \mW} L(w) := \frac{1}{|S|}\sum_{(x,y) \in S}\ell(w,x,y).\end{equation}
		Unfortunately, generic empirical risk minimizers may not be robust to small input perturbations. To find models that are resistant to bounded input perturbations, we define the following robust loss functions
		\begin{equation}\label{eq:ell_robust}
		\ell_{\rob}(w,x,y) := \max_{\|\delta\| \leq \alpha} \ell(w,x+\delta,y),~~L_{\rob}(w) := \frac{1}{|S|}\sum_{(x,y)\in S}\ell_{\rob}(w,x,y).\end{equation}
		The function $L_{\rob}$ is a measure for the robustness of $w$ on $S$. While $\|\cdot\|$ can be any norm, here we focus on the  $\ell_2$ norm and let $\|\cdot\|$ denote it throughout our text. 
		Another important measure of robustness is {\it margin}. We focus on binary linear classification where $\mX = \real^d, \mY = \{\pm 1\}$, and $\mW = \real^d$. The  class predicted by $w$ on $x$ is given by $\sgn(w^Tx)$, and the margin of $w$ on $S$ is
		\begin{equation}\label{eq:margin}
		\margin_S(w) := \inf_{(x,y) \in S} \dfrac{y\langle w,x\rangle}{\|w\|}.\end{equation}		
		We say $w$ linearly separates $S$ if $\forall (x,y) \in S$, $y\langle w,x \rangle > 0$. Note $w$ linearly separates $S$ iff $\margin_S(w) > 0$. One can interpret margin as the size of the smallest perturbation needed to fool $w$ in to misclassifying an element of $S$. Thus, the most robust linear separator is the classifier with the largest margin, referred to as the {\it max-margin classifier}.

	\paragraph{Adversarial training.} 

		One popular class of defenses, referred to generally as {\it adversarial training} \cite{madry2017towards}, involve retraining a model on adversarially perturbed data points. The general heuristic follows two steps. At each iteration $t$ we construct adversarial examples for some subset of the training data. For each example $(x,y)$ in this set, an $\alpha$-bounded norm adversarial perturbation is constructed as follows:
		\begin{equation}\label{eq:adv_train}
		\delta^* = \argmax_{\|\delta\| \leq \alpha} \ell(w,x+\delta,y).\end{equation}
		We then update our model $w$ using an update rule $\mA$ that operates on the current model and ``adversarial examples'' of the form $(x+\delta^*,y)$. In the most general case, this update rule can also utilize true training data in $S$ and adversarial examples from prior iterations.

		More formally, let $w_0$ be our initial model. $S$ denotes our true training data, and $S'$ will denote all previously seen adversarial examples. We initialize $S' = \emptyset$. At each $t \geq 0$, we select some subset $S_t = \{(x_i^{(t)},y_i^{(t)})\}_{i=1}^m \subseteq S$. For $1 \leq i \leq m$, we let $\delta_i^{(t)}$ be the solution to \eqref{eq:adv_train} when $(x,y) = (x_i^{(t)},y_i^{(t)})$ and $w = w_t$. We then let
		$$S'_t = \{(x_i^{(t)}+\delta_i^{(t)},y_i^{(t)})\}_{i=1}^m$$
		$$S' = S' \cup S'_t.$$
		Thus, $S'_t$ is the set of adversarial examples computed at iteration $t$, while $S'$ contains all adversarial examples computed up to (and including) iteration $t$. Finally, we update our model $w_t$ via $w_{t+1} = \mA(w_t,S,S')$ for some update rule $\mA$. This generic notation will be useful to analyze a few different algorithms. A full description of adversarial training is given in Algorithm \ref{alg:adv_train}.

		\begin{algorithm}[htb]
			\caption{Adversarial training}
			{\small
			\begin{algorithmic}
				\STATE {\bfseries Input:} Training set $S$, perturbation size $\alpha$, update algorithm $\mA$, loss function $\ell(w,x,y)$.
				\STATE Initialize $w_0 \leftarrow 0$, $S' \leftarrow \emptyset$.
				\FOR{$t=0$ {\bfseries to} $T$}
					\STATE Select $S_t := \{(x_i^{(t)}, y_i^{(t)})\}_{i=1}^m \subseteq S$.
					\FOR{$i=1$ {\bfseries to} $m$}
						\STATE Set $\delta_i^{(t)} \leftarrow \argmax_{\|\delta\| \leq \alpha}\ell(w_{t},x_i^{(t)}+\delta_i^{(t)},y_{i}^{(t)})$.
					\ENDFOR
					\STATE Set $S'_t \leftarrow\{(x_i^{(t)}+\delta_{i}^{(t)}, y_{i}^{(t)} )\}_{i=1}^m$,~~$S' \leftarrow S' \cup S'_t$.
					\STATE Update $w_{t+1} \leftarrow \mA(w_{t},S,S')$.
				\ENDFOR
			\end{algorithmic}}
			\label{alg:adv_train}
		\end{algorithm}		

		Once $\alpha$ is fixed, there are two primary choices in selecting an adversarial training method: the subset $S_t$ used to find adversarial examples, and the update rule $\mA$. For example, one popular instance of adversarial training (discussed in detail in \cite{madry2017towards}) performs mini-batch SGD on the adversarial examples. Specifically, this corresponds to the setting where $S_t$ is randomly selected from $S$, and $\mA$ computes a mini-batch SGD update on $S'_t$ via
		\begin{equation}\label{eq:mini-batch-adv}
		w_{t+1} = \mA(w_t,S,S') = w_t - \dfrac{\eta_t}{|S'_t|} \sum_{(x+\delta,y) \in S'_t}\nabla \ell(w_t,x+\delta,y).\end{equation}

		In particular, this update does not utilize the full set $S'$ of all previously seen adversarial examples, but instead updates only using the set $S'_t$ of the most recently computed adversarial examples. It also does not use the true training samples $S$. However, other incarnations of adversarial training have used more of $S$ and $S'$ to enhance their accuracy and efficiency \cite{shafahi2018universal}.

	\paragraph{Main results.}		

		In the following, we analyze the performance of adversarial training for binary linear classification. In particular, we wish to understand how the choice of $\mA$, $\alpha$, and the number of iterations impact $\margin_S(w_t)$ and $L_{\rob}(w_t)$. We will make the following assumptions throughout:
		\begin{asm}\label{asm1} $\ell(w,x,y) = f(-y\langle w,x\rangle)$ where $f$ is nonnegative and monotonically increasing.\end{asm}
		\begin{asm}\label{asm2}$S$ is linearly separable with max-margin $\gamma \leq 1$.\end{asm}
		\begin{asm}\label{asm3}The parameter $\alpha$ satisfies $\alpha < \gamma$.\end{asm}
		\ref{asm1} guarantees that $\ell$ is a surrogate of the $0-1$ loss for linear classification, since $\ell(w,x,y)$ decreases as $y\langle w,x\rangle$ increases. \ref{asm2} allows us to compare the margin obtained by various methods to $\gamma$. We let $w^*$ denote the max-margin classifier. The assumption that $\gamma \leq 1$ is simply for convenience, as we can always rescale separable data to ensure this.

		Combined, \ref{asm2} and \ref{asm3} guarantee that at every iteration, $S \cup S'$ is linearly separable by $w^*$ with margin at least $\gamma-\alpha$, as we show in the following lemma. 
		\begin{lemma}\label{lem:lin_sep}
		Suppose \ref{asm2} and \ref{asm3} hold, and let $w^*$ be the max-margin classifier of $S$. Then at each iteration of Algorithm \ref{alg:adv_train}, $w^*$ linearly separates $S\cup S'$ with margin at least $\gamma-\alpha$.\end{lemma}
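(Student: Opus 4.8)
The plan is to show that every point in $S \cup S'$ lies on the correct side of $w^*$ by at least $\gamma - \alpha$ (after normalizing $w^*$ to unit norm), and the key observation is that adversarial examples are bounded perturbations of genuine training points, so the worst they can do is erode the margin by $\alpha$. Throughout I will assume $\|w^*\| = 1$, which is without loss of generality since margin is scale-invariant.

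**First** I would handle the original points. By \ref{asm2}, $w^*$ is the max-margin classifier of $S$ with $\margin_S(w^*) = \gamma$, so for every $(x,y) \in S$ we have $y\langle w^*, x\rangle \geq \gamma \geq \gamma - \alpha$. **Next**, I would handle an arbitrary adversarial example $(x + \delta, y) \in S'$. By construction in Algorithm \ref{alg:adv_train}, such a point arises from some $(x,y) \in S$ together with a perturbation $\delta$ satisfying $\|\delta\| \leq \alpha$. Then
\begin{equation*}
y\langle w^*, x + \delta\rangle = y\langle w^*, x\rangle + y\langle w^*, \delta\rangle \geq \gamma - |\langle w^*, \delta\rangle| \geq \gamma - \|w^*\|\,\|\delta\| \geq \gamma - \alpha,
\end{equation*}
where the second inequality is Cauchy--Schwarz and the last uses $\|w^*\| = 1$ and $\|\delta\| \leq \alpha$. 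Since this holds for every element of $S$ and every element of $S'$, we get $y\langle w^*, x\rangle \geq \gamma - \alpha$ for all $(x,y) \in S \cup S'$, hence $\margin_{S \cup S'}(w^*) \geq \gamma - \alpha$.

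**Finally**, I would note that \ref{asm3} gives $\alpha < \gamma$, so $\gamma - \alpha > 0$, which confirms that $w^*$ indeed \emph{linearly separates} $S \cup S'$ (not merely achieves nonnegative inner products), completing the claim. One small bookkeeping point is that $S'$ grows across iterations, but since the bound $\|\delta_i^{(t)}\| \leq \alpha$ is enforced at \emph{every} iteration by the constraint in \eqref{eq:adv_train}, the argument applies uniformly to $S'$ at any iteration $t$; a one-line induction on $t$ makes this fully rigorous if desired. **The only mild subtlety** is remembering that the adversarial perturbation is computed with respect to $w_t$, not $w^*$ — but this is irrelevant here, because the separation guarantee for $w^*$ only uses the norm bound on $\delta$, which holds regardless of which model generated it. There is no real obstacle; the result is essentially a triangle-inequality argument.
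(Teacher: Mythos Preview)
Your proposal is correct and follows essentially the same argument as the paper: both observe that every element of $S\cup S'$ has the form $(x+\delta,y)$ with $(x,y)\in S$ and $\|\delta\|\le\alpha$, then apply Cauchy--Schwarz to conclude $y\langle w^*,x+\delta\rangle\ge\gamma-\alpha$. Your version is simply more verbose, separating the $\delta=0$ case and spelling out the use of \ref{asm3} for strict positivity, but the core idea is identical.
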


		\begin{proof}By construction, any element in $S \cup S'$ is of the form $(x+\delta,y)$ where $(x,y) \in S$ and $\|\delta\|\leq \alpha$. By assumption on $w^*$ and the Cauchy-Schwarz inequality,
		$$y\langle w^*, x+\delta\rangle = y\langle w^*,x\rangle + y\langle w^*,\delta\rangle \geq \gamma-\alpha.$$\end{proof}

			We can now state the main theorems of our work. We first show that adversarial training may take a long time to converge to models with large margin, even when $\mA$ finds an empirical risk minimizer (ERM) of the $0-1$ loss on $S \cup S'$. Note that by Lemma \ref{lem:lin_sep}, this is equivalent to finding a linear separator of $S \cup S'$. That is, even if $\mA$ finds a model that perfectly fits the training data {\it and} all adversarial examples at each step, this is insufficient for fast convergence to good margin.
			\begin{theorem}[Informal]Suppose $\mA(w_t,S,S')$ outputs a linear separator of $S \cup S'$. In the worst case, Algorithm \ref{alg:adv_train} requires $\Omega(\exp(d\epsilon^2/\alpha^2))$ iterations to achieve margin $\epsilon$.\end{theorem}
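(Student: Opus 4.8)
The plan is to construct a hard instance, i.e. a linearly separable dataset $S$ in $\RR^d$ together with a choice of ERM update rule $\mA$ for which the adversarial training process makes only slow progress toward large margin. The natural candidate is to take $S$ to be two points (or a small number of points) with max-margin $\gamma$, say $x$ with label $+1$ and $-x$ with label $-1$ for a suitable $x$, so that the geometry is explicit. At each iteration $\mA$ is allowed to return \emph{any} linear separator of $S \cup S'$; the adversary (constructing the hard instance) gets to pick the worst such separator. The key observation is that a separator $w_t$ of $S \cup S'$ only needs to correctly classify the finitely many adversarial points seen so far, and since the adversarial perturbations $\delta_i^{(t)}$ live in a ball of radius $\alpha$, the set $S'$ after $t$ iterations pins down $w_t$ only on a cone of directions of angular ``width'' controlled by $\alpha/\|x\|$. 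So one can keep returning separators $w_t$ whose normalized margin on $S$ is tiny — as long as there remains a direction consistent with all constraints in $S \cup S'$ that has small margin on the original points.

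The crux of the counting argument is then a covering/packing estimate on the sphere $S^{d-1}$: to force every separator of $S \cup S'$ to have margin at least $\epsilon$ on $S$, the accumulated adversarial examples must ``rule out'' every unit direction $w$ with $\margin_S(w) < \epsilon$, and each single adversarial example $(x_i^{(t)}+\delta_i^{(t)}, y_i^{(t)})$ imposes only one linear (halfspace) constraint on $w$. A halfspace constraint of the form $y\langle w, x+\delta\rangle > 0$, given that $\|\delta\|\le \alpha$ and $\|x\|$ is of order $1$, excludes from the ``bad set'' of low-margin directions only a region of relative measure $\exp(-\Omega(d\epsilon^2/\alpha^2))$ — this is the standard concentration-of-measure fact that a spherical cap of angular radius $\Theta(\epsilon/\alpha)$ occupies an exponentially small fraction of $S^{d-1}$ in high dimension. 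Since each iteration contributes at most $m = |S_t|$ such constraints (and in the two-point construction, effectively $O(1)$), a union bound shows we need $\Omega(\exp(d\epsilon^2/\alpha^2))$ iterations before the low-margin region can be exhausted; until then, an adversarial ERM can always select a surviving low-margin separator, and the process has not achieved margin $\epsilon$.

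To make this precise I would: (i) fix $S = \{(v,+1),(-v,-1)\}$ with $\|v\|$ normalized so the max-margin is $\gamma$, and identify the ``bad cone'' $B_\epsilon = \{w \in S^{d-1} : \margin_S(w) < \epsilon\}$, which is the complement of two antipodal caps and hence has measure $1 - \exp(-\Omega(d\epsilon^2))$ — in fact almost all of the sphere; (ii) show that a single adversarial constraint removes from $B_\epsilon$ a subset of measure at most $\exp(-\Omega(d\epsilon^2/\alpha^2))$, using Cauchy–Schwarz to bound how much a perturbation of norm $\le\alpha$ can tilt the separating hyperplane relative to $v$; (iii) run the adversary's strategy: as long as $B_\epsilon$ minus the union of the $t$ removed regions is nonempty, let $\mA$ output a point of that residual set, which by construction is a valid separator of $S\cup S'$ (it separates $S$ since it has positive margin, and it separates every prior adversarial point by the constraint-satisfaction built into the residual set) but has margin $<\epsilon$ on $S$; (iv) conclude by the union bound that the residual set is nonempty for all $t \le c\exp(d\epsilon^2/\alpha^2)$.

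The main obstacle I anticipate is step (iii) — the circularity of ``the residual set consists exactly of directions consistent with $S'$, but $S'$ itself depends on the $w_t$'s we chose.'' Resolving this requires setting up the construction so that the adversary's choice of $\delta_i^{(t)}$ is itself adversarial: after $\mA$ returns $w_t$, the perturbation $\delta_i^{(t)} = \argmax_{\|\delta\|\le\alpha}\ell(w_t, x_i^{(t)}+\delta, y_i^{(t)})$ is forced, so one must verify that this \emph{forced} perturbation still only removes an exponentially small slice of $B_\epsilon$ — which again follows from the $\alpha$-boundedness, but needs care to confirm that no single step can remove a large chunk of low-margin directions. A secondary technical point is handling the case $m > 1$ and the reuse of old adversarial examples in $S'$: these only multiply the per-iteration constraint count by $m$, which is absorbed into the constant in the exponent, so the bound degrades only by a polynomial factor in $m$ and the stated $\Omega(\exp(d\epsilon^2/\alpha^2))$ rate is unaffected.
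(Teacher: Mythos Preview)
Your overall strategy---a two-point dataset, an adversarial ERM that keeps returning low-margin separators, and a packing/covering count on the sphere---is the right one and matches the paper's. But the real obstacle is step (ii), not the circularity you flag in (iii).

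The problem is that your bad set $B_\epsilon$ contains directions with margin arbitrarily close to zero, and those are cheap to eliminate. With $S=\{(\gamma v,+1),(-\gamma v,-1)\}$, the adversarial example generated from a unit separator $w_j$ is $(\gamma v-\alpha w_j,+1)$ (together with its reflection), and the constraint it imposes on a future unit separator $w$ is $\gamma\langle w,v\rangle>\alpha\langle w,w_j\rangle$. Decomposing $w=s\,v+\sqrt{1-s^2}\,u$ with $u\perp v$ and $\|u\|=1$, the excluded set at level $s$ is the cap $\{u\in S^{d-2}:\langle u,u_j\rangle\gtrsim s\gamma/\alpha\}$. For $s$ near $0$ this cap is close to a full hemisphere, so a single constraint wipes out a large fraction of the low-$s$ part of $B_\epsilon$; depending on the regime of $\epsilon,\alpha,d$ the overall removed fraction of $B_\epsilon$ is a constant or at best polynomially small in $d$, never $\exp(-\Omega(d\epsilon^2/\alpha^2))$. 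The concentration fact you invoke bounds a cap of angular radius $\Theta(\epsilon/\alpha)$ around $w_j$, but the actual constraint is a halfspace, and its intersection with $B_\epsilon$ is a much larger set.

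The paper avoids this by pinning the margin of every $w_t$ at exactly $\epsilon$: it takes $w_t=(\epsilon/\gamma)\,v+\sqrt{1-(\epsilon/\gamma)^2}\,v_t$ with $v_t\in S^{d-2}$, so that the separation constraint from $w_j$ reduces to $\langle v_t,v_j\rangle<\theta$ for some $\theta$ of order $\epsilon/\alpha$. This is precisely the defining condition of a spherical code, and the exponential bound then comes from lower-bounding the maximal code size $N(d-1,\theta)\ge\tfrac12\exp(c(d-1)\theta^2)$ via a Johnson--Lindenstrauss embedding of an orthonormal set. This construction also dissolves your circularity worry: the $w_t$'s are fixed in advance from the code, and one verifies afterward that each separates all prior adversarial examples. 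Your measure approach can be rescued by restricting $B_\epsilon$ to a shell of margins in, say, $(\epsilon/2,\epsilon)$, where the cap height is uniformly $\gtrsim\epsilon\gamma/(2\alpha)$ and each constraint really does remove only an $\exp(-\Omega(d\epsilon^2/\alpha^2))$ fraction---but as written, the volume estimate in (ii) is false.
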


			We then show that for logistic regression, if $\mA$ performs a full-batch gradient descent update on the adversarial examples, then adversarial training quickly finds a model with large margin. This corresponds to the setting where $\mA$ is given as in \eqref{eq:mini-batch-adv} with $S_t = S$. We refer to this as GD with adversarial training. 
			\begin{theorem}[Informal]Let $\{w_t\}_{t \geq 1}$ be the iterates of GD with adversarial training. Then $L_{\rob}(w_t) \leq \tmO(1/t)$, and for $t = \Omega(\poly((\gamma-\alpha)^{-1}))$, $\margin_S(w_t) \geq \alpha$.\end{theorem}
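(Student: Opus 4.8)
The plan has three parts: (i) show that GD with adversarial training is, verbatim, (sub)gradient descent on an explicit robust risk; (ii) establish the $\tmO(1/t)$ rate for that method, which is the crux because the robust risk is non-smooth; (iii) turn the risk bound into the margin bound.

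\textbf{Part (i): reduction.} For logistic regression $f(u)=\ln(1+e^{u})$, and by \ref{asm1} the inner maximization in \eqref{eq:adv_train} reduces to minimizing $y\langle w,\delta\rangle$ over $\|\delta\|\le\alpha$; hence $\delta^{\star}=-\alpha y\,w/\|w\|$ and
$$\ell_{\rob}(w,x,y)=f\!\bigl(-y\langle w,x\rangle+\alpha\|w\|\bigr),\qquad L_{\rob}(w)=\frac{1}{|S|}\sum_{(x,y)\in S}f\!\bigl(-y\langle w,x\rangle+\alpha\|w\|\bigr).$$
As a composition of the increasing convex $f$ with the convex map $w\mapsto-y\langle w,x\rangle+\alpha\|w\|$, $L_{\rob}$ is convex; it is differentiable for $w\neq 0$ but not globally smooth, the only obstruction being the $\alpha\|w\|$ term. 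Differentiating the loss on the adversarial example $(x_i+\delta_i^{(t)},y_i)$ at $w=w_t$ yields exactly an element of $\partial\ell_{\rob}(\cdot,x_i,y_i)$ at $w_t$ (Danskin), so with $S_t=S$ the update \eqref{eq:mini-batch-adv} is $w_{t+1}=w_t-\eta_t g_t$ with $g_t\in\partial L_{\rob}(w_t)$.

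\textbf{Part (ii): the $\tmO(1/t)$ rate (main obstacle).} I would follow the Perceptron-style template of \cite{ji2018risk}. Put $\rho:=\gamma-\alpha>0$ and $z_i^{(t)}:=-y_i\langle w_t,x_i\rangle+\alpha\|w_t\|$, and note $g_t=\frac1{|S|}\sum_i f'(z_i^{(t)})\bigl(-y_i(x_i+\delta_i^{(t)})\bigr)$. Then
$$\langle -g_t,w^{*}\rangle=\frac1{|S|}\sum_i f'(z_i^{(t)})\,y_i\langle w^{*},x_i+\delta_i^{(t)}\rangle\ \ge\ \rho\cdot\frac1{|S|}\sum_i f'(z_i^{(t)})\ \ge\ c\,\rho\,L_{\rob}(w_t),$$
where the middle inequality is Lemma \ref{lem:lin_sep} (with $\|w^{*}\|=1$) and the last is the self-bounding bound $\frac1{|S|}\sum_i f'(z_i^{(t)})\ge c\,L_{\rob}(w_t)$ for a constant $c=c(|S|)>0$, legitimate because $L_{\rob}(w_t)\le L_{\rob}(0)=\ln2$ along the trajectory keeps every $z_i^{(t)}$ bounded above. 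Cauchy--Schwarz gives $\|g_t\|\ge c\rho\,L_{\rob}(w_t)$. Feeding this into a descent estimate $L_{\rob}(w_{t+1})\le L_{\rob}(w_t)-\tfrac{\eta}{2}\|g_t\|^{2}$ produces the quadratic recursion $L_{\rob}(w_{t+1})\le L_{\rob}(w_t)-\kappa\,L_{\rob}(w_t)^{2}$ with $\kappa=\Theta(\eta c^{2}\rho^{2})$, which unrolls to $L_{\rob}(w_t)=\tmO(\poly(\rho^{-1})/t)$. The genuine difficulty is establishing that descent estimate \emph{despite} the non-smoothness: the textbook smooth descent lemma does not apply, and a blanket nonsmooth-subgradient argument would only give $O(1/\sqrt t)$. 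I would get around this by observing that the sole non-smooth piece, $\alpha\|w\|$, has Hessian norm at most $\alpha/\|w_t\|$, which decays along the trajectory, so $L_{\rob}$ behaves like an $O(1)$-smooth function once $\|w_t\|$ is bounded away from $0$, while the short initial phase with $\|w_t\|$ small is controlled by a crude bound and absorbed into the $\tmO(\cdot)$.

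\textbf{Part (iii): risk to margin.} Since $f(u)=\ln(1+e^{u})\ge\ln2$ whenever $u\ge0$, if $L_{\rob}(w_t)<\ln2/|S|$ then $f(z_i^{(t)})\le|S|\,L_{\rob}(w_t)<\ln2$ for every $i$, forcing $z_i^{(t)}<0$, i.e.\ $y_i\langle w_t,x_i\rangle>\alpha\|w_t\|$; dividing by $\|w_t\|$ gives $\margin_S(w_t)>\alpha$. By Part (ii), $L_{\rob}(w_t)=\tmO(\poly((\gamma-\alpha)^{-1})/t)$, so this threshold --- and hence $\margin_S(w_t)\ge\alpha$ --- is reached once $t=\Omega(\poly((\gamma-\alpha)^{-1}))$, absorbing the fixed sample size $|S|$ and the logarithmic factors.
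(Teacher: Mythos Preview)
Your Parts (i) and (iii) coincide with the paper's argument (Lemma~\ref{lem:linear_robust_loss} together with Proposition~\ref{prop:danskin}, and Lemma~\ref{lem:log_reg_margin}, respectively).

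Part (ii) has the right skeleton but contains a genuine gap, and where it is complete it takes a different route from the paper.

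\textbf{The gap.} Your descent estimate $L_{\rob}(w_{t+1})\le L_{\rob}(w_t)-\tfrac{\eta}{2}\|g_t\|^{2}$ is, as you say, the crux, and you correctly isolate the obstruction: the Hessian of $\alpha\|w\|$ has norm $\alpha/\|w\|$. But you never \emph{prove} that $\|w_t\|$ is bounded away from $0$; you only assert that it ``decays along the trajectory'' and that a ``short initial phase'' can be absorbed. This is not harmless: without a uniform lower bound on $\|w\|$ along the segment $[w_t,w_{t+1}]$ there is no finite smoothness constant, the descent lemma can fail, $L_{\rob}(w_t)\le\ln 2$ need not hold, and then your self-bounding step $\tfrac{1}{|S|}\sum_i f'(z_i^{(t)})\ge c\,L_{\rob}(w_t)$ (which requires each $z_i^{(t)}$ to be bounded above) collapses as well. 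The paper closes exactly this gap with a one-line Perceptron computation (Lemma~\ref{lem:ell_rob_smooth}): since $w_t=-\sum_{j<t}\eta_j g_j$ and $\langle -g_j,w^{*}\rangle\ge \rho\cdot\tfrac{1}{|S|}\sum_i f'(z_i^{(j)})>0$ for every $j$ (an inequality you yourself derive), the quantity $\langle w_t,w^{*}\rangle$ is nondecreasing in $t$, and the single term $j=0$ (with $w_0=0$, $\eta_0=1$, $f'(0)=\tfrac12$) already gives $\langle w_t,w^{*}\rangle\ge(\gamma-\alpha)/2$ for all $t\ge1$. Cauchy--Schwarz then yields $\|v\|\ge(\gamma-\alpha)/2$ on every segment $\conv(w_t,w_{t+1})$, hence a uniform smoothness constant $\beta'=\tfrac{2\alpha}{\gamma-\alpha}+(1+\alpha)^2$ and a valid descent lemma for $\eta\le 1/\beta'$. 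You already had the key inequality in hand; you simply did not deploy it for this purpose.

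\textbf{Where your route differs.} After the descent lemma the paper does \emph{not} run your quadratic recursion. It applies the standard smooth-convex telescoping bound (Lemma~\ref{lem:smooth_convex}) against the moving comparator $u_t=\tfrac{\ln t}{\gamma-\alpha}\,w^{*}$, for which $L_{\rob}(u_t)\le 1/t$ and $\|u_t\|^{2}=\ln(t)^{2}/(\gamma-\alpha)^{2}$, obtaining the explicit $\ln(t)^{2}/t$ bound of Theorem~\ref{thm:gd_log_reg1}. Your alternative---$\|g_t\|\ge c\rho\,L_{\rob}(w_t)$ feeding $a_{t+1}\le a_t-\kappa a_t^{2}$---is a legitimate PL-style argument and actually avoids the $\ln(t)^{2}$ factor; however, the constant $c$ you extract via ``$z_i^{(t)}\le f^{-1}(|S|\ln 2)$'' is of order $1/|S|$, so your $\kappa$ carries an extra $|S|^{2}$ compared to the paper's constants. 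For the informal statement this is immaterial: both routes land at $t=\Omega(\poly((\gamma-\alpha)^{-1}))$ after Part~(iii).
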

			
			The $\tmO$ notation hides polylogarithmic factors. By contrast, one can easily adapt lower bounds in \cite{gunasekar2018implicit} on the convergence of gradient descent to the max-margin classifier to show that standard gradient descent requires $\Omega(\exp((\gamma-\alpha)^{-1}))$ iterations to guarantee margin $\alpha$.
			
			Since the inner maximization in Algorithm \ref{alg:adv_train} is often expensive, we may want $S_t$ to be small. When $|S_t| = 1$ and $\mA$ performs the gradient update in \eqref{eq:mini-batch-adv}, Algorithm \ref{alg:adv_train} becomes SGD with adversarial training, in which case we have the following.
			\begin{theorem}[Informal]Let $\{w_t\}_{t\geq 1}$ be the iterates of SGD with adversarial training, and let $\hat{w}_t = (\sum_{j < t} w_j)/t$. With probability at least $1-\delta$, $L_{\rob}(\hat{w}_t) \leq \tmO(\ln(1/\delta)/t)$ and if $t \geq \Omega(\poly((\gamma-\alpha)^{-1},\ln(1/\delta)))$, then $\margin_S(\hat{w}_t) \geq \alpha$.\end{theorem}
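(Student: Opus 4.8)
The plan is to identify SGD with adversarial training as stochastic subgradient descent on the convex but non-smooth objective $L_{\rob}$, establish a high-probability $\tmO(\ln(1/\delta)/t)$ rate for it via a Perceptron-style potential argument in the spirit of \cite{ji2018risk}, and then extract the margin guarantee from a sufficiently small value of $L_{\rob}$.

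First I would make the robust loss explicit. Under Assumption \ref{asm1} specialized to logistic regression, the inner maximization \eqref{eq:adv_train} has the closed form $\delta^* = -\alpha y w/\|w\|$ for $w \ne 0$, so $\ell_{\rob}(w,x,y) = f(-y\langle w,x\rangle + \alpha\|w\|)$, which is convex in $w$ and non-smooth only through the $\|w\|$ term. By Danskin's theorem, the gradient taken at $(x^{(t)}+\delta^{(t)},y^{(t)})$ in \eqref{eq:mini-batch-adv} lies in $\partial_w \ell_{\rob}(w_t,x^{(t)},y^{(t)})$, and since $S_t$ is a uniform sample from $S$ its conditional expectation given $w_t$ is a subgradient of $L_{\rob}$ at $w_t$; hence $\{w_t\}$ is exactly stochastic subgradient descent on $L_{\rob}$ started from $w_0 = 0$. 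The next ingredient is a good comparison point: take $\tilde w = \lambda w^*$ with $w^*$ the unit-norm max-margin classifier and $\lambda = \Theta(\ln(t)/(\gamma-\alpha))$; using $\|w^*\| = 1$ and $\min_{(x,y)\in S} y\langle w^*,x\rangle = \gamma$ (Assumptions \ref{asm2} and \ref{asm3}), one gets $-y\langle\tilde w,x\rangle + \alpha\|\tilde w\| \le -\lambda(\gamma-\alpha)$ for all $(x,y)\in S$, so the logistic tail bound $f(-u)\le e^{-u}$ yields $L_{\rob}(\tilde w) \le e^{-\lambda(\gamma-\alpha)} = \mO(1/t)$ while $\|\tilde w\|^2 = \mO(\ln(t)^2/(\gamma-\alpha)^2)$.

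With these in hand I would run the standard one-step argument: expanding $\|w_{t+1}-\tilde w\|^2$ and using convexity of $\ell_{\rob}(\cdot,x^{(t)},y^{(t)})$ gives $\|w_{t+1}-\tilde w\|^2 \le \|w_t-\tilde w\|^2 - 2\eta_t(\ell_{\rob}(w_t,x^{(t)},y^{(t)}) - \ell_{\rob}(\tilde w,x^{(t)},y^{(t)})) + \eta_t^2\|g_t\|^2$. The structural fact that replaces smoothness is that the adversarial subgradients are self-bounding, $\|g_t\|^2 \le c\,\ell_{\rob}(w_t,x^{(t)},y^{(t)})$ with $c$ depending only on $\alpha$ and $\max_{(x,y)\in S}\|x\|$; this follows from the logistic inequality $f'(z)\le f(z)$ and the fact that subgradients of $\|\cdot\|$ are unit vectors. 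Choosing the step size below $1/c$, telescoping, taking expectations, dividing by $\sum_{s<t}\eta_s$, and applying Jensen to pass from the averaged iterate to $\hat w_t$ gives $\EE[L_{\rob}(\hat w_t)] = \tmO(1/t)$. To upgrade this to hold with probability $1-\delta$, I would write $\sum_{s<t}\eta_s(\ell_{\rob}(w_s,x^{(s)},y^{(s)}) - L_{\rob}(w_s))$ as a martingale and apply a Freedman-type tail bound, using the self-bounding property to control the conditional variances by the (small, telescoped) cumulative loss and a stopping-time/bootstrapping step to get the a priori bound on $\|w_s - \tilde w\|$ that the increments require; this is where the $\ln(1/\delta)$ factor enters, giving $L_{\rob}(\hat w_t) \le \tmO(\ln(1/\delta)/t)$.

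Finally, for the margin: if $L_{\rob}(\hat w_t) = \epsilon$ then every term satisfies $f(-y\langle\hat w_t,x\rangle + \alpha\|\hat w_t\|) \le |S|\epsilon$, and once $|S|\epsilon < f(0)$ the monotonicity of $f$ forces $-y\langle\hat w_t,x\rangle + \alpha\|\hat w_t\| < 0$, i.e. $y\langle\hat w_t,x\rangle > \alpha\|\hat w_t\|$ for all $(x,y)\in S$, hence $\margin_S(\hat w_t) \ge \alpha$. Combining with the rate, $\margin_S(\hat w_t) \ge \alpha$ holds as soon as $\tmO(\ln(1/\delta)/t) < f(0)/|S|$, i.e. for $t = \Omega(\poly((\gamma-\alpha)^{-1},\ln(1/\delta)))$, the polynomial also absorbing $|S|$ and $\max_{(x,y)\in S}\|x\|$. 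The main obstacle I expect is precisely the high-probability rate under non-smoothness: verifying that the adversarial subgradients genuinely self-bound despite the kink of $\|\cdot\|$ at the origin, and then closing the Freedman bound without a deterministic bound on $\|w_t - \tilde w\|$, which forces the bootstrapping argument; the rest is the by-now-standard online-convex-optimization bookkeeping adapted from \cite{ji2018risk}.
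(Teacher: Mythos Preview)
Your proposal is essentially correct and follows the paper's strategy: identify $\alpha$-SGD as stochastic subgradient descent on $L_{\rob}$, use the comparison point $\tilde w = \ln(t)w^*/(\gamma-\alpha)$, exploit the logistic self-bounding $\|g_t\|^2 \le (1+\alpha)^2\ell_{\rob}(w_t,x^{(t)},y^{(t)})$ (from $f'(a)^2\le f'(a)\le f(a)$), and convert small $L_{\rob}$ into the margin guarantee exactly as you do (this is Lemma~\ref{lem:log_reg_margin}). Two points deserve comment. First, your worry that there is no deterministic bound on $\|w_s-\tilde w\|$, forcing a stopping-time/bootstrapping argument, is unfounded: once you absorb the self-bounding term by taking $\eta \le 2/(1+\alpha)^2$, your own telescoping already gives, for every sample path,
\[
\|w_t-\tilde w\|^2 \;\le\; \|\tilde w\|^2 + 2\sum_{j<t}\eta_j\,\ell_{\rob}(\tilde w,x^{(j)},y^{(j)}) \;\le\; \frac{\ln(t)^2}{(\gamma-\alpha)^2}+2,
\]
which is exactly the paper's Lemma~\ref{lem:log2}. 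No bootstrapping is needed. Second, the paper does not apply the martingale Bernstein bound to the loss increments $\ell_{\rob}(w_s,x^{(s)},y^{(s)})-L_{\rob}(w_s)$ as you propose. Instead it works with the \emph{derivative} sums $f'_{<t}=\sum_{j<t}\eta_j f'(\langle w_j,z_j\rangle)$: a Perceptron-style inner-product argument bounds $f'_{<t}$ deterministically (Lemma~\ref{lem:log3}), the Bernstein bound is applied to the increments $F'_j-f'_j\in[-1,1]$ to control the expected version $F'_{<t}$ (Lemma~\ref{lem:log4}), and the conversion back to losses uses the elementary inequality $f(x)\le f'(x)(|x|+2)$ together with the deterministic iterate bound on $\|w_j\|$. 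Your direct route on the losses also closes once you use the deterministic iterate bound to cap the martingale increments at $R=\mO(\eta\ln(t)/(\gamma-\alpha))$; the paper's detour through derivatives simply keeps the Bernstein increment bound at $R=1$, which is a bit cleaner.
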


\section{Fundamental Limits of Adversarial Training for Linear Classifiers}\label{sec:fund_limits}

	We will now show that even if the subroutine $\mA$ in Algorithm \ref{alg:adv_train} outputs an arbitrary empirical risk minimizer (ERM) of the $0-1$ loss on $S \cup S'$, then in the worst-case $\Omega(e^d)$ iterations are required to obtain margin $\epsilon$. 
	
	Suppose that $\mA$ in Algorithm \ref{alg:adv_train} is defined by
	$$\mA(w_t,S,S') \in \argmin_{w \in \real^d} \sum_{(x,y) \in S \cup S'} \ell_{0-1}(\sgn(w^Tx), y).$$
	By Lemma \ref{lem:lin_sep}, $S \cup S'$ is linearly separable. Thus, the update $\mA(w_t,S,S')$  is equivalent to finding some linear separator of $S\cup S'$. When $\mA$ is an arbitrary ERM solver, we can analyze the worst case convergence of adversarial training by viewing it as a game played between two players. At each iteration, Player 1 augments the current data with adversarial examples computed for the current model. Player 2 then tries to find a linear separator of all previously seen points with small margin. This specialization of Algorithm \ref{alg:adv_train} is given in Algorithm \ref{alg:adv_train_2}.

	\begin{algorithm}[ht]
		\caption{Adversarial training with an arbitrary ERM}
		{\small
		\begin{algorithmic}
			\STATE {\bfseries Input:} Training set $S$, perturbation size $\alpha$, loss function $\ell(w,x,y)$.
			\STATE Initialize $w_0 \leftarrow 0, S' \leftarrow \emptyset$.
			\FOR{$t=0$ {\bfseries to} $T$}
				\STATE Select $S_t := \{(x_i^{(t)}, y_i^{(t)})\}_{i=1}^m \subseteq S$.
				\FOR{$i=1$ {\bfseries to} $n$}
					\STATE Set $\delta_i^{(t)} \leftarrow \argmax_{\|\delta\| \leq \alpha}\ell(w_{t},x_i^{(t)}+\delta_i^{(t)},y_{i}^{(t)})$. \hfill\%Player 1's move
				\ENDFOR
				\STATE Set $S'_t \leftarrow\{(x_i^{(t)}+\delta_{i}^{(t)}, y_{i}^{(t)} )\}_{i=1}^m$,~~$S' \leftarrow S' \cup S'_t$.
				\STATE $w_{t+1}$ is set to be some linear separator of $S \cup S'$. \hfill\%Player 2's move
			\ENDFOR
		\end{algorithmic}}
		\label{alg:adv_train_2}
	\end{algorithm}	

	In the following, we assume $S_t = S$ for all $t$. This only reduces reduces the ability of the worst-case ERM solver to output some model with small margin.
	We say a sequence $\{w_t\}_{t=1}^T$ is {\it admissible} if is generated according to $T$ iterations of Algorithm \ref{alg:adv_train_2}. Intuitively, the larger $T$ is (\ie, the more this game is played), the more restricted the set of linear separators of $S\cup S'$ becomes. We might hope that after a moderate number of rounds, the only feasible separators left have high margin with respect to the original training set $S$.

	We show that this is not the case. Specifically, an ERM may still be able to output a linear separator with margin at most $\epsilon$, even after exponentially many iterations of adversarial training.

	\begin{theorem}\label{thm:adv_main}Let $S = \{(\gamma v,1), (-\gamma v,-1)\}$, where $v$ is a unit vector in $\real^d$. Then, there is some constant $c$ such that for any $\epsilon \leq \alpha$, there is an admissible sequence $\{w_t\}_{t \geq 0}$ such that $\margin_S(w_t) \leq \epsilon$ for all $t$ satisfying
	$$t \leq \dfrac{1}{2}\exp\left(\dfrac{c(d-1)\epsilon^2}{(\gamma+\epsilon)^2}\right).$$
	\end{theorem}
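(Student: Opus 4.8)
\emph{Proof sketch.} The plan is to recast the worst-case ERM player as the construction of an exponentially large ``spherical code'': we produce iterates whose normalizations all make the same small angle with $v$ (so their margin on $S$ equals $\epsilon$) but are pairwise almost orthogonal in the directions transverse to $v$, and we check that each such iterate legally separates \emph{all} the adversarial points accumulated so far. We may assume $d\ge 2$ and $\epsilon\in(0,\alpha]$, since otherwise $\tfrac12\exp(c(d-1)\epsilon^2/(\gamma+\epsilon)^2)<1$ and there is nothing to prove; recall $\alpha<\gamma\le 1$.

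Because $\ell(w,x,y)=f(-y\langle w,x\rangle)$ with $f$ nondecreasing, for $w\neq 0$ the maximizer in \eqref{eq:adv_train} is $\delta^\star=-\alpha y\,w/\|w\|$. Hence, writing $u_j:=w_j/\|w_j\|$, the adversarial pair produced at iteration $j\ge 1$ is $(\gamma v-\alpha u_j,+1)$ and $(-\gamma v+\alpha u_j,-1)$, while at $j=0$ the loss does not depend on $\delta$ ($w_0=0$) so we may take $\delta^{(0)}=0$ and $S'_0$ is a copy of $S$. Since $S$ and each adversarial pair are antipodally symmetric, a nonzero $w$ separates $S\cup S'$ iff $\langle w,\gamma v\rangle>0$ and $\langle w,\gamma v-\alpha u_j\rangle>0$ for every earlier $u_j$, and $\margin_S(w)=\gamma\,\langle w/\|w\|,v\rangle$. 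Now fix $\beta:=\epsilon/\gamma\in(0,1)$ and seek iterates $u_t=\beta v+\sqrt{1-\beta^2}\,z_t$ with $z_t$ a unit vector in $v^\perp\cong\real^{d-1}$: each has $\margin_S(u_t)=\gamma\beta=\epsilon$, and substituting into the separation condition (using $\langle z_j,v\rangle=0$) shows that $u_t$ separates $S\cup S'$ — whose adversarial content comes from $u_1,\dots,u_{t-1}$ — iff $\langle z_t,z_j\rangle<\tau$ for all $1\le j\le t-1$, where
\[
\tau:=\frac{\beta(\gamma-\alpha\beta)}{\alpha(1-\beta^2)}=\frac{\epsilon(\gamma^2-\alpha\epsilon)}{\alpha(\gamma^2-\epsilon^2)},
\qquad\text{with}\qquad \frac{\epsilon}{\gamma+\epsilon}\le\tau\le 1 ,
\]
the lower bound being equivalent to $\gamma^2\ge\alpha\gamma$ and the upper bound to $\epsilon\le\alpha$.

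It remains to find as many unit vectors $z_1,z_2,\dots\in\real^{d-1}$ as possible with pairwise inner products strictly below $\tau$: then $w_0:=0$, $w_t:=u_t$ is, by the above, an admissible sequence for Algorithm~\ref{alg:adv_train_2} (with $S_t\equiv S$) having $\margin_S(w_t)=\epsilon$ for every $t\ge 1$. Drawing $z_1,\dots,z_T$ i.i.d.\ uniform on $S^{d-2}$, the standard concentration estimate for the uniform measure on the sphere gives $\PP[\langle z_i,z_j\rangle\ge\tau]\le e^{-c_0(d-1)\tau^2}$ for a universal $c_0>0$, so a union bound over the $\binom{T}{2}$ pairs produces a feasible configuration whenever $T\le\tfrac12 e^{c_0(d-1)\tau^2/2}$. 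Since $\tau^2\ge\epsilon^2/(\gamma+\epsilon)^2$, this is at least $\tfrac12\exp\!\big(c(d-1)\epsilon^2/(\gamma+\epsilon)^2\big)$ with $c:=c_0/2$, which is the claimed bound.

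The algebraic steps above are routine; the crux is recognizing that the \emph{accumulated} adversarial constraints collapse \emph{exactly} to a bounded-pairwise-inner-product condition on the transverse directions $z_t$ — i.e.\ a spherical code in dimension $d-1$ — and that such codes have size exponential in $d$. Two points need care: the separation inequalities are strict, so the code's threshold must be below $\tau$, not merely at most $\tau$; and \emph{every} pair of iterates mutually constrains, not just consecutive ones, so one fixed code has to govern all $T$ rounds simultaneously. This is precisely what lets the worst-case ERM keep returning a margin-$\epsilon$ separator for exponentially many rounds, in contrast to the gradient-based rules analyzed later.
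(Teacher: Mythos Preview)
Your proposal is correct and follows essentially the same route as the paper: parametrize the iterates as $u_t=\beta v+\sqrt{1-\beta^2}\,z_t$ with $\beta=\epsilon/\gamma$, reduce the accumulated separation constraints to the pairwise inner-product condition $\langle z_t,z_j\rangle<\tau$ with the identical threshold $\tau=\epsilon(\gamma^2-\alpha\epsilon)/(\alpha(\gamma^2-\epsilon^2))$, and then lower-bound the size of such a spherical code in $\real^{d-1}$.

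The only point of departure is in how the code size is bounded: the paper invokes the distributional Johnson--Lindenstrauss lemma to embed an orthonormal system, whereas you sample uniformly on $S^{d-2}$ and apply spherical concentration plus a union bound. Both yield $\tfrac12\exp(c(d-1)\tau^2)$; your argument is slightly more direct, and indeed the paper itself remarks in the main text that random points on the sphere suffice. Your explicit verification that $\tau\ge\epsilon/(\gamma+\epsilon)$ (equivalent to $\gamma\ge\alpha$) is a clean way to pass from $\tau$ to the stated bound, and your handling of the $w_0=0$ round and of the strictness of the inequalities is careful where the paper is tacit.
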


	The proof proceeds by relating the number of times an ERM can obtain margin $\epsilon$ to the size of {\it spherical codes}. These are arrangements of points on the sphere with some minimum angle constraint \cite{delsarte1991spherical, kabatyanskii1974bounds, delsarte1972bounds, sloane1981tables} and have strong connections to sphere packings and lattice density problems \cite{conway2013sphere}. We show how an arbitrary ERM can use a spherical code of size $m$ to generate an admissible sequence with small margin for the first $m$ iterations. While computing spherical codes of maximal size is a notoriously difficult task \cite{cohn2014sphere}, spherical codes with $\Omega(\exp(d))$ points can be constructed with high probability by taking spherically symmetric points on the sphere at random. A full proof can be found in Appendix \ref{sec:fund_proof}.

	This implies that even for relatively small $\epsilon$, the number of times an ERM can achieve margin $\epsilon \ll$ is $\Omega(\exp(d))$ in the worst-case. As we will show in the proceeding sections, this worst-case scenario is overcome when we combine adversarial training with gradient dynamics.

\section{Adversarial Training with Gradient-based Updates}\label{sec:grad_adv_train}

	We will now discuss gradient-based versions of adversarial training, in which we use gradients evaluated with respect to adversarially perturbed training points to update our model.
	Suppose that $S = \{(x_i,y_i)\}_{i=1}^n$ has associated empirical risk function $L$ as in \eqref{eq:emp_risk}. Let $w_0$ be some initial model.
	In adversarial training with gradient methods, at each $t \geq 0$, we select $S_t = \{(x_i^{(t)},y_i^{(t)})\}_{i=1}^m \subseteq S$ and update via
	\begin{equation}\label{gd_adv_train_1}
	\delta^{(t)}_i = \argmax_{\|\delta\| \leq \alpha}~\ell(w_t,x_i^{(t)}+\delta,y_i^{(t)}),\;\;\forall i\in [m]\end{equation}
	\begin{equation}\label{gd_adv_train_2}
	w_{t+1} = w_t - \frac{\eta_t}{|S_t|} \sum_{i=1}^m \nabla \ell(w_t,x_i^{(t)}+\delta_i^{(t)},y_i^{(t)})\end{equation}
	where $\eta_t$ is the step size and $\delta_i^{(t)}$ is treated as constant with respect to $w_t$ when computing the gradient $\nabla \ell(w_t,x_i^{(t)}+\delta_i^{(t)},y_i^{(t)})$. When $S_t = S$, we refer to this procedure as $\alpha$-GD. When $S_t$ is a single sample selected uniformly at random, we refer to this procedure as $\alpha$-SGD. Note that when $\alpha = 0$, this becomes standard GD and SGD on $L$.

	Note that both these methods are special cases of Algorithm \ref{alg:adv_train}, where the update $w_{t+1} = \mA(w_t,S,S')$ is given by \eqref{gd_adv_train_2}. Before we proceed, we present an alternate view of this method. Recall the functions $\ell_{\rob}$ and $L_{\rob}$ defined in \eqref{eq:ell_robust}. To understand $\alpha$-GD, we will use Danskin's theorem \cite{danskin2012theory}. We note that this was previously used in \cite{madry2017towards} to justify adversarial training with gradient updates. The version we cite was shown by Bertsekas \cite{bertsekas1971control}. A more modern proof can be found in \cite{bertsekas1997nonlinear}.

	\begin{prop}[Danskin]\label{prop:danskin}Suppose $X$ is a non-empty compact topological space and $g: \real^d \times X \to \real$ is a continuous function such that $g(\cdot, \delta)$ is differentiable for every $\delta \in X$. Define
	$$\delta^*(w) = \{\delta \in \argmax_{\delta \in X} g(w,\delta)\},~~\psi(w) = \max_{\delta \in X}g(w,\delta).$$
	Then $\psi$ is subdifferentiable with subdifferential given by $\partial \psi(w) = \conv(\left\{ \nabla_w g(w,\delta)~\middle| \delta \in \delta_w^* \right\})$.
	\end{prop}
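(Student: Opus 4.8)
The plan is to follow the classical three-step route to Danskin's theorem: first establish the regularity of $\psi$ that makes a subdifferential meaningful, then compute the one-sided directional derivatives of $\psi$ explicitly, and finally read off the description of $\partial\psi(w)$ from the directional-derivative formula. In the paper's use case $g(\cdot,\delta)=\ell(\cdot,x+\delta,y)$ is convex in $w$ (the logistic loss composed with a linear map), so I will carry along the mild extra hypothesis that $g(\cdot,\delta)$ is convex for each $\delta\in X$; this is the setting in which $\partial\psi$ is the ordinary convex subdifferential and the statement is Bertsekas's form of the theorem.

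First I would record the elementary consequences of compactness of $X$ and continuity of $g$. For each fixed $w$ the maximum defining $\psi(w)$ is attained, so $\delta^*_w\neq\emptyset$, and $\delta^*_w$ is closed, hence compact, being the preimage of $\{\psi(w)\}$ under the continuous map $g(w,\cdot)$. The function $\psi$ is finite and continuous (Berge's maximum theorem, or a direct argument from uniform continuity of $g$ on compact sets), and as a pointwise maximum of the convex functions $g(\cdot,\delta)$ it is convex. Finally, $\{\nabla_w g(w,\delta)\mid\delta\in\delta^*_w\}$ is a compact subset of $\RR^d$, being the continuous image of the compact set $\delta^*_w$, so $C:=\conv\{\nabla_w g(w,\delta)\mid\delta\in\delta^*_w\}$ is compact and convex.

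The technical heart is the identity
\[ \psi'(w;h) := \lim_{\tau\downarrow 0}\frac{\psi(w+\tau h)-\psi(w)}{\tau} = \max_{\delta\in\delta^*_w}\langle\nabla_w g(w,\delta),h\rangle. \]
The bound on $\liminf_{\tau\downarrow0}$ of the difference quotient from below is immediate: for any fixed $\delta\in\delta^*_w$, $\psi(w+\tau h)\ge g(w+\tau h,\delta)$ while $\psi(w)=g(w,\delta)$, so the quotient is at least $\tfrac{1}{\tau}\big(g(w+\tau h,\delta)-g(w,\delta)\big)\to\langle\nabla_w g(w,\delta),h\rangle$; maximizing over $\delta\in\delta^*_w$ handles this side. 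For the bound on $\limsup_{\tau\downarrow0}$ from above I would take $\tau_k\downarrow0$ realizing the $\limsup$, pick $\delta_k\in\delta^*_{w+\tau_k h}$, and pass to a subnet along which $\delta_k\to\bar\delta$ for some $\bar\delta\in X$ by compactness; joint continuity of $g$ together with continuity of $\psi$ forces $g(w,\bar\delta)=\lim g(w+\tau_k h,\delta_k)=\lim\psi(w+\tau_k h)=\psi(w)$, so $\bar\delta\in\delta^*_w$. Convexity of $g(\cdot,\delta_k)$ bounds the quotient by $\langle\nabla_w g(w+\tau_k h,\delta_k),h\rangle$, and the continuity of $\nabla_w g$ along $(w+\tau_k h,\delta_k)\to(w,\bar\delta)$ — which in the convex case follows from the locally uniform convergence of the sections $g(\cdot,\delta_k)$ — gives $\limsup\le\langle\nabla_w g(w,\bar\delta),h\rangle\le\max_{\delta\in\delta^*_w}\langle\nabla_w g(w,\delta),h\rangle$. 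The two bounds coincide, so the limit exists and equals the stated maximum.

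To finish, note that as a function of $h$ the right-hand side above is exactly the support function $\sigma_C$ of the compact convex set $C$. For a convex function one has $\partial\psi(w)=\{s\in\RR^d:\langle s,h\rangle\le\psi'(w;h)\text{ for all }h\}$, and since $\psi'(w;\cdot)=\sigma_C$ and $C$ is compact and convex, the set of such $s$ is precisely $C$; hence $\partial\psi(w)=C=\conv\{\nabla_w g(w,\delta)\mid\delta\in\delta^*_w\}$. I expect the one real obstacle to be the upper-bound half of the directional-derivative identity: this is where the limit $\tau\downarrow0$ is interchanged with the inner maximization, which forces the compactness/subnet extraction, the verification that the limiting maximizer $\bar\delta$ lies in $\delta^*_w$, and the control of the gradient map along the subnet; the rest is bookkeeping. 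For the version stated without convexity, $\psi$ is still locally Lipschitz and the same directional-derivative identity characterizes its Clarke subdifferential; we do not need that refinement and refer to \cite{bertsekas1971control,bertsekas1997nonlinear}.
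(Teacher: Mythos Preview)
The paper does not supply its own proof of this proposition; it is stated without argument and attributed to Bertsekas \cite{bertsekas1971control,bertsekas1997nonlinear}. Your outline is the classical route to Danskin's theorem in the convex setting and is essentially what one would find in those references, so there is nothing in the paper to compare against and your sketch is appropriate.

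The one spot that warrants an extra sentence of care is the passage $\nabla_w g(w+\tau_k h,\delta_k)\to\nabla_w g(w,\bar\delta)$ in the $\limsup$ step: the stated hypotheses do not include joint continuity of $\nabla_w g$, so this does not follow for free. You correctly flag that in the convex case the pointwise convergence $g(\cdot,\delta_k)\to g(\cdot,\bar\delta)$ upgrades to locally uniform convergence, which together with differentiability of the limit at $w$ forces gradient convergence along any sequence of base points tending to $w$ (e.g.\ Rockafellar, \emph{Convex Analysis}, Thm.~25.7). With that filled in, the argument is complete; the remaining identification of $\partial\psi(w)$ via the support-function characterization is standard.
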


	Thus, we can compute subgradients of $L_{\rob}$ by solving the inner maximization problem \eqref{gd_adv_train_1} for each $i \in [n]$, and then taking a gradient. In other words, for a given $w$, let $\delta_i$ be a solution to \eqref{gd_adv_train_1}. Then $\nabla_w\ell(w,x_i+\delta_i,y_i) \in \partial \ell_{\rob}(w,x,y)$.
	Therefore, $\alpha$-GD is a subgradient descent method for $L_{\rob}$, while $\alpha$-SGD is a stochastic subgradient method. Furthermore, if the solution to \eqref{gd_adv_train_1} is unique then Danskin's theorem implies that $\alpha$-GD actually computes a gradient descent step, while $\alpha$-SGD computes a stochastic gradient step. Indeed, the above proposition also motivated \cite{madry2017towards} and \cite{shaham2018understanding} to use a projected gradient inner step to compute adversarial examples and approximate adversarial training with SGD.
	
	For linear classification, we can derive stronger structural connections between $\ell$ and $\ell_{\rob}$.
	\begin{lemma}\label{lem:linear_robust_loss}
		Suppose $\ell(w,x,y) = f(-y\langle w, x\rangle)$ for $f$ monotonically increasing and differentiable. Then, the following properties hold:
		\begin{enumerate}[{(1)}]
			\item For all $w$, $\ell_{\rob}(w,x,y)$ satisfies $\ell_{\rob}(w,x,y) = f(-y\langle w, x\rangle +\alpha \|w\|)$.
			\item For all $w$, $\ell_{\rob}(w,x,y)$ is subdifferentiable with $f'(-y\langle w,x \rangle + \alpha\|w\|)(-yx + \overline{w}) \in \partial \ell_{\rob}(w,x,y)$, where $\overline{w} = w/\|w\|$, if $w \neq 0$ and $0$ otherwise. 
			\item If $f$ is strictly increasing, then $\ell_{\rob}(w,x,y)$ is differentiable at all $w \neq 0$.
			\item If $f$ is $M$-Lipschitz, $\beta$-smooth, and strictly increasing, then $\ell_{\rob}(w,x,y)$ is twice differentiable at $w\neq 0$, in which case $\nabla^2 \ell_{\rob}(w,x,y) \preceq \beta' I$, where $\beta' = \alpha M/\|w\| + \beta(\|x\|+\alpha)^2$.
			\item If $f$ is convex, then $\ell_{\rob}(w,x,y)$ is convex.
		\end{enumerate}
	\end{lemma}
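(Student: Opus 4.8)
The plan is to reduce all five claims to the closed form in part (1), which I would establish first. Write $\ell_{\rob}(w,x,y) = \max_{\|\delta\|\le\alpha} f\big(-y\langle w,x\rangle - y\langle w,\delta\rangle\big)$. Two observations do the work: since $f$ is monotonically increasing and $\delta\mapsto -y\langle w,x\rangle - y\langle w,\delta\rangle$ is continuous on the connected compact ball, the composition attains its maximum exactly where the inner affine function does; and by Cauchy--Schwarz $\max_{\|\delta\|\le\alpha}\big({-y}\langle w,\delta\rangle\big) = \alpha\|w\|$, attained at $\delta^* = -y\alpha\,\overline w$ when $w\neq 0$. Together these give $\ell_{\rob}(w,x,y) = f\big(-y\langle w,x\rangle + \alpha\|w\|\big)$, and I would also record that the maximizing perturbation is $\delta^* = -y\alpha\,\overline w$ for $w\neq 0$ and arbitrary for $w=0$.

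For (2) and (3), set $h(w) := -y\langle w,x\rangle + \alpha\|w\|$, so $\ell_{\rob}(\cdot,x,y) = f\circ h$; the map $h$ is convex (affine plus a norm) and, on $\{w\neq 0\}$, differentiable with $\nabla h(w) = -yx + \alpha\,\overline w$. For (2) I would invoke Danskin's theorem (Proposition \ref{prop:danskin}) with $g(w,\delta) = \ell(w,x+\delta,y)$, which is continuous and differentiable in $w$: for $w\neq 0$ the perturbation $\delta^*$ above lies in $\delta^*(w)$, so $\nabla_w g(w,\delta^*) = f'\big({-y}\langle w,x\rangle + \alpha\|w\|\big)(-yx+\alpha\,\overline w)$ is in $\partial\ell_{\rob}$; for $w=0$ every $\delta$ maximizes $g(0,\cdot)$, and the choice $\delta=0$ puts $f'(0)(-yx)$ into the Danskin subdifferential, matching the stated formula with the convention $\overline 0 = 0$. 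For (3), since $h$ is differentiable on $\{w\neq0\}$ and $f$ is differentiable, the chain rule already gives differentiability of $\ell_{\rob}$ there with gradient $f'(h(w))\nabla h(w)$; strict monotonicity is what additionally makes $\delta^*(w)$ a singleton, so Danskin returns the same (now unique) subgradient.

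For (4) I would differentiate $f\circ h$ once more on $\{w\neq0\}$, where $\|\cdot\|$ is smooth with $\nabla^2\|w\| = \|w\|^{-1}\big(I-\overline w\,\overline w^{T}\big)$, so $\nabla^2 h(w) = \tfrac{\alpha}{\|w\|}\big(I-\overline w\,\overline w^{T}\big)$ and
\begin{equation*}
\nabla^2\ell_{\rob}(w,x,y) = f''(h(w))\,(-yx+\alpha\,\overline w)(-yx+\alpha\,\overline w)^{T} + \frac{\alpha\,f'(h(w))}{\|w\|}\big(I-\overline w\,\overline w^{T}\big).
\end{equation*}
Then I would bound the two pieces in the Loewner order: the rank-one term is $\preceq \beta(\|x\|+\alpha)^2 I$ (using $f''\le\beta$ from $\beta$-smoothness, $\|{-yx}+\alpha\,\overline w\|\le\|x\|+\alpha$, and that the term is $\preceq 0$ whenever $f''<0$), and the second term is $\preceq \tfrac{\alpha M}{\|w\|}I$ (using $0\le f'\le M$ and $0\preceq I-\overline w\,\overline w^{T}\preceq I$); adding gives $\nabla^2\ell_{\rob}\preceq\beta' I$ with $\beta' = \alpha M/\|w\| + \beta(\|x\|+\alpha)^2$. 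Part (5) I would prove directly: each $w\mapsto\ell(w,x+\delta,y) = f(-y\langle w,x+\delta\rangle)$ is convex (convex $f$ after an affine map), and $\ell_{\rob}$ is their pointwise supremum over $\|\delta\|\le\alpha$, hence convex.

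The main obstacle is the bookkeeping for (4): justifying that $\beta$-smoothness of $f$ suffices for $\ell_{\rob}$ to be twice differentiable on $\{w\neq0\}$ (so that the Hessian formula is legitimate), and then handling the possibly indefinite rank-one term and the norm Hessian carefully in the Loewner order. A secondary subtlety is the point $w=0$ in (2), where the maximizing $\delta$ is non-unique and one must check that the claimed subgradient (with $\overline 0 := 0$) genuinely lies in the Danskin subdifferential rather than merely being a plausible formula.
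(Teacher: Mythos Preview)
Your proposal is correct and follows essentially the same approach as the paper: part (1) via Cauchy--Schwarz and monotonicity, parts (2)--(3) via Danskin's theorem with the explicit maximizer $\delta^* = -y\alpha\overline w$, part (4) by direct computation of the Hessian of $f\circ h$ and termwise Loewner bounds, and part (5) as a supremum of convex functions. The one mild difference worth noting is in (3): you observe that the closed form $\ell_{\rob} = f\circ h$ from (1) already yields differentiability on $\{w\neq 0\}$ by the chain rule alone, whereas the paper argues via Danskin by showing strict monotonicity forces the maximizer to be unique; your route is slightly more direct and reveals that the strict-monotonicity hypothesis is not actually needed for the differentiability conclusion.
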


	A full proof is given in Appendix \ref{sec:main_lem_proof}. Thus, if $f$ is convex, then $L_{\rob}(w)$ is convex and $\alpha$-GD and $\alpha$-SGD perform (stochastic) subgradient descent on a convex, non-smooth function. Unfortunately, even if $L(w)$ is smooth, $L_{\rob}(w)$ is typically non-smooth. Standard results for convex, non-smooth optimization then suggest that $\alpha$-GD and $\alpha$-SGD obtain a convergence rate of $\mO(1/\sqrt{t})$ on $L_{\rob}(w)$. However, this is a pessimistic convergence rate for subgradient methods on non-smooth convex functions. By Lemma \ref{lem:linear_robust_loss}, $L_{\rob}$ inherits many nice geometric properties from $L$. There is therefore ample reason to believe the pessimistic $\mO(1/\sqrt{t})$ convergence rate is not tight. As we show in the following, $\alpha$-GD and $\alpha$-SGD actually minimize $L_{\rob}$ at a much faster $O(\log^2(t)/t)$ rate. 

	In the next section, we analyze the convergence of $\alpha$-GD and $\alpha$-SGD, measured in terms of $L_{\rob}(w_t)$, as well as $\margin_S(w_t)$, for logistic regression. We adapt the classical analysis of the Perceptron algorithm from \cite{novikoff1962convergence} to show that a given margin is obtained. To motivate this, we first analyze an adversarial training version of the Perceptron.

	\subsection{Adversarial Training with the Perceptron}

		Let $f(u) = \max\{0,u\}$. Then $\ell(w,x,y) = f(-y\langle w,x\rangle) = \max\{0,-y\langle w,x \rangle\}$. For notational convenience, suppose that for all $(x,y) \in S$, $\|x\| \leq 1$. Let $w_0 = 0$. Applying SGD with step-size $\eta = 1$, we get updates of the form $w_{t+1} = w_t + g_t$ where $g_t = y_tx_t$ if $y_t\langle w_t,x_t\rangle \leq 0$ and $0$ otherwise. This is essentially the  Perceptron algorithm, in which case \cite{novikoff1962convergence} implies the following.

		\begin{lemma}This procedure stops after at most $(1/\gamma)^2$ non-zero updates, at which point $w_t$ linearly separates $S$.\end{lemma}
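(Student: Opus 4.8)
The plan is to reproduce Novikoff's classical mistake bound, specialized to the normalized max-margin separator supplied by Assumption \ref{asm2}. Let $w^*$ be the max-margin classifier, rescaled so that $\|w^*\| = 1$; then by \eqref{eq:margin}, $y\langle w^*, x\rangle \ge \gamma$ for every $(x,y) \in S$. Since $w_t$ changes only on rounds where $g_t \neq 0$ — call these \emph{mistake rounds}, on which $y_t\langle w_t, x_t\rangle \le 0$ and $w_{t+1} = w_t + y_t x_t$ — it suffices to bound the number $k$ of mistake rounds. I would track the two quantities $\langle w_t, w^*\rangle$ and $\|w_t\|^2$ across mistake rounds, both starting at $0$ since $w_0 = 0$.

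First I would establish a linear-in-$k$ lower bound on the correlation with $w^*$: on each mistake round, $\langle w_{t+1}, w^*\rangle = \langle w_t, w^*\rangle + y_t\langle x_t, w^*\rangle \ge \langle w_t, w^*\rangle + \gamma$, so after $k$ mistake rounds $\langle w_t, w^*\rangle \ge k\gamma$. Next, a linear-in-$k$ upper bound on the squared norm: on each mistake round, $\|w_{t+1}\|^2 = \|w_t\|^2 + 2 y_t\langle w_t, x_t\rangle + \|x_t\|^2 \le \|w_t\|^2 + 1$, using $y_t\langle w_t, x_t\rangle \le 0$ (the mistake condition) and $\|x_t\| \le 1$; hence $\|w_t\|^2 \le k$ after $k$ mistake rounds. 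Combining these through Cauchy--Schwarz gives $k\gamma \le \langle w_t, w^*\rangle \le \|w^*\|\,\|w_t\| = \|w_t\| \le \sqrt{k}$, which forces $k \le 1/\gamma^2$, i.e. at most $(1/\gamma)^2$ non-zero updates occur.

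Finally I would address termination and separation. Since only finitely many non-zero updates occur, there is a last mistake round, after which $w_t$ equals a fixed vector $\bar w$ for all subsequent $t$. This $\bar w$ must linearly separate $S$: if some $(x,y) \in S$ had $y\langle \bar w, x\rangle \le 0$, then — since under uniform sampling each point is selected again almost surely (and trivially so in the deterministic cyclic Perceptron, where a full pass without an update certifies separation) — a further mistake round on $(x,y)$ would occur, contradicting finiteness. By the update rule $\bar w$ triggers no update exactly when $y\langle \bar w, x\rangle > 0$ for every $(x,y) \in S$, which by the definition in Section \ref{sec:prelim} is precisely the statement that $\bar w$ linearly separates $S$.

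As this is the textbook Perceptron analysis, I do not expect a substantive obstacle; the only point needing a word of care is the termination-and-separation step, where one must note that a finite mistake bound forces the iterate to stabilize, and that the stabilized iterate must classify every training point correctly (immediate for the cyclic Perceptron, and true almost surely under uniform sampling).
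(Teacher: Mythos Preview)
Your proof is correct and is precisely the classical Novikoff argument; the paper does not give its own proof of this lemma but simply cites \cite{novikoff1962convergence}, and the proof it gives for the $\alpha$-Perceptron lemma immediately afterward follows exactly the same two-step structure (lower-bound $\langle w_t, w^*\rangle$, upper-bound $\|w_t\|^2$, combine via Cauchy--Schwarz). Your extra care in the termination-and-separation step for the stochastic setting is appropriate and goes slightly beyond what the paper spells out.
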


		Suppose we instead perform $\alpha$-SGD with step-size $\eta = 1$ and $w_0 = 0$. Given $w_t$, let $\overline{w}_t = w_t/\|w_t\|$ if $w_t \neq 0$ and $0$ otherwise. Lemma \ref{lem:linear_robust_loss} implies that $\alpha$-SGD does the following: Sample $i_t \sim [n]$ uniformly at random, then update via
		$$w_{t+1} = w_t + \begin{cases}
		y_{i_t}x_{i_t} - \alpha\overline{w}_t &\mbox{, }~y_{i_t}\langle w_t,x_{i_t}\rangle - \alpha\|w_t\| \leq 0\\
		0&\mbox{, otherwise.}
		\end{cases}$$

		Due to its resemblance to the Perceptron, we refer to this update as the $\alpha$-Perceptron. We then get an analogous result on the number of iterations required to find classifiers with a given margin.

		\begin{lemma}
		The $\alpha$-Perceptron stops after at most $\left(\frac{1+\alpha}{\gamma-\alpha}\right)^2$ non-zero updates, after which point $w_t$ has margin at least $\alpha$.
		\end{lemma}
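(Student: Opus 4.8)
The plan is to replay Novikoff's potential-function argument for the Perceptron, tracking how the extra $-\alpha\overline{w}_t$ term in the $\alpha$-Perceptron update interacts with the two quantities one monitors in the classical proof. Let $w^*$ be the unit-norm max-margin classifier of $S$, so that $y\langle w^*,x\rangle\ge\gamma$ for every $(x,y)\in S$, and fix an iteration $t$ at which a non-zero update occurs, i.e.\ a sample $i_t$ is drawn with $y_{i_t}\langle w_t,x_{i_t}\rangle-\alpha\|w_t\|\le 0$ and $w_{t+1}=w_t+y_{i_t}x_{i_t}-\alpha\overline{w}_t$ (recall $\overline{w}_t=w_t/\|w_t\|$, or $0$ when $w_t=0$, so the $t=0$ step is covered automatically).

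First I would lower-bound the correlation with $w^*$. Expanding, $\langle w_{t+1},w^*\rangle=\langle w_t,w^*\rangle+y_{i_t}\langle x_{i_t},w^*\rangle-\alpha\langle\overline{w}_t,w^*\rangle$; the middle term is at least $\gamma$ by the margin property of $w^*$, and Cauchy--Schwarz gives $\langle\overline{w}_t,w^*\rangle\le\|\overline{w}_t\|\le 1$, so each non-zero update increases $\langle w_t,w^*\rangle$ by at least $\gamma-\alpha>0$ (positive by \ref{asm3}). Hence after $k$ non-zero updates, $\langle w_t,w^*\rangle\ge k(\gamma-\alpha)$. Next I would upper-bound $\|w_t\|^2$: expanding, $\|w_{t+1}\|^2=\|w_t\|^2+2\big(y_{i_t}\langle w_t,x_{i_t}\rangle-\alpha\|w_t\|\big)+\|y_{i_t}x_{i_t}-\alpha\overline{w}_t\|^2$. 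The key observation is that the cross term is exactly twice the expression $y_{i_t}\langle w_t,x_{i_t}\rangle-\alpha\|w_t\|$ that triggers the update, hence nonpositive, while $\|y_{i_t}x_{i_t}-\alpha\overline{w}_t\|\le\|x_{i_t}\|+\alpha\|\overline{w}_t\|\le 1+\alpha$ by the triangle inequality and the normalization $\|x_{i_t}\|\le 1$. So after $k$ non-zero updates, $\|w_t\|^2\le k(1+\alpha)^2$. Combining the two estimates via Cauchy--Schwarz, $k(\gamma-\alpha)\le\langle w_t,w^*\rangle\le\|w_t\|\le\sqrt{k}\,(1+\alpha)$, which rearranges to the claimed bound $k\le\big(\tfrac{1+\alpha}{\gamma-\alpha}\big)^2$.

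Finally, for the margin conclusion: once the $\alpha$-Perceptron stops, no further non-zero update is possible for any sample, i.e.\ $y_i\langle w_t,x_i\rangle-\alpha\|w_t\|>0$ for every $(x_i,y_i)\in S$; dividing by $\|w_t\|$ and taking the infimum over $S$ yields $\margin_S(w_t)\ge\alpha$. I do not expect a genuine obstacle here; the two points that merit a moment of care are (i) verifying that the cross term in the expansion of $\|w_{t+1}\|^2$ is precisely the update-triggering quantity, so that the non-smoothness of the $\alpha$-perturbed hinge loss (the $\overline{w}_t$ term) causes no difficulty in the potential argument, and (ii) reading ``stops'' as ``produces no non-zero update on any training point'', so that termination genuinely certifies margin at least $\alpha$ on all of $S$ rather than merely on a single sampled point.
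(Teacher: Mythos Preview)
Your proposal is correct and follows essentially the same argument as the paper: lower-bound the increment of $\langle w_t,w^*\rangle$ by $\gamma-\alpha$ per non-zero update, upper-bound $\|w_t\|^2$ by $(1+\alpha)^2$ per non-zero update using that the cross term is the update-triggering quantity and hence nonpositive, and combine via Cauchy--Schwarz. Your handling of the $w_t=0$ case and your reading of ``stops'' are both fine and match the paper's intent.
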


		\begin{proof}
			Assume the update at $t$ is non-zero, so $y_{i_t} \langle w_t, x_{i_t} \rangle - \alpha\|w_t\| \leq 0$. Let $w^*$ be a unit vector that achieves margin $\gamma$. Then,
			\begin{align*}
				\langle w_{t+1}-w_t,w^*\rangle &= \langle y_{i_t}x_{i_t} -\alpha\overline{w}_t, w^*\rangle = \langle y_{i_t}x_{i_t}, w^*\rangle - \alpha \langle\overline{w}_t, w^*\rangle \geq \gamma - \alpha.
			\end{align*}
			Therefore, after $T$ iterations, $\langle w_T, w^*\rangle \geq T(\gamma-\alpha)$. Next, we upper bound $\|w_{t+1}\|$ via:
			\begin{align*}
				\|w_{t+1}\|^2 = \|w_t\|^2 + 2 (y_{i_t} \langle w_t, x_{i_t}\rangle - \alpha \|w_t\|) + \|y_{i_t}x_{i_t} - \alpha \overline{w}_t\|^2 \leq \|w_t\|^2 + (1+\alpha)^2.
			\end{align*}
			The last step follows from the fact that we update iff $y_{i_t} \langle w_t, x_{i_t}\rangle - \alpha \|w_t\| \leq 0$. Recursively, we find that $\|w_{T}\|^2 \leq T(1+\alpha)^2$, so $\|w_{T}\| \leq \sqrt{T}(R+\alpha)$.
			Combining the above,
			\[
				1 \geq \dfrac{\langle w_T, w^*\rangle}{\|w_T\|\|w^*\|} \geq \dfrac{\sqrt{T}(\gamma-\alpha)}{1+\alpha}\\
				 \implies T \leq \left(\frac{1+\alpha}{\gamma-\alpha}\right)^2.
			\]
			The update at $(x,y)$ is non-zero iff $w_t$ has margin $\leq \alpha$ at $(x,y)$, so once $\alpha$-Perceptron stops updating, $\margin_S(w_T) \geq \alpha$.
		\end{proof}	
	
	While simple, this result hints at an underlying, more general phenomenon for linearly separable datasets: The convergence of gradient-based adversarial training to a robust risk minimizer often mirrors the convergence of conventional gradient methods to an empirical risk minimizer. We demonstrate this principle formally in the following section for logistic regression.

\section{Adversarial Training for Logistic Regression}\label{sec:adv_train_lr}

	We will now analyze the convergence and margin of $\alpha$-GD and $\alpha$-SGD for logistic regression. In logistic regression, $\ell(w,x,y) = f(-y\langle w, x\rangle)$ where $f(u) = \ln(1+\exp(u))$. Note that $f$ is convex, $1$-Lipschitz, and $1$-smooth, and bounded below by 0. For notational simplicity, suppose that $S = \{(x_i,y_i)\}_{i=1}^n$ with $\|x_i\| \leq 1$ for all $i$. Thus, the max-margin $\gamma$ of $S$ satisfies $\gamma \leq 1$.

\subsection{Convergence and Margin of $\alpha$-GD}\label{sec:gd_log_reg}

	Let $\{w_t\}_{t \geq 0}$ be the iterates of $\alpha$-GD with step-sizes $\{\eta_t\}_{t \geq 0}$. We will suppose that $w_0 = 0$, and $\eta_0 = 1$. These assumptions are not necessary, but simplify the statement and proofs of the following results. Full proofs of all results in this section can be found in Appendix \ref{sec:gd_proof}.

	To analyze the convergence of $\alpha$-GD on $L_{\rob}$, we will use the fact that by Lemma \ref{lem:linear_robust_loss}, while $L_{\rob}$ is not smooth, it is $\beta$-smooth away from $0$. We then use a Perceptron-style argument inspired by \cite{ji2018risk} to show that after a few iterations, the model $w_t$ produced by $\alpha$-GD has norm bounded below by some positive constant. We can then apply standard convergence techniques for gradient descent on $\beta$-smooth functions to derive the following.

	\begin{theorem}\label{thm:gd_log_reg1}
		Suppose $w_0 = 0, \eta_0 = 1$, and $\forall t \geq 1$, $\eta_t \leq \left(\frac{2\alpha}{(\gamma-\alpha)} + (1+\alpha)^2\right)^{-1}$.
		Then $\forall t \geq 2$,
		$$L_{\rob}(w_t) \leq \frac{1}{t} + \bigg(\sum_{j=1}^{t-1}\eta_j\bigg)^{-1}\left(\frac{1}{4} + \frac{\ln(t)^2}{(\gamma-\alpha)^2}\right).$$
	\end{theorem}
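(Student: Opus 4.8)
The plan is to combine a Perceptron-style lower bound on $\|w_t\|$ with the standard descent-lemma analysis for gradient descent on a $\beta$-smooth function, using Lemma~\ref{lem:linear_robust_loss} to control the local smoothness of $L_{\rob}$ away from the origin. By Lemma~\ref{lem:linear_robust_loss}(1), $\alpha$-GD on logistic regression has the explicit update $w_{t+1} = w_t - \eta_t \nabla L_{\rob}(w_t)$ with $\nabla \ell_{\rob}(w,x,y) = f'(-y\langle w,x\rangle + \alpha\|w\|)(-yx + \overline{w})$, and by part (4) each $\ell_{\rob}$ is $\beta'$-smooth at $w \neq 0$ with $\beta' = \alpha/\|w\| + (1+\alpha)^2$ (using $M=\beta=1$, $\|x\|\le 1$). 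The first step is to show that after one iteration, $\|w_t\|$ is bounded below: starting from $w_0 = 0$, $\eta_0 = 1$ gives $w_1 = -\nabla L_{\rob}(0) = \frac{1}{n}\sum_i f'(0) y_i x_i$-type term, but more robustly I would run the Perceptron argument from Section~4.1 against the max-margin direction $w^*$: since $\langle -\nabla \ell_{\rob}(w,x,y), w^*\rangle = f'(\cdot)(y\langle x, w^*\rangle - \langle \overline{w}, w^*\rangle) \geq f'(\cdot)(\gamma - \alpha) \geq 0$, the inner product $\langle w_t, w^*\rangle$ is nondecreasing, and a quantitative version gives $\|w_t\| \geq \langle w_t, w^*\rangle \geq c$ for some constant $c$ depending on $\gamma - \alpha$ for all $t \geq 1$. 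This uniform lower bound $\|w_t\| \geq c$ then converts the singular bound $\beta' = \alpha/\|w_t\| + (1+\alpha)^2$ into a uniform smoothness constant $\beta := 2\alpha/(\gamma-\alpha) + (1+\alpha)^2$ along the trajectory (matching the stated step-size restriction $\eta_t \leq 1/\beta$ for $t \geq 1$; one verifies the segment between consecutive iterates also stays away from $0$, e.g.\ by convexity of the norm or because the update direction has bounded norm relative to $\eta_t$).

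With uniform $\beta$-smoothness in hand, the second step is the standard descent inequality: for $t \geq 1$, $L_{\rob}(w_{t+1}) \leq L_{\rob}(w_t) - \eta_t(1 - \beta\eta_t/2)\|\nabla L_{\rob}(w_t)\|^2 \leq L_{\rob}(w_t) - \frac{\eta_t}{2}\|\nabla L_{\rob}(w_t)\|^2$. Combined with convexity, $L_{\rob}(w_t) - L_{\rob}(w^*_{\text{ref}}) \leq \langle \nabla L_{\rob}(w_t), w_t - u\rangle$ for any reference point $u$, one gets the telescoping bound in the style of \cite{ji2018risk}: summing $\eta_j(L_{\rob}(w_j) - L_{\rob}(u)) \leq \frac{1}{2}(\|w_j - u\|^2 - \|w_{j+1} - u\|^2) + \frac{\eta_j^2}{2}\|\nabla L_{\rob}(w_j)\|^2$ and using the descent inequality to absorb the last term, yielding $\big(\sum_{j=1}^{t-1}\eta_j\big)\big(\min_j L_{\rob}(w_j) - L_{\rob}(u)\big) \leq \frac{1}{2}\|w_1 - u\|^2 + (\text{bounded remainder})$. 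The key is then to choose the reference point $u$ cleverly — not the (nonexistent) minimizer of $L_{\rob}$, but a scaled max-margin direction $u = \frac{\ln t}{\gamma - \alpha} w^*$, for which $L_{\rob}(u) = \frac{1}{n}\sum_i f\big(-\frac{\ln t}{\gamma-\alpha}y_i\langle w^*, x_i\rangle + \alpha\frac{\ln t}{\gamma - \alpha}\big) \leq f\big(-\ln t\big) = \ln(1 + 1/t) \leq 1/t$, since each perturbed margin $y_i\langle w^*, x_i\rangle - \alpha \geq \gamma - \alpha$. This produces the $1/t$ term and an $O(\ln(t)^2/(\gamma-\alpha)^2)$ term from $\|u\|^2 \approx \ln(t)^2/(\gamma-\alpha)^2$, matching the theorem statement, with the $1/4$ arising from $\|w_1\|^2 \leq 1$ (bounded since $\eta_0 = 1$ and $\|\nabla \ell_{\rob}(0,\cdot)\| \leq \|{-yx}\| \leq 1$, using the subgradient convention $\overline{0} = 0$).

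The main obstacle I expect is making the smoothness argument fully rigorous along the continuous segments between iterates, since $L_{\rob}$ genuinely fails to be smooth (indeed differentiable) at $w = 0$, and the descent lemma requires $\beta$-smoothness on the segment $[w_t, w_{t+1}]$, not just at the endpoints. Establishing $\|w_t\| \geq c$ for all $t \geq 1$ handles the iterates themselves, but one must also check the line segment stays bounded away from $0$; this should follow because the gradient step is small ($\|\eta_t \nabla L_{\rob}(w_t)\|$ is controlled by $\eta_t(1+\alpha)$ and $\eta_t \leq 1/\beta$ is small relative to $c$) so the whole segment remains in the region $\{\|w\| \geq c/2\}$ where $\beta' \leq 2\alpha/(\gamma-\alpha) + (1+\alpha)^2 = \beta$. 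A second, more bookkeeping-heavy obstacle is tracking the constants so that the final bound is exactly $\frac{1}{t} + \big(\sum_{j=1}^{t-1}\eta_j\big)^{-1}\big(\frac14 + \frac{\ln(t)^2}{(\gamma-\alpha)^2}\big)$ rather than merely of that order — this requires the careful choice $u = \frac{\ln t}{\gamma - \alpha}w^*$ (with $\|w^*\| = 1$) and the clean bound $L_{\rob}(u) \leq 1/t$, plus noting $\min_{j < t} L_{\rob}(w_j) \geq L_{\rob}(w_t)$ fails in general, so one instead needs monotonicity of $L_{\rob}(w_t)$ along the trajectory (which follows from the descent inequality with $\eta_t \leq 1/\beta$) to replace the running minimum by $L_{\rob}(w_t)$.
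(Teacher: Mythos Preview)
Your proposal is correct and follows essentially the same approach as the paper: Perceptron-style lower bound on $\langle w_t, w^*\rangle$ to get local smoothness, then standard smooth-convex descent analysis with the comparison point $u_t = \frac{\ln t}{\gamma-\alpha}w^*$.

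Two small corrections resolve exactly the obstacles you flagged. First, for the segment argument: convexity of the norm gives \emph{upper} bounds on segments, not lower bounds, so that suggestion does not work. The paper instead observes that the linear functional $v \mapsto \langle v, w^*\rangle$ is affine, hence bounded below on $\conv(w_t, w_{t+1})$ by $\min\{\langle w_t, w^*\rangle, \langle w_{t+1}, w^*\rangle\} \geq \frac{\gamma-\alpha}{2}$; Cauchy--Schwarz then gives $\|v\| \geq \frac{\gamma-\alpha}{2}$ on the whole segment, yielding the uniform smoothness constant $\beta' = \frac{2\alpha}{\gamma-\alpha} + (1+\alpha)^2$ directly (no step-size-versus-$c$ comparison needed). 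Second, for the constant $\tfrac14$: your bound $\|w_1\|^2 \leq 1$ is too loose. Since $w_0 = 0$, the subgradient at the origin is $\nabla \ell_{\rob}(0,x_i,y_i) = f'(0)(-y_ix_i) = -\tfrac12 y_ix_i$, so $\|w_1\| = \big\|\tfrac{1}{2n}\sum_i y_ix_i\big\| \leq \tfrac12$ and $\|w_1\|^2 \leq \tfrac14$; after using $\|w_1 - u_t\|^2 \leq 2\|w_1\|^2 + 2\|u_t\|^2$ (which absorbs the factor of $2$ in the denominator from the telescoping lemma), this gives exactly the $\tfrac14 + \frac{\ln(t)^2}{(\gamma-\alpha)^2}$ in the statement.
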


	We can use the above results to show that after a polynomial number of iterations, we obtain a model with margin $\alpha$. To do so, we first require a straightforward lemma relating $L_{\rob}$ to margin.

	\begin{lemma}\label{lem:log_reg_margin}
	If $L_{\rob}(w) \leq \frac{\ln(2)}{n}$ then $\margin_S(w) \geq \alpha$.\end{lemma}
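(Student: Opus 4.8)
The plan is to exploit the explicit formula for $\ell_{\rob}$ in the linear case given by Lemma~\ref{lem:linear_robust_loss}(1), namely $\ell_{\rob}(w,x,y) = f(-y\langle w,x\rangle + \alpha\|w\|)$ with $f(u) = \ln(1+\exp(u))$, together with the monotonicity of $f$. First I would observe that since every summand in $L_{\rob}(w) = \tfrac{1}{n}\sum_{(x,y)\in S}\ell_{\rob}(w,x,y)$ is nonnegative, the hypothesis $L_{\rob}(w) \le \tfrac{\ln 2}{n}$ forces each individual term to satisfy $\ell_{\rob}(w,x,y) \le \ln 2$ — i.e. $\ln\!\bigl(1+\exp(-y\langle w,x\rangle + \alpha\|w\|)\bigr) \le \ln 2$ for all $(x,y)\in S$.

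Next I would unwind this inequality: $\ln(1+e^u) \le \ln 2$ is equivalent to $1 + e^u \le 2$, i.e. $e^u \le 1$, i.e. $u \le 0$. Applying this with $u = -y\langle w,x\rangle + \alpha\|w\|$ gives $-y\langle w,x\rangle + \alpha\|w\| \le 0$, that is, $y\langle w,x\rangle \ge \alpha\|w\|$ for every $(x,y) \in S$. Dividing through by $\|w\|$ (noting $w \neq 0$, since $L_{\rob}(0) = \ln 2 > \tfrac{\ln 2}{n}$ once $n \ge 2$, and the $n=1$ case is trivial or can be handled separately) yields $\tfrac{y\langle w,x\rangle}{\|w\|} \ge \alpha$ for all $(x,y)\in S$, and taking the infimum over $S$ gives exactly $\margin_S(w) \ge \alpha$ by the definition in \eqref{eq:margin}.

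There is essentially no hard step here — the result is a direct consequence of the closed form for $\ell_{\rob}$ and the elementary fact that $\ln(1+e^u)\le\ln 2 \iff u \le 0$. The only point requiring a moment's care is the $w = 0$ edge case and making sure the nonnegativity argument correctly isolates a single summand; both are routine. I would present the whole argument in three or four lines.

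\begin{proof}
By Lemma~\ref{lem:linear_robust_loss}(1), $\ell_{\rob}(w,x,y) = \ln\bigl(1 + \exp(-y\langle w,x\rangle + \alpha\|w\|)\bigr)$. Since each summand of $L_{\rob}$ is nonnegative, $L_{\rob}(w) \le \ln(2)/n$ implies $\ell_{\rob}(w,x,y) \le \ln 2$ for every $(x,y) \in S$. Hence $1 + \exp(-y\langle w,x\rangle + \alpha\|w\|) \le 2$, i.e. $-y\langle w,x\rangle + \alpha\|w\| \le 0$, so $y\langle w,x\rangle \ge \alpha\|w\|$ for all $(x,y) \in S$. In particular $w \neq 0$ (otherwise this would force $0 \ge 0$ yet $L_{\rob}(0) = \ln 2 > \ln(2)/n$ when $n \ge 2$; the case $n = 1$ is immediate since then the bound reads $L_{\rob}(w) \le \ln 2$ and separating by $w^*$ already gives margin $\geq \alpha$). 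Dividing by $\|w\| > 0$ and taking the infimum over $(x,y) \in S$ gives $\margin_S(w) \ge \alpha$.
\end{proof}
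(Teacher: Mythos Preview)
Your proof is correct and follows essentially the same route as the paper: use nonnegativity of the summands to force each $\ell_{\rob}(w,x_i,y_i)\le\ln 2$, apply Lemma~\ref{lem:linear_robust_loss}(1) to rewrite this as $\ln(1+\exp(-y_i\langle w,x_i\rangle+\alpha\|w\|))\le\ln 2$, unwind to $y_i\langle w,x_i\rangle\ge\alpha\|w\|$, and divide by $\|w\|$. The paper's version is even terser and simply omits the $w=0$ discussion; your parenthetical treatment of that edge case is extra care (though the $n=1$ remark about ``separating by $w^*$'' is not quite on point, since the claim concerns the given $w$, not the existence of some separator---this degenerate case is harmless and the paper ignores it too).
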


	We then get the following.

	\begin{cor}\label{thm:gd_log_reg2}
	Suppose that for $t \geq 1$, $\eta_t = \eta \leq 1$ and $\eta_t \leq \left(\frac{2\alpha}{(\gamma-\alpha)} + (1+\alpha)^2\right)^{-1}$. For all $q > 1$, there is a constant $C_q$ such that $\margin_S(w_t) \geq \alpha$ for all $t$ satisfying
	\begin{equation}\label{eq:gd_log_reg_margin}
	t \geq \max\left\{C_q,\left(\dfrac{n}{\eta(\gamma-\alpha)^2\ln(2)}\right)^{q}\right\}.\end{equation}
	\end{cor}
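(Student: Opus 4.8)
The plan is to combine Theorem~\ref{thm:gd_log_reg1} with Lemma~\ref{lem:log_reg_margin}: it suffices to exhibit a threshold on $t$ after which $L_{\rob}(w_t) \le \ln(2)/n$. Since the step sizes are constant, $\eta_t = \eta$, we have $\sum_{j=1}^{t-1}\eta_j = (t-1)\eta$, so Theorem~\ref{thm:gd_log_reg1} gives, for $t \ge 2$,
\[
L_{\rob}(w_t) \le \frac{1}{t} + \frac{1}{(t-1)\eta}\left(\frac{1}{4} + \frac{\ln(t)^2}{(\gamma-\alpha)^2}\right).
\]
The goal is then to show each of the two terms on the right is at most $\ln(2)/(2n)$ once $t$ satisfies \eqref{eq:gd_log_reg_margin}.

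The first term is easy: $1/t \le \ln(2)/(2n)$ holds as soon as $t \ge 2n/\ln(2)$, which is dominated by the second expression in the max in \eqref{eq:gd_log_reg_margin} for any fixed $q > 1$ and $n$ large (and can be absorbed into $C_q$ otherwise). For the second term, I would bound $(t-1)^{-1} \le 2/t$ for $t \ge 2$ and $\tfrac14 + \ln(t)^2/(\gamma-\alpha)^2 \le 2\ln(t)^2/(\gamma-\alpha)^2$ for $t$ past a small absolute constant, reducing the requirement to
\[
\frac{4\ln(t)^2}{\eta (\gamma-\alpha)^2 \, t} \le \frac{\ln(2)}{2n},
\quad\text{i.e.}\quad
\frac{t}{\ln(t)^2} \ge \frac{8n}{\eta(\gamma-\alpha)^2 \ln(2)}.
\]
Writing $A := 8n/(\eta(\gamma-\alpha)^2\ln(2))$, I need $t/\ln(t)^2 \ge A$. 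The standard trick is: for any $q > 1$, the function $t \mapsto t^{1-1/q}/\ln(t)^2 \to \infty$, so there is a constant $c_q$ (depending only on $q$) with $t^{1/q} \ge \ln(t)^2$ for all $t \ge c_q$; hence for $t \ge \max\{c_q, A^q\}$ we get $t/\ln(t)^2 \ge t^{1-1/q} \ge A$. Taking $C_q$ to be the maximum of $c_q$, the absolute constants used above, and $2n/\ln(2)$ (the last only mattering as a lower-order term, and in fact $n/\ln(2) \le A \le A^q$ so it is already covered) yields precisely the bound \eqref{eq:gd_log_reg_margin} with $A^q = (8n/(\eta(\gamma-\alpha)^2\ln(2)))^q$; absorbing the constant $8$ (and $\ln 2$) into $C_q$ or noting $(8/\ln 2)^q$ is a $q$-dependent constant matches the displayed form $\left(n/(\eta(\gamma-\alpha)^2\ln(2))\right)^q$ up to the constant $C_q$.

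The only mildly delicate step is the polylog-versus-polynomial comparison $t^{1/q} \ge \ln(t)^2$; I would make it rigorous by substituting $s = \ln t$, reducing to $e^{s/q} \ge s^2$ for $s$ large, which follows since $e^{s/q}$ grows faster than any polynomial in $s$, with the crossover point $c_q$ depending only on $q$. Everything else is bookkeeping: tracking that all the auxiliary ``$t \ge$ constant'' requirements (for the inequalities $(t-1)^{-1}\le 2/t$, $1/t \le \ln(2)/(2n)$, and $\tfrac14 + \ln(t)^2/(\gamma-\alpha)^2 \le 2\ln(t)^2/(\gamma-\alpha)^2$) can be folded into $C_q$. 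I do not expect any real obstacle; the main point is simply choosing $C_q$ large enough to simultaneously satisfy the handful of constant-order constraints and to make the polylog term subpolynomial.
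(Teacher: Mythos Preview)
Your approach is the same as the paper's: apply Theorem~\ref{thm:gd_log_reg1} with constant step size, then use a polylog-versus-polynomial comparison to force $L_{\rob}(w_t)\le \ln(2)/n$ and invoke Lemma~\ref{lem:log_reg_margin}. The paper streamlines the constants slightly by first folding $1/t$ into the main term via $1/t \le 1/(\eta(t-1))$ (using $\eta\le 1$) and then defining $C_q$ directly as the threshold for $\tfrac{5}{4}+\ln(t)^2 \le (t-1)t^{-1/q}$, which yields $L_{\rob}(w_t)\le t^{-1/q}/(\eta(\gamma-\alpha)^2)$ with no stray factors.

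There is one genuine slip in your chain: from $t\ge A^q$ you conclude $t^{1-1/q}\ge A$, but raising $t\ge A^q$ to the power $1-1/q$ only gives $t^{1-1/q}\ge A^{q-1}$, which is \emph{smaller} than $A$ whenever $1<q<2$ and $A>1$. The roles of $1/q$ and $1-1/q$ are swapped. The fix is immediate: define $c_q$ as the threshold beyond which $8\ln(t)^2 \le t^{1-1/q}$ (still finite and $q$-dependent, since $1-1/q>0$); then $t/\ln(t)^2 \ge 8\,t^{1/q}$, and $t\ge B^q$ with $B=n/(\eta(\gamma-\alpha)^2\ln(2))$ gives $t^{1/q}\ge B$, hence $t/\ln(t)^2 \ge 8B=A$ as needed. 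This also absorbs the factor of $8$ into $c_q$ cleanly, so the final threshold is exactly $\max\{C_q, B^q\}$ as stated, rather than $\max\{c_q, (8B)^q\}$; your suggestion to ``absorb $8^q$ into $C_q$'' does not work as written, since $C_q$ enters additively (via $\max$) while $8^q$ multiplies the data-dependent term.
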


	Ignoring all other terms, this implies that for all $q > 1$, $t = O((\gamma-\alpha)^{-2q})$ iterations of $\alpha$-GD sufficient to obtain margin $\alpha$. The constant $C_q$ is how large $T$ must be so that for all $t \geq T$, $\ln(t)/t < t^{-1/q}$. As such, the constant $C_q$ tends to $\infty$ as $q$ tends to $1$.

	On the other hand, one can show that standard gradient descent may require exponentially many iterations to reach margin $\alpha$, even though it eventually converges to the max-margin classifier. This follows immediately from a direct adaptation of lower bounds from \cite{gunasekar2018implicit}.

	\begin{theorem}\label{thm:exp_gd}
	Let $x = (1,0), y = 1$. Let $(w_t)_{t \geq 1}$ be the iterates of GD with constant step-size $\eta = 1$ initialized at $w_0 = (0,c)$ for $c > 0$. For all $t < \exp(c/(1-\alpha))$, $\margin_{(x,y)}(w_t) < \alpha$.
	\end{theorem}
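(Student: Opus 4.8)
The plan is to reduce the two-dimensional gradient dynamics to a scalar recursion, observe that the margin is a fixed increasing function of that scalar, and then show the scalar grows only logarithmically, so it cannot cross the margin-$\alpha$ threshold until exponentially many steps have elapsed. This mirrors the lower-bound construction adapted from \cite{gunasekar2018implicit}: the initialization $w_0=(0,c)$ places mass on the direction orthogonal to the max-margin direction $x=(1,0)$, and gradient descent diminishes the \emph{relative} size of that component only at a logarithmic rate.

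First I would reduce to one dimension. With $S=\{(x,y)\}$, $x=(1,0)$, $y=1$, the logistic loss is $L(w)=\ell(w,x,y)=\ln(1+\exp(-\langle w,x\rangle))=\ln(1+e^{-w^{(1)}})$, which depends only on $w^{(1)}$, and $\nabla L(w)=-\tfrac{1}{1+e^{w^{(1)}}}(1,0)$. Hence GD with $\eta=1$ leaves the second coordinate fixed, $w_t^{(2)}=c$ for all $t$, while $a_t:=w_t^{(1)}$ satisfies $a_0=0$ and
\begin{equation}\label{eq:exp_gd_rec}
a_{t+1}=a_t+\frac{1}{1+e^{a_t}},
\end{equation}
so $\{a_t\}$ is nonnegative and increasing. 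Since $w_t=(a_t,c)$,
$$\margin_{(x,y)}(w_t)=\frac{y\langle w_t,x\rangle}{\|w_t\|}=\frac{a_t}{\sqrt{a_t^2+c^2}},$$
an increasing function of $a_t\ge 0$ that is $<\alpha$ precisely when $a_t<\tfrac{\alpha c}{\sqrt{1-\alpha^2}}$; in particular, using $\|w_t\|\ge c$, it is $<\alpha$ whenever $a_t<\alpha c$. So it suffices to keep $a_t$ below this threshold for all $t$ in the claimed range.

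The remaining work is to bound the growth of \eqref{eq:exp_gd_rec}. I would substitute $u_t:=e^{a_t}$, so that $u_0=1$ and $u_{t+1}=u_t\exp\!\big(\tfrac{1}{1+u_t}\big)$; since $u_t\ge 1$ the exponent lies in $[0,\tfrac12]$, and using $e^s\le 1+s+s^2$ on $[0,1]$ together with $\tfrac{u_t}{(1+u_t)^2}\le\tfrac14$ gives $u_{t+1}\le u_t+\tfrac54$, hence $u_t\le 1+\tfrac54 t$ and $a_t\le\ln\!\big(1+\tfrac54 t\big)$. (Equivalently, \eqref{eq:exp_gd_rec} discretizes $\dot a=e^{-a}$, whose solution is $a(t)=\ln(t+1)$, so $a_t=\Theta(\ln t)$.) Combining with the previous step, $\margin_{(x,y)}(w_t)<\alpha$ holds as long as $\ln(1+\tfrac54 t)$ stays below the threshold, i.e. for all $t$ up to a value of order $\exp(\Omega(c))$ --- the exponential lower bound asserted by the theorem.

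There is no deep obstacle here. The only steps needing care are (i) the logarithmic bound $a_t\le\ln(\Theta(t))$ on the recursion \eqref{eq:exp_gd_rec}, handled by the $e^{a_t}$ substitution (or the ODE comparison) above, and (ii) the bookkeeping that identifies the natural threshold $\alpha c/\sqrt{1-\alpha^2}$ (equivalently the sufficient condition $a_t<\alpha c$) with the exponent $c/(1-\alpha)$ in the statement; this is also where one sees that $c$ should exceed a constant depending on $\alpha$ for the bound not to be vacuous already at the first iterate.
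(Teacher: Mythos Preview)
Your approach is essentially the paper's: reduce to the scalar recursion $a_{t+1}=a_t+(1+e^{a_t})^{-1}$ with the second coordinate frozen at $c$, express the margin as $a_t/\sqrt{a_t^2+c^2}$ and hence identify the threshold $a_t\geq \alpha c/\sqrt{1-\alpha^2}$, and then show $a_t$ grows only logarithmically. The one technical difference is how the logarithmic bound is obtained: the paper proves $a_t\le\ln(t+1)$ by a one-line induction, using that $x\mapsto x+(1+e^x)^{-1}$ is increasing together with the elementary inequality $\ln(x+1)+\tfrac{1}{x+2}\le\ln(x+2)$, which is slightly cleaner and tighter than your $u_t=e^{a_t}$ substitution (yielding $a_t\le\ln(1+\tfrac54 t)$). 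Your caution in step~(ii) is well placed: matching the threshold $\alpha c/\sqrt{1-\alpha^2}$ to the exponent $c/(1-\alpha)$ in the statement is indeed the delicate bookkeeping, and the paper's own final inequality there deserves the same scrutiny you give it.
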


	One can show that as $\eta$ decreases, this convergence rate only decreases. Thus, the exponentially slow convergence in margin is not an artifact of the choice of step-size, but rather an intrinsic  property of gradient descent on logistic regression.



\subsection{Convergence and Margin of $\alpha$-SGD}\label{sec:sgd_log_reg}

	Recall that at each iteration $t$, $\alpha$-SGD selects $i_t \sim [n]$ uniformly at random and updates via $w_{t+1} = w_t-\eta_t\nabla \ell_{\rob}(w_t,x_{i_t},y_{i_t})$. We would like to derive similar results to those for $\alpha$-GD above. While we could simply try to derive the same results by taking expectations over the iterates of $\alpha$-SGD, this ignores relatively recent work that has instead derived high-probability convergence results for SGD \cite{ji2018risk, rakhlin2011making}. In particular, \cite{ji2018risk} uses a martingale Bernstein bound from \cite{beygelzimer2011contextual} to derive a high probability $O(\ln(t)^2/t)$ convergence rate for SGD on separable data. While the analysis cannot be used directly, we use the structural connections between $\ell$ and $\ell_{\rob}$ in Lemma \ref{lem:linear_robust_loss} to adapt the techniques therein. We derive the following:

	\begin{theorem}\label{thm:log_reg1}
	Let $\{w_t\}_{t \geq 0}$ be the iterates of $\alpha$-SGD with constant step size $\eta \leq \min\{1,2(1+\alpha)^{-2}\}$ and $w_0 = 0$. For any $t \geq 1$, with probability at least $1-\delta$, $\hat{w}_t := \frac{1}{t} \sum_{j < t} w_j$ satisfies
	$$L_{\rob}(\hat{w}_t) \leq \dfrac{1}{\eta t}\left(\dfrac{4\ln(t)}{\gamma-\alpha}+6\right)\left(\dfrac{8\ln(t)}{(\gamma-\alpha)^2} + \frac{8}{\gamma-\alpha} +4\ln(1/\delta)\right).$$
	\end{theorem}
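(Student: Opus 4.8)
The plan is to adapt the high-probability stochastic-gradient analysis of \cite{ji2018risk} to the robust loss $L_{\rob}$, relying on the structural facts from Lemma \ref{lem:linear_robust_loss}: that $\ell_{\rob}(\cdot,x,y)$ is convex, that its (sub)gradient has the form $g = f'(\cdot)(-yx+\alpha\overline{w})$ with $\|g\|\le (1+\alpha)f'(\cdot)\le 1+\alpha$, and — crucially for logistic $f$ — the self-bounding inequality $\|g\|^2\le(1+\alpha)^2 f'(\cdot)\le(1+\alpha)^2\ell_{\rob}(w,x,y)$, which follows from $f'\le 1$ together with $f'(u)=\sigma(u)\le\ln(1+e^u)=f(u)$. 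Since $L_{\rob}$ has no finite minimizer, I would compare the iterates not to a fixed optimum but to a \emph{growing} reference point $\tilde w_t := \tfrac{2\ln t}{\gamma-\alpha}\,w^*$, where $w^*$ is the unit max-margin vector; Lemma \ref{lem:lin_sep} gives $y\langle w^*,x+\delta\rangle\ge\gamma-\alpha$ on $S\cup S'$, so for the logistic loss $L_{\rob}(\tilde w_t)\le e^{-2\ln t}\le 1/t$.

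First I would write the standard one-step SGD identity for $\|w_{j+1}-\tilde w_t\|^2$ with stochastic subgradient $g_j=\nabla\ell_{\rob}(w_j,x_{i_j},y_{i_j})$, use convexity of $L_{\rob}$ to lower-bound $\langle \EE[g_j\mid\mF_j],\,w_j-\tilde w_t\rangle$ by $L_{\rob}(w_j)-L_{\rob}(\tilde w_t)$, telescope over $j<t$ (using $w_0=0$), and bound $\eta^2\sum_j\|g_j\|^2$ via the self-bounding inequality together with the per-sample convexity bound $\ell_{\rob}(w_j,x_{i_j},y_{i_j})\le\langle g_j,\,w_j-\tilde w_t\rangle+\ell_{\rob}(\tilde w_t,x_{i_j},y_{i_j})$; the hypothesis $\eta\le 2(1+\alpha)^{-2}$ is exactly what lets these quadratic terms be absorbed into the telescoped sum, leaving
$$\sum_{j<t}\big(L_{\rob}(w_j)-L_{\rob}(\tilde w_t)\big)\;\le\;\frac{\|\tilde w_t\|^2}{\eta}\;+\;M_t\;+\;O(1),$$
where $M_t:=\sum_{j<t}\langle\EE[g_j\mid\mF_j]-g_j,\;w_j-\tilde w_t\rangle$ is a martingale in the natural filtration $\{\mF_j\}$.

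To control $M_t$ I would invoke the martingale Bernstein/Freedman bound of \cite{beygelzimer2011contextual} as used in \cite{ji2018risk}: with probability at least $1-\delta$, $M_t$ is bounded by a constant times $\ln(1/\delta)$ plus a constant times $\sqrt{V_t\ln(1/\delta)}$, where $V_t$ is the predictable quadratic variation $\sum_{j<t}\EE[\langle\EE[g_j\mid\mF_j]-g_j,\,w_j-\tilde w_t\rangle^2\mid\mF_j]$. The main obstacle is bounding $V_t$, since the per-iterate loss $\ell_{\rob}(w_j,x_{i_j},y_{i_j})$ is not bounded a priori; the resolution (following \cite{ji2018risk}) is to split $\langle\cdot,\,w_j-\tilde w_t\rangle=\langle\cdot,w_j\rangle-\langle\cdot,\tilde w_t\rangle$, bound the $\tilde w_t$ piece by $2(1+\alpha)\|\tilde w_t\|$, and bound the conditional second moment of the $w_j$ piece using the self-bounding property, so that $V_t$ is controlled by $\|\tilde w_t\|$ times the cumulative risk $\sum_{j<t}L_{\rob}(w_j)$ plus lower-order terms. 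Substituting this back produces a quadratic inequality in $A_t:=\sum_{j<t}L_{\rob}(w_j)$; solving it, using $\|\tilde w_t\|=2\ln(t)/(\gamma-\alpha)$, and collecting terms yields the stated product bound on $A_t$. Finally, convexity of $L_{\rob}$ and Jensen's inequality give $L_{\rob}(\hat w_t)=L_{\rob}\big(\tfrac1t\sum_{j<t}w_j\big)\le\tfrac1t A_t$, which is the claim. Note that, in contrast to the $\alpha$-GD analysis of Theorem \ref{thm:gd_log_reg1}, this argument is purely first-order — convexity plus self-bounding — so it needs no lower bound on $\|w_t\|$ and never invokes smoothness of $L_{\rob}$.
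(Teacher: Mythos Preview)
Your proposal takes a genuinely different route from the paper, and as written it has a real gap.

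\textbf{What the paper does.} The paper does \emph{not} run a regret-style telescoping with a martingale on the inner products $\langle g_j-\EE[g_j\mid\mF_j],\,w_j-\tilde w_t\rangle$. Instead it works with the scalar sums of derivatives
\[
f'_{<t}=\sum_{j<t}\eta\,f'(\langle w_j,z_j\rangle),\qquad
F'_{<t}=\sum_{j<t}\eta\,\EE\big[f'(\langle w_j,z_j\rangle)\mid v_{0,j-1}\big],
\]
where $z_j=-y_{i_j}x_{i_j}+\alpha\overline w_j$. A Perceptron-type inner product with $w^*$ (together with the deterministic telescoping that yields $\|w_t-u_t\|^2\le\ln(t)^2/(\gamma-\alpha)^2+2$) gives an almost-sure bound $f'_{<t}\le 2\ln(t)/(\gamma-\alpha)^2+2/(\gamma-\alpha)$. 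The martingale Bernstein bound of \cite{beygelzimer2011contextual} is then applied to the \emph{bounded} increments $X_j=F'_j-f'_j\in[-1,1]$, whose conditional variance satisfies $\EE[X_j^2\mid\cdot]\le F'_j$; this yields $F'_{<t}\le(3-e)^{-1}(f'_{<t}+\ln(1/\delta))$ w.p.\ $1-\delta$. Finally the logistic identity $f(x)\le f'(x)(|x|+2)$ (Lemma 2.6 of \cite{ji2018risk}) converts $F'_{<t}$ into $\sum_{j<t}L_{\rob}(w_j)$, using the a.s.\ bound on $\|w_j\|$ only at this last deterministic step. Jensen then gives the claim.

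\textbf{Where your argument breaks.} In your decomposition, Bernstein/Freedman requires the martingale increments to be almost surely bounded by some $R$. Your increments are $\langle\EE[g_j\mid\mF_j]-g_j,\,w_j-\tilde w_t\rangle$, which are bounded only through $\|w_j-\tilde w_t\|$; the self-bounding inequality $\|g_j\|^2\le(1+\alpha)^2\ell_{\rob}(w_j,\cdot)$ controls the \emph{second moment} of $g_j$, not the magnitude of this inner product. So you need, as a prerequisite, an almost-sure bound on $\|w_j\|$ (equivalently on $\|w_j-\tilde w_t\|$); the paper has one (its Lemma that $\|w_t-u_t\|^2\le\ln(t)^2/(\gamma-\alpha)^2+2$, proved by the same per-sample telescoping you start from), but you never invoke it, and it is not a consequence of ``self-bounding'' alone. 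Without it your variance claim ``$V_t\lesssim\|\tilde w_t\|\cdot A_t$'' cannot be justified (it should involve $\sup_j\|w_j-\tilde w_t\|^2$, not $\|\tilde w_t\|$), and the Bernstein step is unavailable. A secondary issue: your ``absorption'' uses $\eta(1+\alpha)^2\le 2$ to make the coefficient of $\sum_j\ell_{\rob}(w_j,\cdot)$ nonnegative, but at the boundary $\eta=2(1+\alpha)^{-2}$ that coefficient is zero and you lose $A_t$ entirely; the paper's bound does not degenerate there precisely because it never tries to keep that term --- it drops it and recovers the loss sum through $f\le f'(|x|+2)$ instead.

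In short, the paper's trick is to push the concentration onto the \emph{bounded} scalars $f'(\langle w_j,z_j\rangle)\in(0,1)$, so that Bernstein applies with $R=1$ and variance $\le F'_{<t}$; the conversion back to losses is then purely deterministic. Your inner-product martingale can be made to work, but only after first proving the a.s.\ bound on $\|w_j\|$ and tracking an extra factor from the step-size slack.
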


	A similar (but slightly more complicated) result can be shown when $w_0 \neq 0$, which we have omitted for the sake of exposition. Using Lemma \ref{lem:log_reg_margin}, we can now show that after $t \geq \poly(n,\eta^{-1},(\gamma-\alpha)^{-1},\ln(1/\delta))$ iterations, with high probability, $\hat{w}_t$ will have margin at least $\alpha$. 

	\begin{cor}\label{thm:log_reg2}
	Let $\{w_t\}_{t\geq 0}$ be the iterates of $\alpha$-SGD with constant step size $\eta \leq \min\{1,2(1+\alpha)^{-2}\}$ and $w_0 = 0$. For all $q > 1$, there is a constant $C_q$
	$$t \geq \max\left\{C_q,\left[ \dfrac{cn}{\eta}\left(\dfrac{1}{(\gamma-\alpha)^3} + \dfrac{\ln(1/\delta)}{\gamma-\alpha}\right)\right]^{q}\right\}$$
	then with probability at least $1-\delta$, $\margin_S(\hat{w}_t) \geq \alpha$. Here, $c$ is some universal constant.
	\end{cor}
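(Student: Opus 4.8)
The plan is to combine Theorem~\ref{thm:log_reg1} (the high-probability $O(\ln(t)^2/t)$ bound on $L_{\rob}(\hat{w}_t)$) with Lemma~\ref{lem:log_reg_margin} (which converts a small value of $L_{\rob}$ into a margin guarantee), exactly mirroring how Corollary~\ref{thm:gd_log_reg2} follows from Theorem~\ref{thm:gd_log_reg1} in the full-batch case. By Lemma~\ref{lem:log_reg_margin}, it suffices to force the right-hand side of the bound in Theorem~\ref{thm:log_reg1} below $\ln(2)/n$. So the first step is to take the bound
$$L_{\rob}(\hat{w}_t) \leq \dfrac{1}{\eta t}\left(\dfrac{4\ln(t)}{\gamma-\alpha}+6\right)\left(\dfrac{8\ln(t)}{(\gamma-\alpha)^2} + \frac{8}{\gamma-\alpha} +4\ln(1/\delta)\right),$$
and bound the product of the two parenthesized factors crudely by $c_1 \ln(t)^2\left(\frac{1}{(\gamma-\alpha)^3} + \frac{\ln(1/\delta)}{\gamma-\alpha}\right)$ for a universal constant $c_1$, using $\gamma - \alpha \leq 1$ and $\ln(t) \geq 1$ for $t \geq 3$ to absorb the lower-order terms. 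This reduces the requirement to $\frac{c_1 \ln(t)^2}{\eta t}\left(\frac{1}{(\gamma-\alpha)^3} + \frac{\ln(1/\delta)}{\gamma-\alpha}\right) \leq \frac{\ln(2)}{n}$, i.e. to $\frac{t}{\ln(t)^2} \geq \frac{c_1 n}{\eta \ln(2)}\left(\frac{1}{(\gamma-\alpha)^3} + \frac{\ln(1/\delta)}{\gamma-\alpha}\right) =: B$.

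The second step handles the $\ln(t)^2$ factor, which is why the statement is phrased with an arbitrary exponent $q > 1$ and an unspecified constant $C_q$. For any fixed $q > 1$, there is a threshold $C_q$ such that $\ln(t)^2 \leq t^{1-1/q}$ for all $t \geq C_q$ (this is the same $C_q$ as in Corollary~\ref{thm:gd_log_reg2}, up to a constant — $\ln(t)/t < t^{-1/q}$ squared). Hence for $t \geq C_q$ we have $t/\ln(t)^2 \geq t^{1/q}$, so it is enough to require $t^{1/q} \geq B$, i.e. $t \geq B^q$. Combining the two constraints gives $t \geq \max\{C_q, B^q\}$, and substituting $B$ yields precisely
$$t \geq \max\left\{C_q,\left[ \dfrac{cn}{\eta}\left(\dfrac{1}{(\gamma-\alpha)^3} + \dfrac{\ln(1/\delta)}{\gamma-\alpha}\right)\right]^{q}\right\}$$
with $c = c_1/\ln(2)$. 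For such $t$, the event of probability at least $1-\delta$ from Theorem~\ref{thm:log_reg1} gives $L_{\rob}(\hat{w}_t) \leq \ln(2)/n$, and then Lemma~\ref{lem:log_reg_margin} gives $\margin_S(\hat{w}_t) \geq \alpha$ on that same event.

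There is essentially no hard mathematical content here — the theorem is a bookkeeping corollary of two already-established results. The one point requiring a little care is making sure the high-probability statement is applied correctly: Theorem~\ref{thm:log_reg1} fixes $t$ first and then asserts the bound holds with probability $1-\delta$ for that $t$, so the corollary should be read the same way (for each $t$ past the threshold, the margin bound holds with probability $1-\delta$), rather than as a uniform-over-$t$ statement — otherwise one would need a union bound and a $\ln(t)$ or $\ln(1/\delta')$ inflation. The only other mild nuisance is absorbing the additive $+6$, $+8/(\gamma-\alpha)$, and $+4\ln(1/\delta)$ terms cleanly into the stated form; this is routine given $\gamma-\alpha \le 1$ and can be done by enlarging the universal constant $c$, possibly at the cost of also enlarging $C_q$ so that $t$ is large enough for the $\ln(t)$ terms to dominate the constants.
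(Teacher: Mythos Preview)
Your proposal is correct and matches the paper's proof essentially line for line: the paper also simplifies the right-hand side of Theorem~\ref{thm:log_reg1} to $\frac{4c\ln(t)^2}{\eta t}\left(\frac{1}{(\gamma-\alpha)^3} + \frac{\ln(1/\delta)}{\gamma-\alpha}\right)$ using $\gamma-\alpha\le 1$ and $\ln(t)\ge 1$ for $t\ge 3$, defines $C_q=\inf\{t\ge 3:\ln(t)^2\le t^{1-1/q}\}$ to absorb the $\ln(t)^2$ factor into $t^{1-1/q}$, and then invokes Lemma~\ref{lem:log_reg_margin}. Your remark about the per-$t$ (not uniform-in-$t$) reading of the high-probability guarantee is also correct and worth keeping in mind, though the paper does not comment on it.
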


	Ignoring all other factors, this implies that for any $q > 1$, with high probability $O((\gamma-\alpha)^{-2q})$ iterations of $\alpha$-SGD are sufficient to obtain margin $\alpha$. As with $\alpha$-GD, the constant $C_q$ is how large $T$ must be so that for all $t \geq T$, $\ln(t)/t \leq t^{-1/q}$. Proofs of the above results can be found in Appendix \ref{sec:sgd_proof}.


\section{Experiments}\label{sec:experiments}
    To corroborate our theory, we evaluate $\alpha$-GD and $\alpha$-SGD on logistic regression with linearly separable data. As in our theory, we train linear classifiers $w$ whose prediction on $x$ is $\hat{y} = \sgn(w^Tx)$. We compare $\alpha$-GD and $\alpha$-SGD for various values of $\alpha$. Note that when $\alpha = 0$, $\alpha$-GD and $\alpha$-SGD are identical to the standard GD and SGD training algorithms, which we use as benchmarks.

    \paragraph{Evaluation metrics.} We evaluate these methods in the three ways. First, we compute the training loss $L(w_t)$ in \eqref{eq:emp_risk}. Second, we compute the margin $\margin_S(w_t)$ in \eqref{eq:margin}. To aid clarity, we plot the {\it truncated margin}, $\margin_S^+(w_t) := \max\{0, \margin_S(w_t)\}$. Third, we plot the robust training loss $L_{\rob}(w_t)$ in \eqref{eq:ell_robust}. This is governed by $\alpha$. For convenience, we refer to this as the $\alpha$-robust loss and denote it by $L_{\alpha}(w_t)$. To compare $\alpha$-SGD for different values of $\alpha$, we plot $L_{\alpha}(w_t)$ for $\alpha$-SGD. In particular, standard GD and SGD correspond to $\alpha = 0$, in which case we plot $L_{0}(w_t) = L(w_t)$.

    \paragraph{Setup and implementation.} All experiments were implemented in PyTorch. We vary $\alpha$ over $\{0,0.25,0.5,0.75\}$. When $\alpha = 0$, we get standard GD and SGD. In all experiments, we use a constant step-size $\eta$ that is tuned for each $\alpha$. The tuning was done by varying $\eta$ over $\{0.1/2^k | 0 \leq k < 10\}$, evaluating the average value of $L_{\alpha}(w_t)$ after $500$ iterations, and selecting the step-size with the smallest loss. For $\alpha$-SGD, we did the same, but for $L_{\rob}(w_t)$ averaged over 5 trials. When plotting the above evaluation metrics for $\alpha$-SGD, we ran multiple trials (where the number varied depending on the dataset) and plotted the average, as well as error bars corresponding to the standard deviation.

    \paragraph{Synthetic data.} We draw $x \in \real^2$ uniformly at random from circles of radius 1 centered at $(2,0)$ and $(-2,0)$. These correspond to $+1$ and $-1$ labeled points, respectively. We draw $50$ points from each circle, and also add the points $(e_1,1)$ and $(-e_1,-1)$, where $e_1 = [1,0]^T$. This guarantees that the max-margin is $\gamma =1$. We initialize at $w_0 = [0,1]^T$. While we observe similar behavior for any reasonable initialization, this intialization is used to compare how the methods ``correct'' bad models. For $\alpha$-SGD, we computed the average and standard deviation of the evaluation metrics above over 5 trials.

    \paragraph{Real data.} We use the Iris Dataset \cite{Dua:2019}, which contains data for 3 classes, Iris-setosa, Iris-versicolor, and Iris-virginica. Iris-setosa is linearly separable from Iris-virginica with max-margin $\gamma \approx 1.22$. We initialize $w_0$ with entries drawn from $\mathcal{N}(0,1)$. We found that our results were not especially sensitive to the initialization scheme. While different initializations result in minor changes to the plots below, the effects were consistently uniform across different $\alpha$. For $\alpha$-SGD, we computed the average and standard deviation of the evaluation metrics above over 9 trials. Note that we increased the number here due to the increased variance of single-sample SGD on this dataset over the synthetic dataset above.

    \begin{figure}[H]
        \centering
        \begin{subfigure}[b]{0.32\textwidth}
            \includegraphics[width=\linewidth]{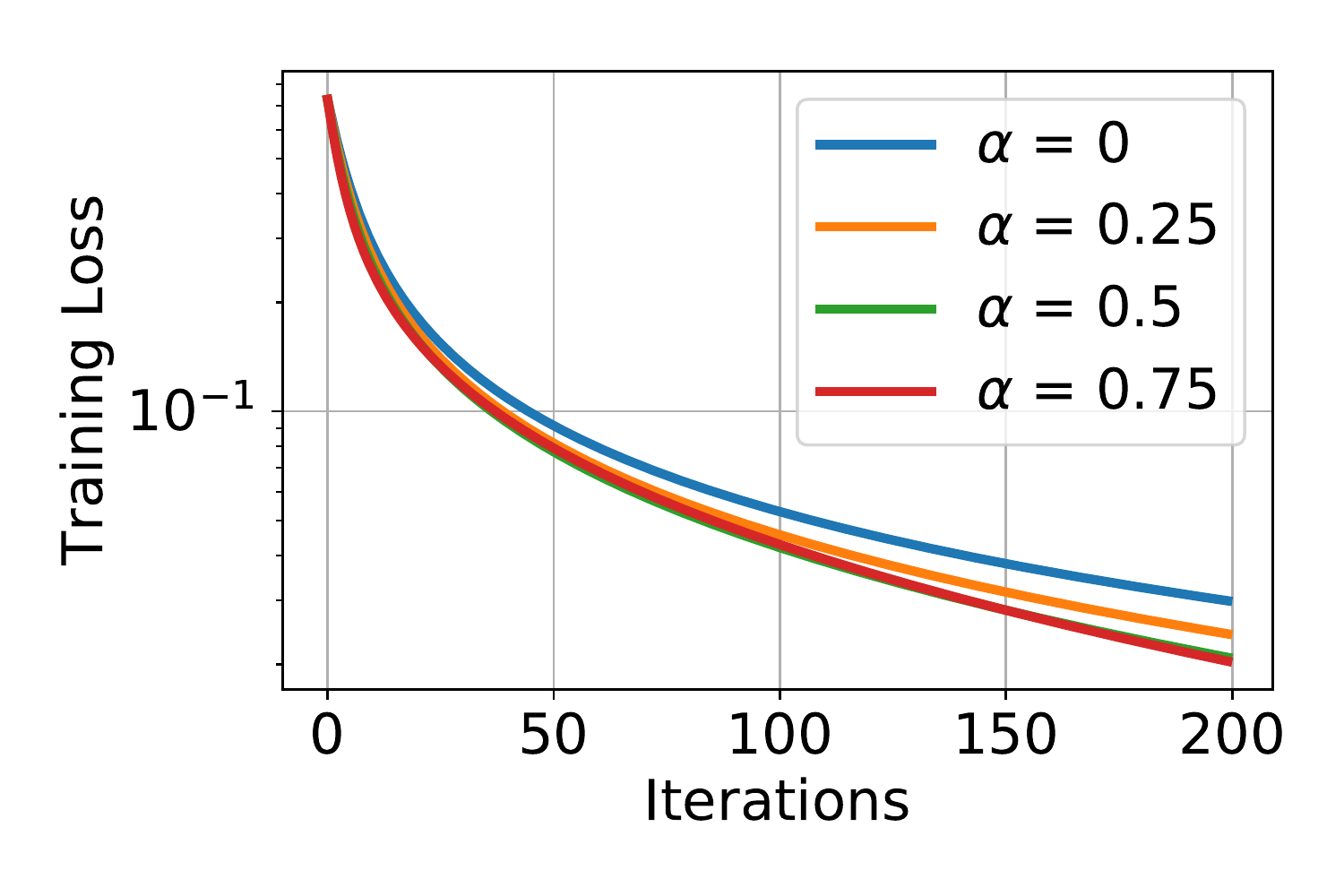}
            \caption{Training Loss}
        \end{subfigure}
        ~ 
        \begin{subfigure}[b]{0.32\textwidth}
            \includegraphics[width=\linewidth]{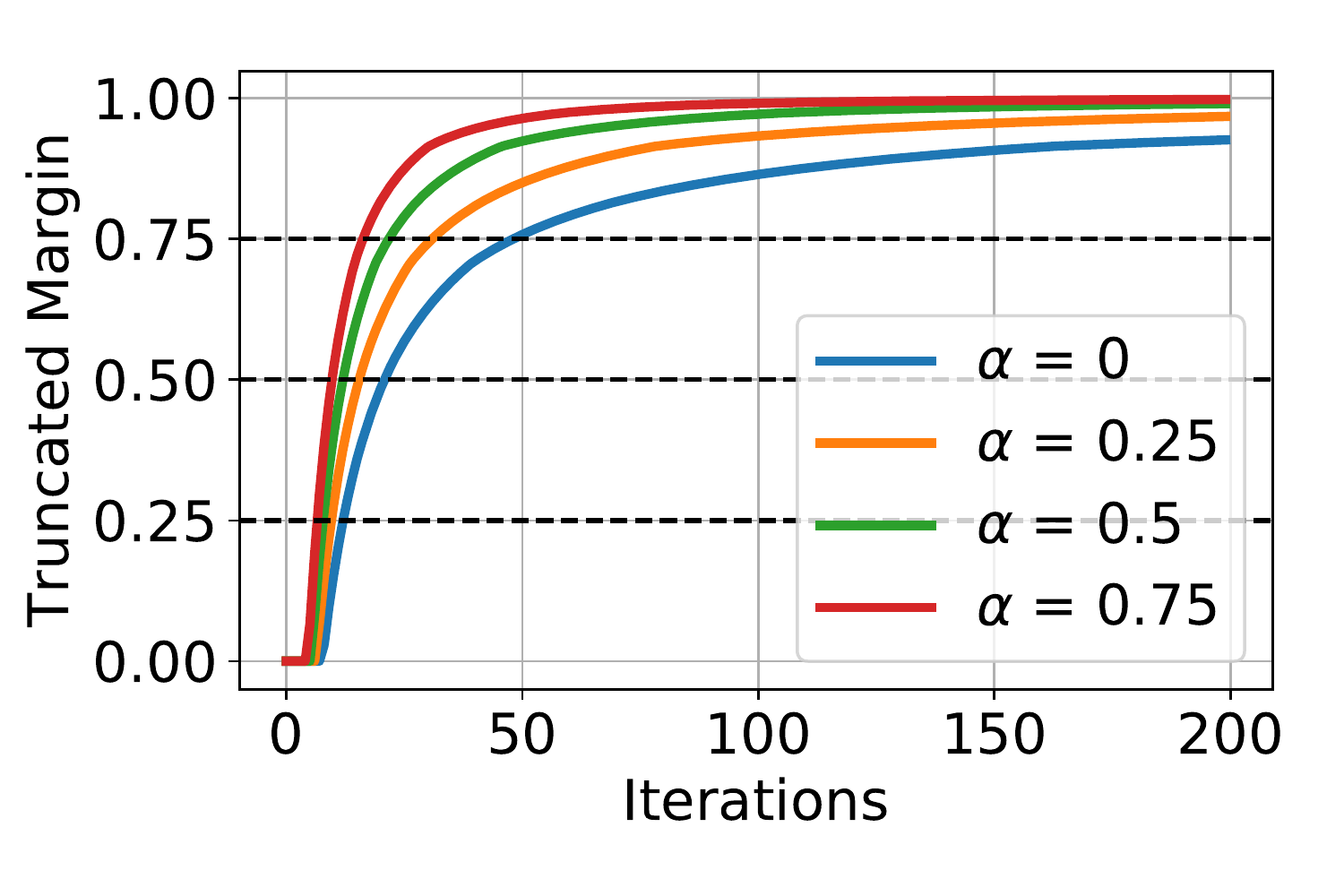}
            \caption{Truncated Margin}
        \end{subfigure}
        ~
        \begin{subfigure}[b]{0.32\textwidth}
            \includegraphics[width=\linewidth]{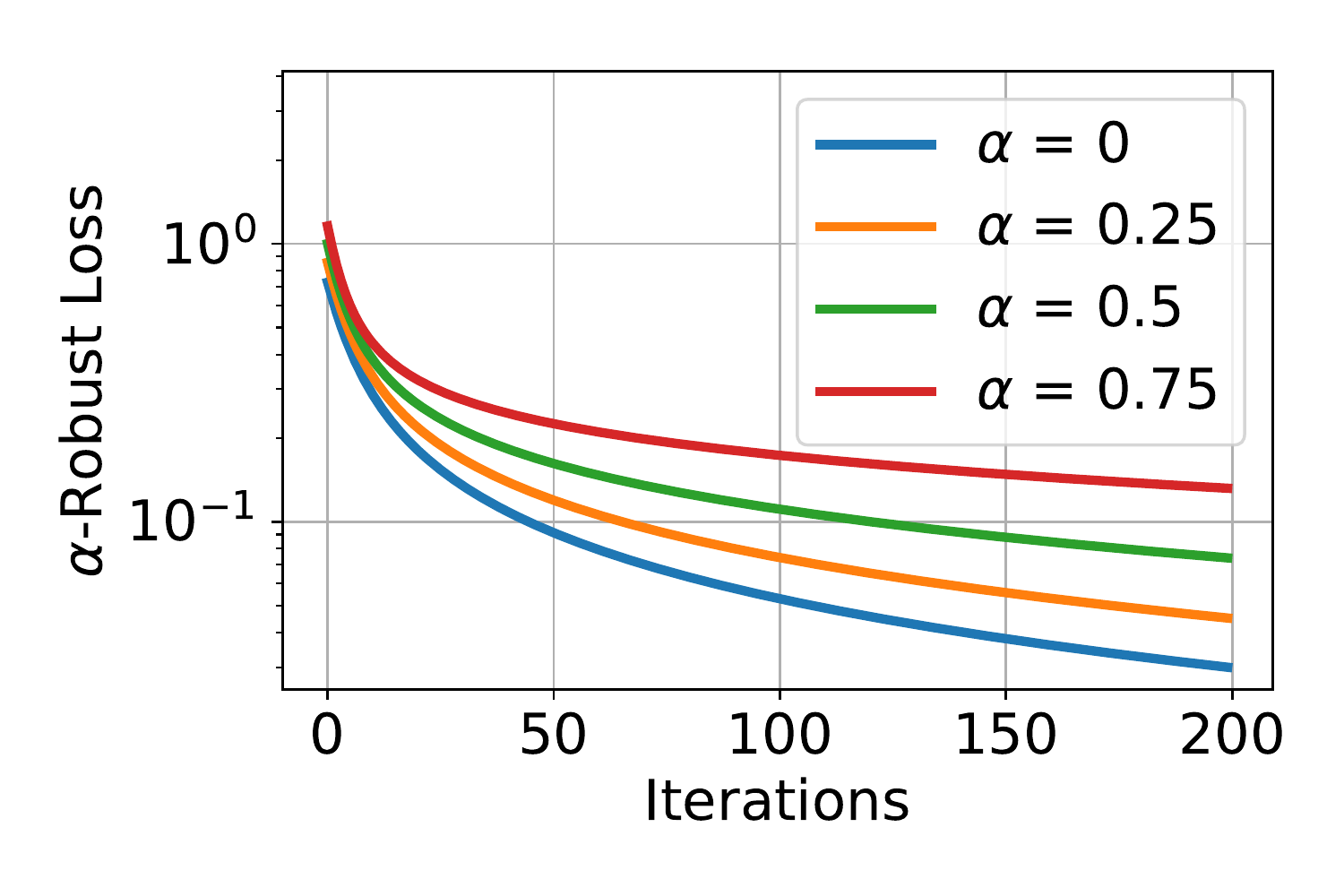}
            \caption{$\alpha$-Robust Loss}
        \end{subfigure}
        \caption{Results for $\alpha$-GD on the synthetic dataset.}
        \label{fig:synth_gd}
    \end{figure}    

    \begin{figure}[H]
        \centering
        \begin{subfigure}[b]{0.32\textwidth}
            \includegraphics[width=\linewidth]{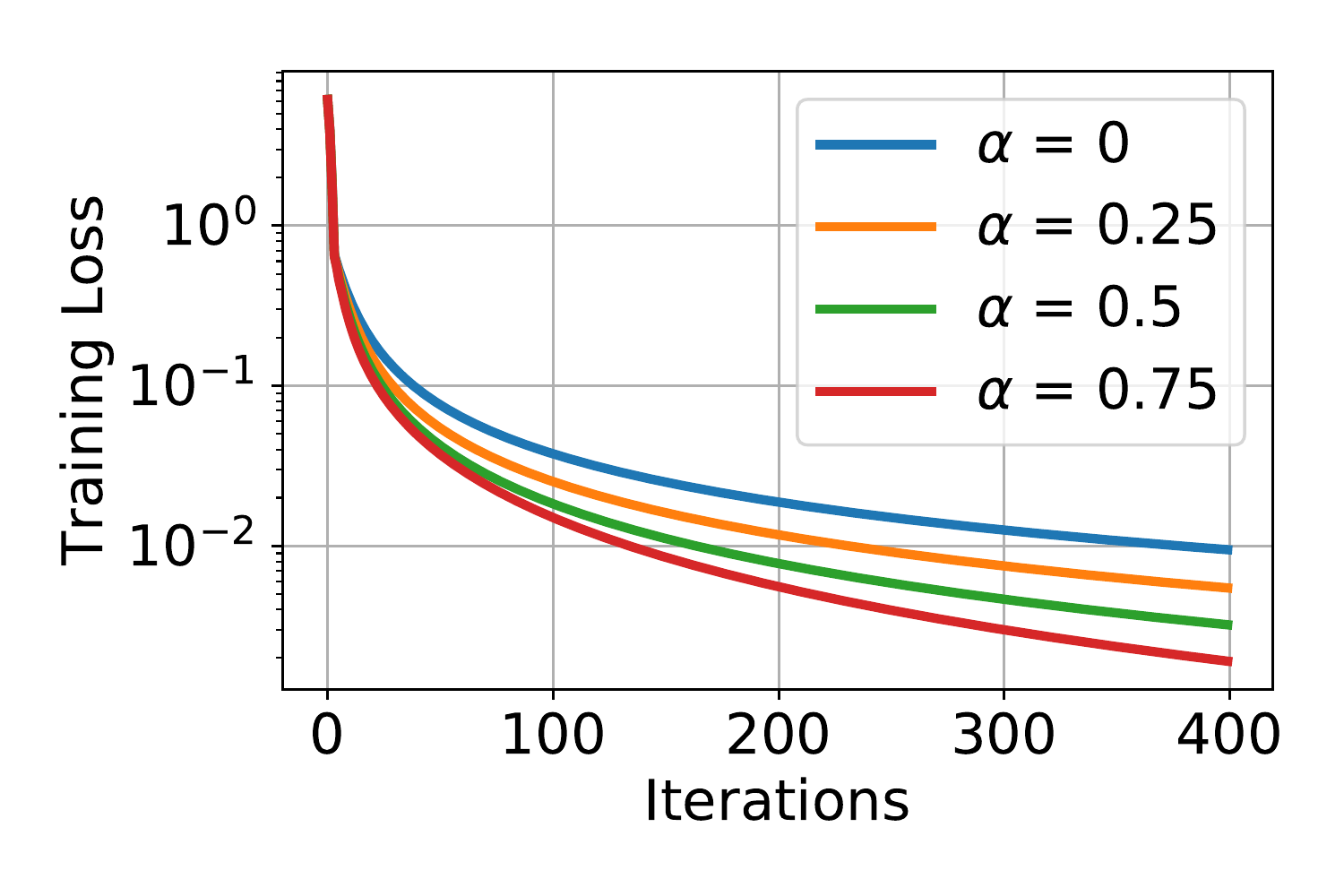}
            \caption{Training Loss}
        \end{subfigure}
        ~ 
        \begin{subfigure}[b]{0.32\textwidth}
            \includegraphics[width=\linewidth]{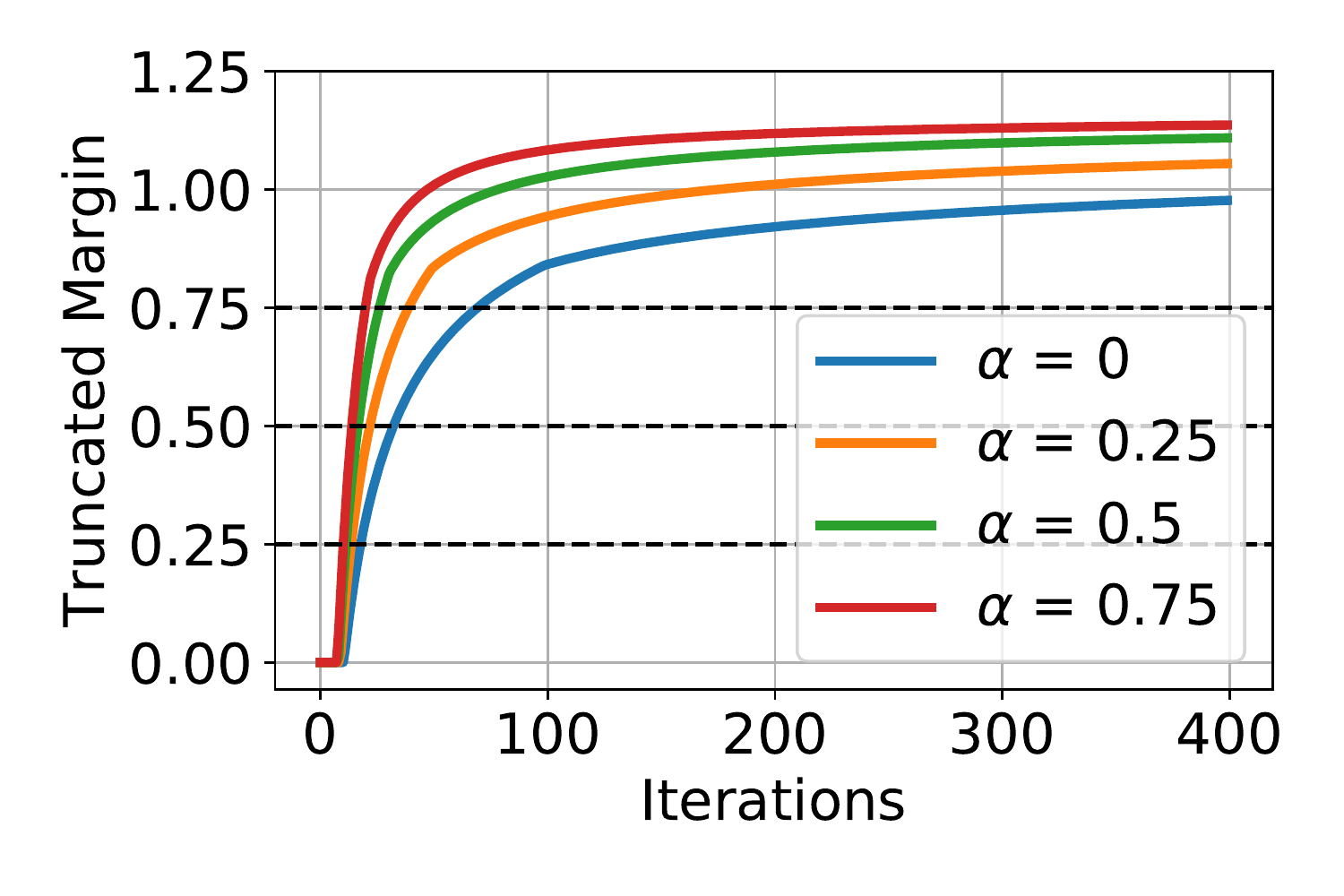}
            \caption{Truncated Margin}
        \end{subfigure}
        ~
        \begin{subfigure}[b]{0.32\textwidth}
            \includegraphics[width=\linewidth]{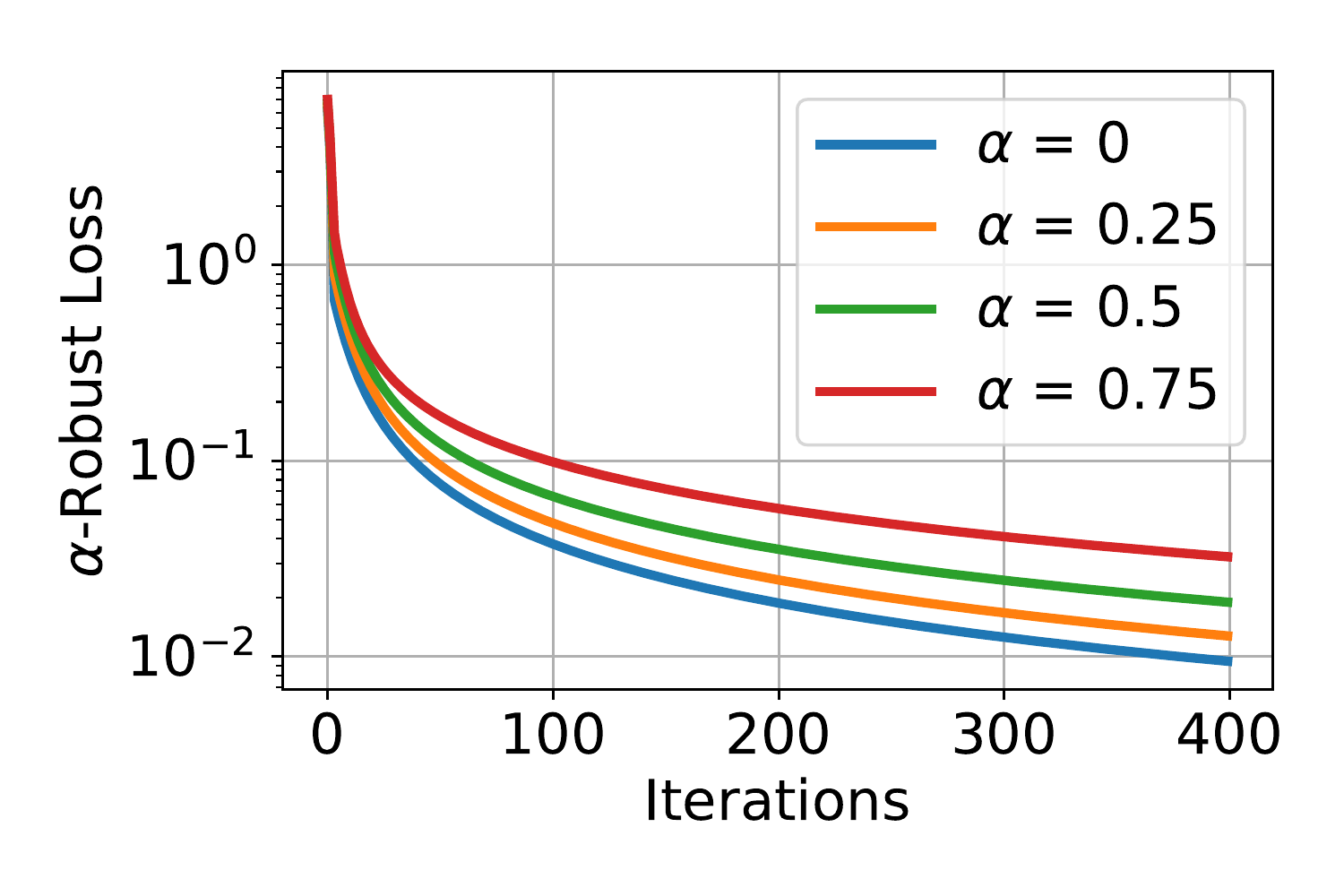}
            \caption{$\alpha$-Robust Loss}
        \end{subfigure}
        \caption{Results for $\alpha$-GD on the Iris dataset.}
        \label{fig:iris_gd}
    \end{figure}

\begin{figure}[H]
    \centering
    \begin{subfigure}[b]{0.32\textwidth}
        \includegraphics[width=\linewidth]{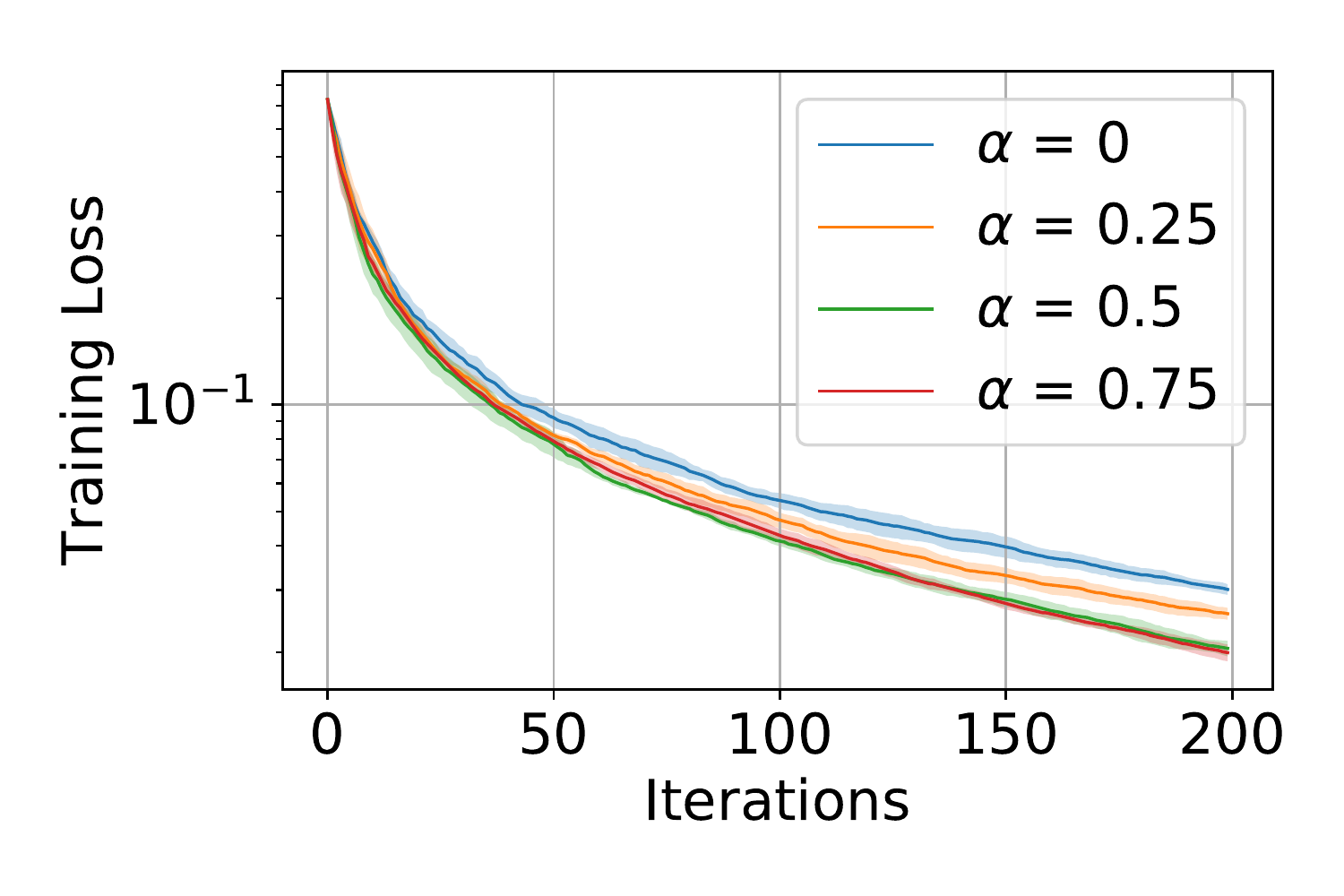}
        \caption{Training Loss}
    \end{subfigure}
    ~ 
    \begin{subfigure}[b]{0.32\textwidth}
        \includegraphics[width=\linewidth]{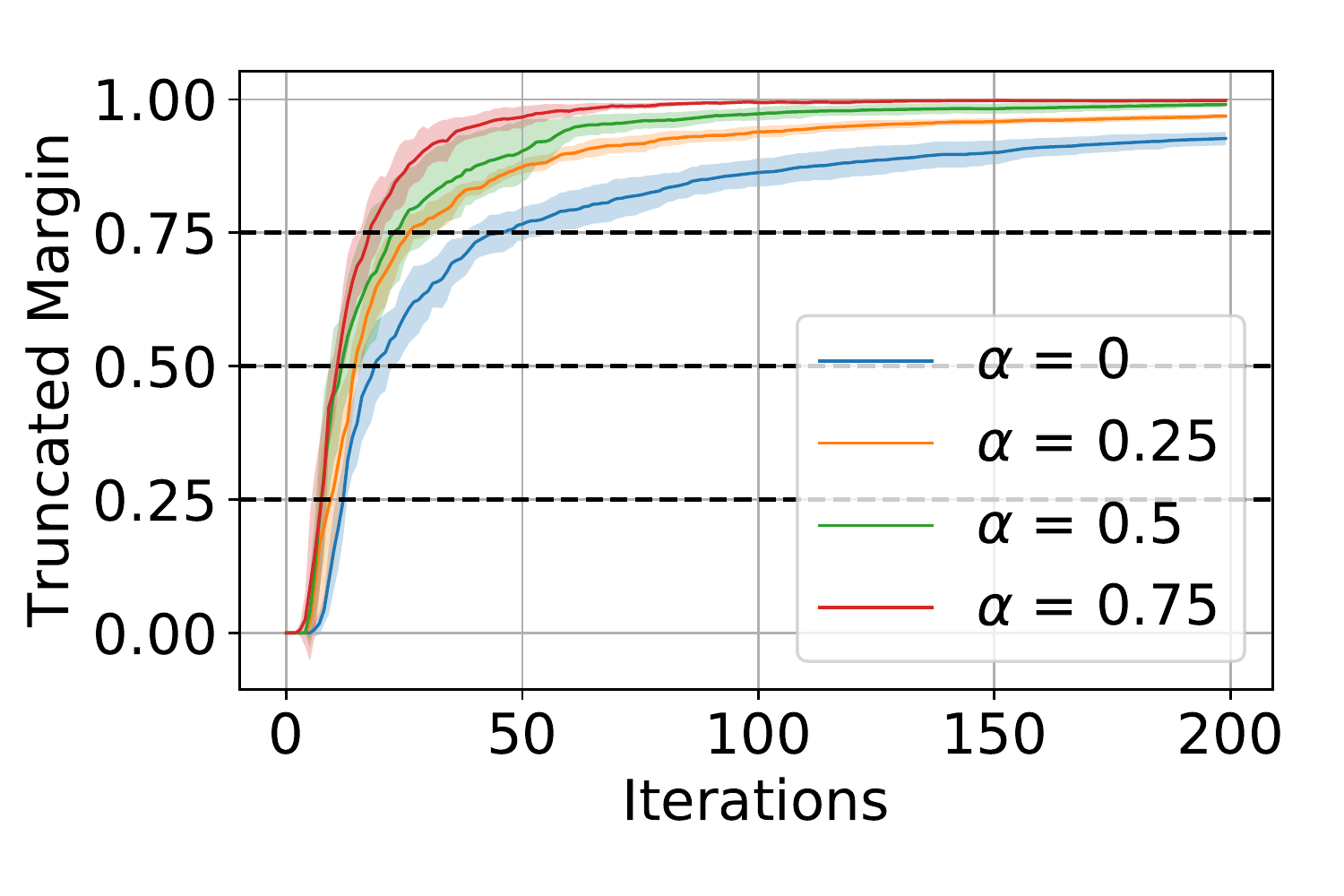}
        \caption{Truncated Margin}
    \end{subfigure}
    ~
    \begin{subfigure}[b]{0.32\textwidth}
        \includegraphics[width=\linewidth]{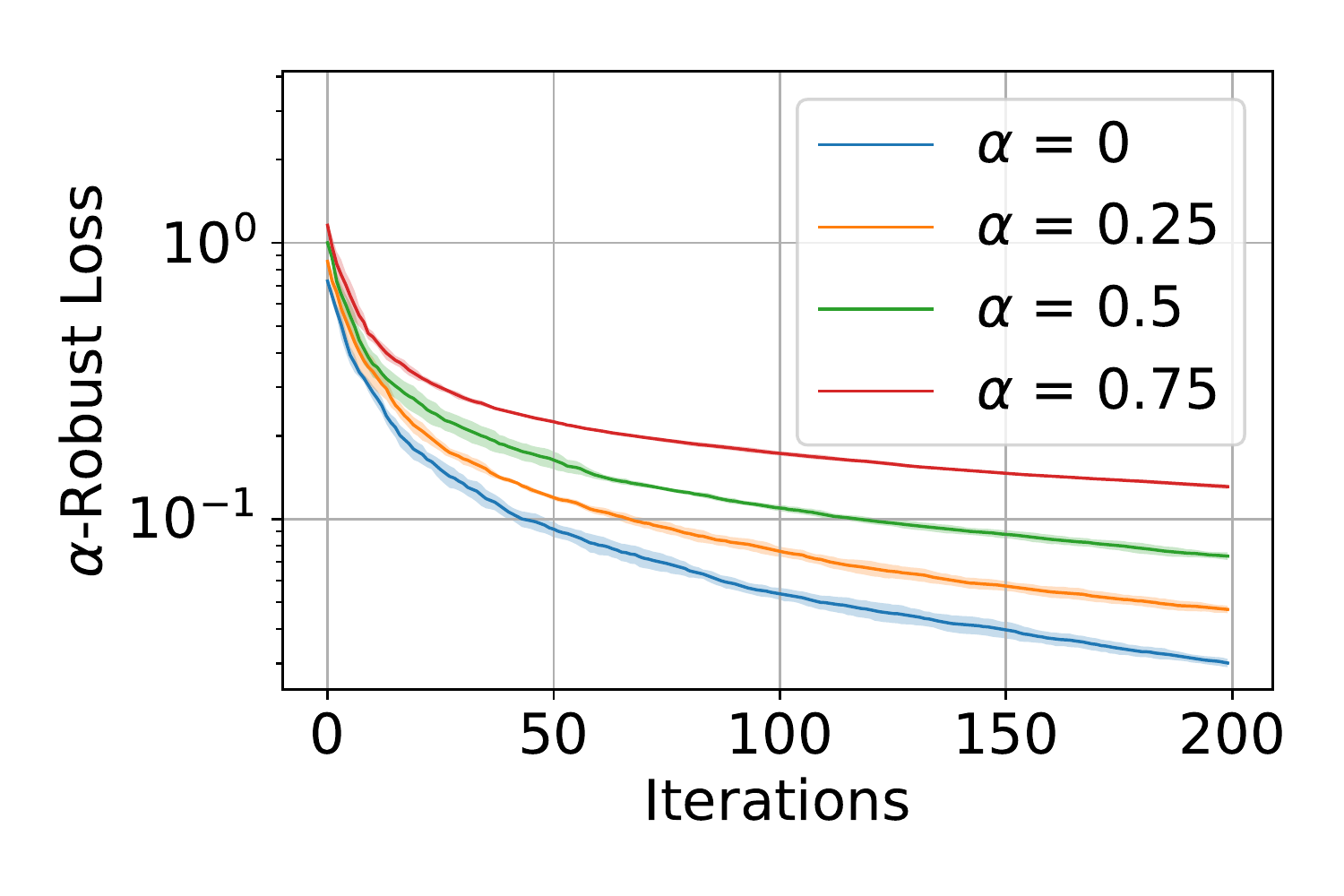}
        \caption{$\alpha$-Robust Loss}
    \end{subfigure}
    \caption{Results for $\alpha$-SGD on the synthetic dataset.}
    \label{fig:synth_sgd}
\end{figure}

\begin{figure}[H]
    \centering
    \begin{subfigure}[b]{0.32\textwidth}
        \includegraphics[width=\linewidth]{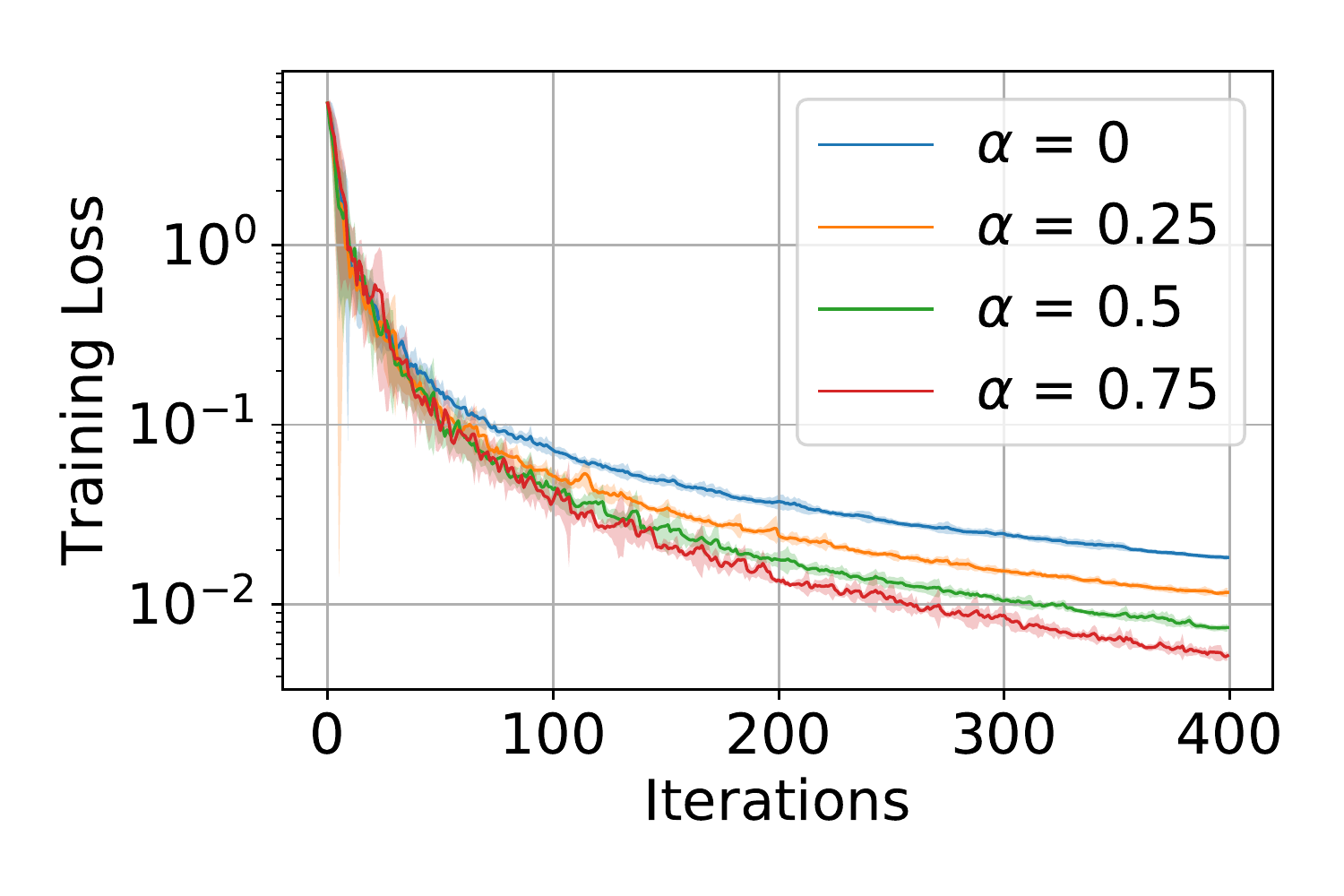}
        \caption{Training Loss}
    \end{subfigure}
    ~ 
    \begin{subfigure}[b]{0.32\textwidth}
        \includegraphics[width=\linewidth]{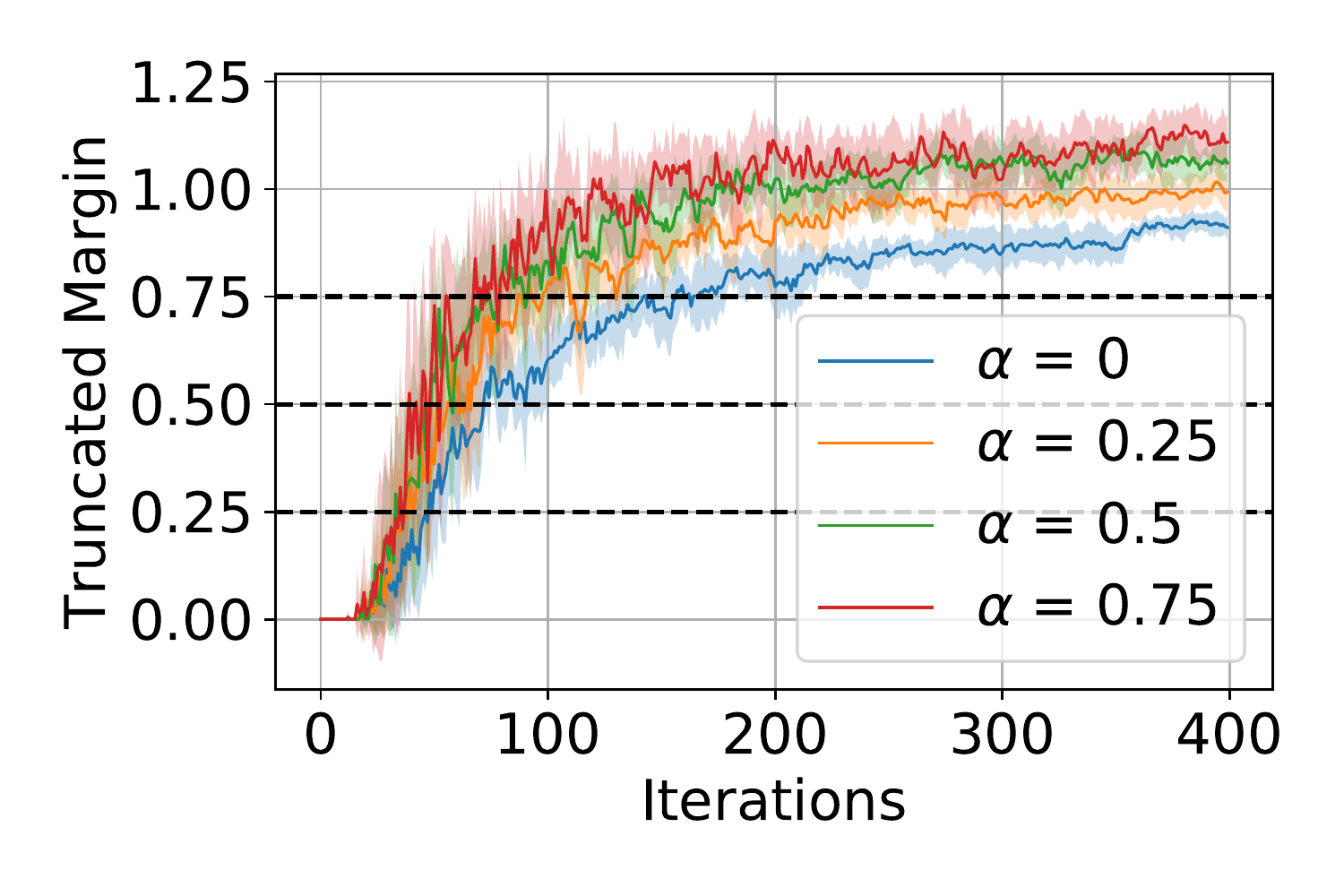}
        \caption{Truncated Margin}
    \end{subfigure}
    ~
    \begin{subfigure}[b]{0.32\textwidth}
        \includegraphics[width=\linewidth]{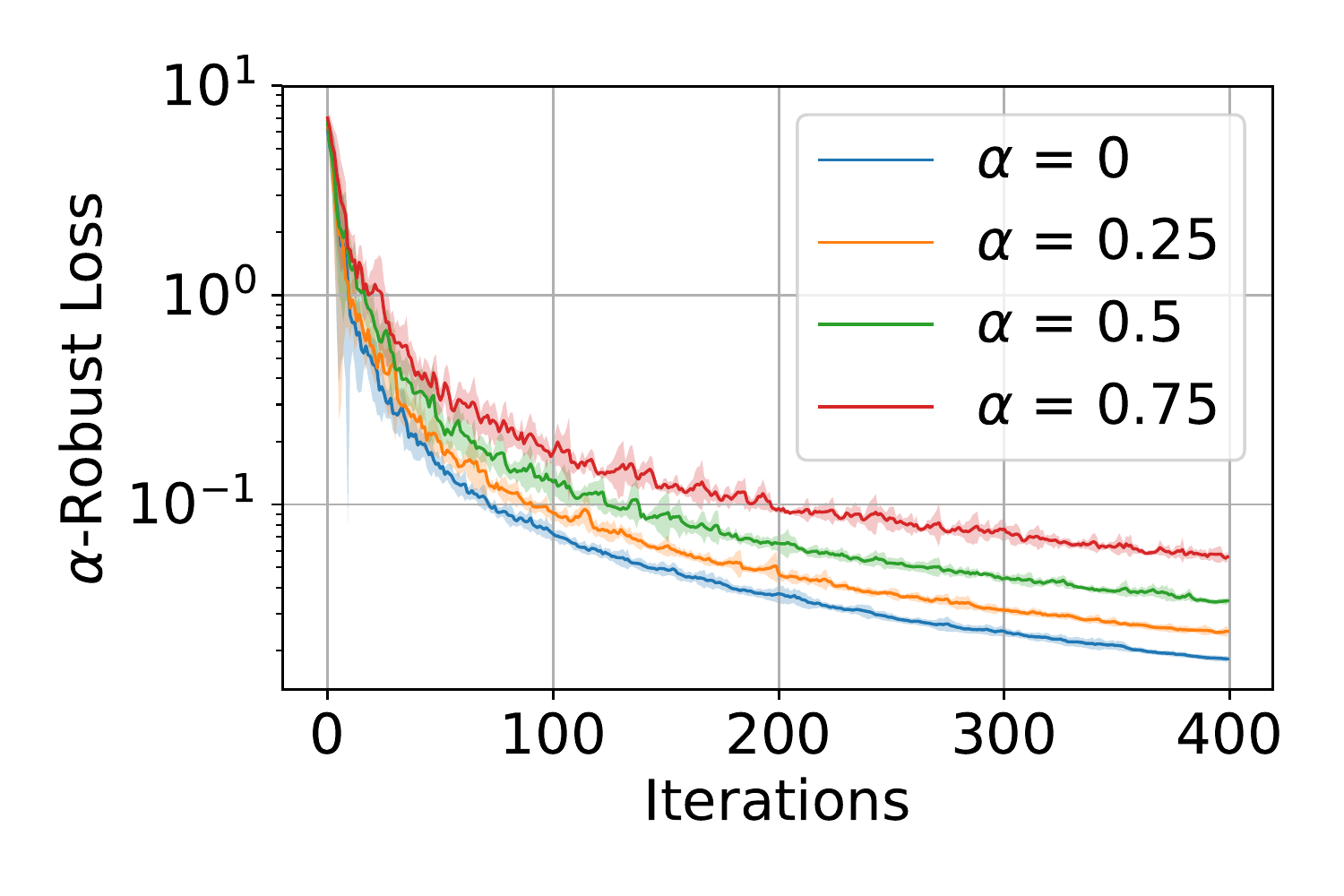}
        \caption{$\alpha$-Robust Loss}
    \end{subfigure}
    \caption{Results for $\alpha$-SGD on the Iris dataset.}
    \label{fig:iris_sgd}
\end{figure}

\paragraph*{Discussion.}
    
    The results for $\alpha$-GD on the synthetic dataset and the Iris dataset are given in Figures \ref{fig:synth_gd} and \ref{fig:iris_gd}, while the results for $\alpha$-SGD on the synthetic dataset and the Iris dataset are given in Figures \ref{fig:synth_sgd} and \ref{fig:iris_sgd}. The plots corroborate our theory for $\alpha$-GD and $\alpha$-SGD. Moreover, the results for these two methods are extremely similar on both datasets. The most notable difference is that for the margin plot on the Iris dataset, the margin for $\alpha$-SGD resembles a noisy version of the margin plot for $\alpha$-GD. This is expected, as $\alpha$-SGD focuses only on one example at a time, potentially decreasing the margin at other points, while $\alpha$-GD computes adversarial examples for every element of the training set at each iteration.

    We see that $\alpha$-GD and $\alpha$-SGD quickly attain margin $\alpha$ on both datasets, and once they do their margin convergence slows down. Moreover, the larger $\alpha$ is, generally the larger the achieved margin is at any given iteration. Generally GD and SGD take much longer to obtain a given margin than $\alpha$-GD and $\alpha$-SGD. As reflected by previous work on the implicit bias of such methods \cite{gunasekar2018characterizing, gunasekar2018implicit, nacson2019convergence, nacson2018stochastic, soudry2018implicit}, we see a logarithmic convergence to the max-margin in both settings. One interesting observation is that $\alpha$-GD and $\alpha$-SGD minimize the training loss faster than standard GD and SGD, despite not directly optimizing this loss function. Finally, we see that for $\alpha \in \{0,0.25,0.5,0.75\}$, $\alpha$-GD and $\alpha$-SGD generally seem to exhibit a $\tmO(1/t)$ convergence rate for $L_{\alpha}$. However, the convergence rate seems to increase proportionally to $\alpha$. Intuitively, $L_{\alpha}$ becomes more difficult to minimize as $\alpha$ increases.

\section{Conclusion}

	In this paper, we analyzed adversarial training on separable data. We showed that while generic adversarial training and standard gradient-based methods may each require exponentially many iterations to obtain large margin, their combination exhibits a strong bias towards models with large margin that translates to fast convergence to these robust solutions. There are a large number of possible extensions. First, we would like to understand the behavior of these methods on non-separable data, especially with regard to $L_{\rob}$. Second, we would like to generalize our results to 1) multi-class classification, and 2) regression tasks. While the former is relatively straightforward, the latter will necessarily require new methods and perspectives, due to differences in the behavior of $\ell_{\rob}$ when $\ell$ is a loss function for classification or regression.

\bibliographystyle{unsrt}
\bibliography{adv_learning_arxiv}

\newpage

\begin{appendix}

	\section{Proof of Theorem \ref{thm:adv_main}}\label{sec:fund_proof}

	Recall that in Algorithm \ref{alg:adv_train_2}, at each iteration $t$ the learner selects $S_t \subseteq S$ and then computes the adversarial examples in \eqref{eq:adv_train} for each $(x,y) \in S_t$ at the current model $w_t$. This set of adversarial examples is defined as $S'_t$. We will assume throughout that $S_t = S$, as this only diminishes the adversary's ability to obtain small margin.

	Define $S'_{< t} = \cup_{i=0}^{t-1}~S'_i$. Let $\mS^{d-1}$ denote the unit sphere in $\real^d$. For any $\epsilon \in \real$, we define $\mC(d,\epsilon)$ to be the collection of subsets of $\mS^{d-1}$ of maximal size such that any two distinct elements $w,v$ satisfy $\langle w,v \rangle < \theta$; these subsets are referred to as {\it spherical codes}. We let $N(d,\theta)$ denote the size of any $C \in \mC(d,\theta)$. For $\epsilon \leq \alpha$, we will relate the number of times an adversary can find a classifier with margin $\epsilon$ to $N(d,\epsilon/\alpha)$. In the following, we will let $e_1 \in \real^d$ be the vector with first coordinate of $1$, and remaining coordinates of $0$. Without loss of generality, we can assume the unit vector $v$ in the statement of Theorem \ref{thm:adv_main} satisfies $v = e_1$.

	\begin{lemma}\label{lem:adv1}
		Let $S = \{(\gamma e_1,1), (-\gamma e_1,-1)\}$. For any $\epsilon \leq \alpha$, there is an admissible sequence $\{w_t\}_{t \geq 0}$ such that $\margin_S(w_t) \leq \epsilon$ for all $t$ satisfying
		$$t \leq N\left(d-1,\frac{\epsilon(\gamma^2-\epsilon\alpha)}{\alpha(\gamma^2-\epsilon^2)}\right).$$
	\end{lemma}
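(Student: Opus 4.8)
\bigskip

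The plan is to construct an explicit admissible sequence by having the ERM solver (Player 2) draw its separators from a large spherical code on the sphere $\mS^{d-2}$ orthogonal to $e_1$. First I would record the geometry of the problem. The two true points are $\pm\gamma e_1$, and at any model $w$ the adversarial perturbation of $(\gamma e_1, 1)$ is $\delta^* = -\alpha \overline{w}$ where $\overline{w}=w/\|w\|$ (since $\ell$ is increasing in $-y\langle w,x\rangle$, Danskin / Lemma \ref{lem:linear_robust_loss} gives $\delta^* = -\alpha y \overline{w}$), so the adversarial examples added at step $t$ are $(\gamma e_1 - \alpha \overline{w}_t, 1)$ and $(-\gamma e_1 + \alpha \overline{w}_t, -1)$. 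Thus after $t$ steps $S \cup S'_{<t}$ consists of $\{\pm(\gamma e_1 - \alpha \overline{w}_j)\}_{j<t}$ together with $\pm \gamma e_1$. A candidate separator $w_{t}$ at step $t$ must satisfy $\langle w_t, \gamma e_1 - \alpha \overline{w}_j\rangle > 0$ for all $j<t$ (the negative-label constraints are symmetric), and to have $\margin_S(w_t)\le \epsilon$ it suffices that the component of $w_t$ along $e_1$ be small relative to its norm, i.e. $\langle \overline{w}_t, e_1\rangle \le \epsilon/\gamma$.

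\bigskip

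Next I would make the construction precise. Write each proposed model in the form $\overline{w}_j = \sigma e_1 + \tau u_j$ with $u_j \in \mS^{d-2}$ (unit, orthogonal to $e_1$), $\sigma,\tau\ge 0$, $\sigma^2+\tau^2=1$, where $\sigma$ is chosen to be exactly the threshold compatible with margin $\epsilon$: take $\sigma = \epsilon/\gamma$, so that $\margin_S(\overline{w}_j) = \gamma\sigma = \epsilon$ and then $\tau = \sqrt{1-\epsilon^2/\gamma^2} = \sqrt{\gamma^2-\epsilon^2}/\gamma$. The only freedom left is the choice of the directions $u_j \in \mS^{d-2}$. The feasibility constraint $\langle w_t, \gamma e_1 - \alpha\overline{w}_j\rangle>0$ for $j<t$, after substituting $\overline{w}_t = \sigma e_1 + \tau u_t$ and $\overline{w}_j = \sigma e_1 + \tau u_j$ and expanding (using $\langle e_1,u_j\rangle=0$), becomes a linear inequality in $\langle u_t,u_j\rangle$:
\[
\gamma\sigma - \alpha\sigma^2 - \alpha\tau^2 \langle u_t, u_j\rangle > 0
\quad\Longleftrightarrow\quad
\langle u_t, u_j\rangle < \frac{\gamma\sigma - \alpha\sigma^2}{\alpha\tau^2}
= \frac{\epsilon(\gamma^2 - \epsilon\alpha)}{\alpha(\gamma^2-\epsilon^2)},
\]
which is exactly the angle threshold in the statement. (One checks $\epsilon\le\alpha$ makes this threshold lie in $(-1,1)$ so the code is nontrivial.) Now let $\{u_1,\dots,u_N\}$, $N = N(d-1,\theta)$ with $\theta = \epsilon(\gamma^2-\epsilon\alpha)/(\alpha(\gamma^2-\epsilon^2))$, be a maximal spherical code in $\mS^{d-2}$: by definition any two distinct code points satisfy $\langle u_i,u_j\rangle<\theta$. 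Define the admissible sequence by $w_t := \overline{w}_t = \sigma e_1 + \tau u_{t+1}$ for $0\le t< N$ (and extend arbitrarily afterwards, e.g. to the max-margin direction). Since all the constraints $\langle u_t, u_j\rangle<\theta$ for $j<t\le N$ hold by the code property, each $w_t$ is a genuine linear separator of $S\cup S'_{<t}$, so the sequence is admissible, and each has $\margin_S(w_t) = \gamma\sigma = \epsilon \le \epsilon$; in fact $\le\epsilon$ as required. This gives $\margin_S(w_t)\le\epsilon$ for all $t\le N(d-1,\theta)$, proving the lemma.

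\bigskip

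The main obstacle — and the part needing the most care — is verifying that each $w_t$ really is a valid \emph{linear separator} of the \emph{entire} accumulated set $S\cup S'_{<t}$, not merely of the freshly added points, and checking all the sign/normalization bookkeeping: that the adversarial perturbation direction is indeed $-\alpha y\overline{w}_t$, that the $\pm$-symmetric (negative-label) constraints reduce to the same inequality, that $\|w_j\|\neq 0$ so $\overline{w}_j$ is well-defined along the way (it is, since its $e_1$-component $\sigma=\epsilon/\gamma>0$), and that the threshold $\theta$ is genuinely $<1$ when $\epsilon\le\alpha$ so that a nontrivial code exists. Everything else is the routine Cauchy–Schwarz / inner-product expansion sketched above. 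A final remark to include: combined with the standard fact that random spherically symmetric points give $N(d-1,\theta)=\Omega(\exp(c(d-1)(1-\theta)^2))$-ish bounds (or more precisely $N(d-1,\theta)\ge \tfrac12\exp(c(d-1)\theta'^2)$ for an appropriate constant), substituting $\theta = \epsilon(\gamma^2-\epsilon\alpha)/(\alpha(\gamma^2-\epsilon^2))$ and simplifying with $\epsilon\le\alpha<\gamma$ yields the exponential bound $t\le \tfrac12\exp(c(d-1)\epsilon^2/(\gamma+\epsilon)^2)$ of Theorem \ref{thm:adv_main}.
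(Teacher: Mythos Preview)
Your proposal is correct and follows essentially the same approach as the paper: both construct the admissible sequence by taking unit-norm classifiers of the form $(\epsilon/\gamma)\,e_1 + \sqrt{1-\epsilon^2/\gamma^2}\,u_t$ with $\{u_t\}$ a spherical code in $\mS^{d-2}$ at the stated threshold, and verify feasibility via the same inner-product expansion $\gamma\sigma - \alpha\sigma^2 - \alpha\tau^2\langle u_t,u_j\rangle>0$. The only harmless slip is your indexing from $t=0$; since the algorithm initializes $w_0=0$, the constructed classifiers should be indexed from $t=1$ (as the paper does), with the $t=0$ adversarial step contributing nothing new under the convention $\overline{0}=0$.
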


	\begin{proof}
		Let $x_1 = \gamma e_1, x_2 = -\gamma e_1 \in \real^d$. Note that $S$ has max-margin $\gamma$.
		Fix $\epsilon \leq \alpha < \gamma$ and let
		$$\{v_1,\ldots,v_m\} \in \mC\left(d-1,\frac{\epsilon(\gamma^2-\epsilon\alpha)}{\alpha(\gamma^2-\epsilon^2)}\right).$$
		Let $a = \epsilon/\gamma$. For $1 \leq t \leq m$, define $w_t$ by
		$$w_t^T = [a~~(\sqrt{1-a^2})v_t^T].$$
		That is, the first coordinate of $w_t$ is $a$, while its remaining $d-1$ coordinates are given by $\sqrt{1-a^2}v_t$. Since $\|v_t\| = 1$, we have $\|w_t\| = 1$. We will show that each $w_t$ is admissible and has margin at most $\epsilon$ with respect to $S$.

		For any $t$, we have
		\begin{align*}
		\langle w_t, x_1 \rangle = \gamma a = \epsilon > 0\\
		\langle w_t, x_2 \rangle = -\gamma a = -\epsilon < 0.\end{align*}

		Thus, each $w_t$ correctly classifies $S$. Moreover, since $\|w_t\| = 1$, its margin at $S$ is $\epsilon$. We now must show that each $w_t$ correctly classifies $S'_{< t}$. 

		Recall that we assume $\ell(w,x,y)$ is of the form $f(-y\langle w,x \rangle)$ where $f$ is a monotonically increasing function. This implies that given $w$, $\alpha > 0$, and $(x,y)$, $\delta = -y\alpha \frac{w}{\|w\|}$ satisfies \eqref{eq:adv_train}. Therefore, for $t \geq 0$,
		\begin{align*}
		S'_{< t} &= \bigcup_{j=0}^{t-1}\left\{\left(x_i-\alpha y_i \frac{w_j}{\|w_j\|}, y_i\right)\right\}_{i=1}^2\\
		&= \{ (\gamma e_1-\alpha w_i, 1)~|~0 \leq i \leq t-1\} \cup \{ (-\gamma e_1 + \alpha w_j, -1)~|~ 0 \leq j \leq t-1\}.\end{align*}

		Given $t \geq 1$ and $i < t$, and by construction of the $v_i$, we have
		\begin{align*}
		\langle w_t, \gamma e_1 -\alpha w_i\rangle &= \langle w_t, \gamma e_1 \rangle - \alpha \langle w_t, w_i\rangle\\
		&= \epsilon -\alpha(a^2 + (1-a)^2\langle v_t, v_i\rangle)\\
		&= \epsilon - \dfrac{\alpha\epsilon^2}{\gamma^2} -\alpha\left(1-\frac{\epsilon^2}{\gamma^2}\right)\langle v_t,v_i\rangle\\
		& > \epsilon - \dfrac{\alpha\epsilon^2}{\gamma^2} -\alpha \left(1-\frac{\epsilon^2}{\gamma^2}\right)\frac{\epsilon(\gamma^2-\epsilon\alpha)}{\alpha(\gamma^2-\epsilon^2)}\\
		& = 0.\end{align*}

		An analogous computation shows that $\langle w_t, -\gamma e_1 + \alpha w_i\rangle < 0$. Thus, $w_t$ linearly separates $S'_{<t}$, and has margin $\epsilon$ at $S$, proving the desired result.
	\end{proof}

	While finding exact values of $N(d,\epsilon)$ is difficult \cite{cohn2014sphere}, there are straightforward lower bounds. In particular, we have the following lemma.

	\begin{lemma}\label{lem:adv2}
	Let $d > 1, 0 < \epsilon < 1$. There is some constant $c$ such that $N(d,\epsilon) \geq \frac{1}{2}\exp(cd\epsilon^2)$.
	\end{lemma}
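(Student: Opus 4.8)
The goal is to lower-bound $N(d,\epsilon)$, the maximal size of a set of unit vectors in $\real^d$ with pairwise inner product strictly less than $\epsilon$. This is a classical probabilistic/volume argument. The plan is to use the probabilistic method: sample points uniformly and independently at random on the sphere $\mS^{d-1}$, and argue that with positive probability a large fraction of them form a valid spherical code.

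First I would recall the standard concentration estimate for the inner product of two independent uniform random vectors on $\mS^{d-1}$: for $u,v$ drawn independently and uniformly, $\PP[\langle u,v\rangle \geq \epsilon]$ equals the normalized area of a spherical cap of angular radius $\arccos(\epsilon)$, which is at most $\exp(-(d-1)\epsilon^2/2)$ (or, with a cruder but still exponential bound, at most $e^{-c'd\epsilon^2}$ for a universal constant $c'$). One clean way is to note that if $g$ is a standard Gaussian in $\real^d$ then $\langle u, g/\|g\|\rangle$ has the same law as $u_1$, the first coordinate, and use the known density $\propto (1-s^2)^{(d-3)/2}$ of $u_1$, from which the cap bound follows by integration; alternatively just invoke the sub-Gaussian tail of a coordinate of a random unit vector.

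Next I would run the deletion (alteration) method. Fix $N = \lceil \exp(cd\epsilon^2) \rceil$ for a suitable small constant $c$, and draw $v_1,\dots,v_N$ i.i.d.\ uniform on $\mS^{d-1}$. The expected number of ``bad'' pairs $(i,j)$, $i<j$, with $\langle v_i,v_j\rangle \geq \epsilon$ is at most $\binom{N}{2}\exp(-(d-1)\epsilon^2/2) \leq \tfrac{N^2}{2}\exp(-(d-1)\epsilon^2/2)$, and choosing $c$ small enough relative to the constant in the exponent makes this at most $N/2$. Hence there is a realization with at most $N/2$ bad pairs; delete one endpoint from each bad pair to obtain a subset of size at least $N - N/2 = N/2$ with all pairwise inner products strictly below $\epsilon$. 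This gives $N(d,\epsilon) \geq N/2 \geq \tfrac12 \exp(cd\epsilon^2)$, as claimed. (A minor bookkeeping point: one should be slightly careful about strict versus non-strict inequalities, but since the cap probability bound applies to $\langle u,v\rangle \ge \epsilon$ and the surviving vectors have inner products $< \epsilon$, this is exactly what $\mC(d,\epsilon)$ requires; continuity of the distribution also means the $\ge$ vs $>$ distinction costs nothing.)

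The main obstacle is simply pinning down the constant in the spherical-cap tail bound so that the deletion step goes through cleanly — i.e.\ making sure the exponent $c$ in the final statement is genuinely universal and independent of $d$ and $\epsilon$ (for $0<\epsilon<1$, $d>1$). This is routine: the bound $\PP[\langle u,v\rangle \geq \epsilon] \leq e^{-(d-1)\epsilon^2/2}$ is standard, and for $d>1$ one has $(d-1) \geq d/2$, so $e^{-(d-1)\epsilon^2/2} \leq e^{-d\epsilon^2/4}$; taking $c < 1/4$ (say $c = 1/8$) makes $\tfrac{N^2}{2}e^{-d\epsilon^2/4} = \tfrac12 e^{2cd\epsilon^2 - d\epsilon^2/4} \leq \tfrac12 \leq N/2$, which closes the argument. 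Everything else is the textbook alteration method and requires no new ideas.
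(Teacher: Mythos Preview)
Your argument is correct, but it takes a different route from the paper's. The paper proves the bound via the Johnson--Lindenstrauss lemma: it starts with an orthonormal basis $\{v_1,\dots,v_q\}$ of $\real^q$, applies a random linear map $A:\real^q\to\real^d$ satisfying the JL distortion guarantee with $\tau=\epsilon/2$, and checks that the normalized images $w_i=Av_i/\|Av_i\|$ satisfy $\langle w_i,w_j\rangle\le\tau/(1-\tau)\le\epsilon$; the union bound over the $O(q^2)$ vectors $v_i,\,v_i\pm v_j$ yields $q\ge\tfrac12\exp(cd\epsilon^2)$. Your approach instead samples points directly on $\mS^{d-1}$, bounds the bad-pair probability by the spherical-cap tail, and runs the alteration/deletion method. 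Both are standard and yield the same exponential-in-$d\epsilon^2$ bound. Your route is arguably more elementary and self-contained, since the cap bound is a one-line integration while JL is itself a concentration-plus-union-bound argument; the paper's route has the minor advantage that the inner-product control is deterministic once the JL event holds, avoiding the deletion step altogether. The small bookkeeping wrinkle you flag (handling the ceiling and the regime where $d\epsilon^2$ is tiny) is genuinely minor: when $\tfrac12\exp(cd\epsilon^2)<1$ the claim is vacuous since any single unit vector is a valid code.
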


	\begin{proof}
		Fix some integer $q \geq k$, and and let $\{v_i\}_{i=1}^q$ be an orthonormal basis of $\real^q$. By the distributional Johnson-Lindenstrauss lemma, there is some distribution $\mD$ over $\real^{d \times q}$ such that for all $x \in \real^q$ and $A \sim \mD$,
		\begin{align*}
		\PP\left(| \|Ax\|^2 - \|x\|^2 | > \tau \right) &\leq 2\exp(-(\tau^2-\tau^3)d/4)).\end{align*}

		Setting $\tau = \epsilon/2$ and taking a union bound over the $q^2$ vectors of the form $v_i, v_i + v_j, v_i - v_j$ (for $ i \neq j$), this implies that there is a universal constant $c$ such that if $q \leq \frac{1}{2}\exp(cd\epsilon^2)$, then there is some $A \in \real^{d \times k}$ such that for $1 \leq i, j \leq q, i \neq j$,
		$$(1-\tau)\|v_i\|^2 \leq \|Av_i\|^2 \leq (1+\tau)\|v_i\|^2$$
		$$(1-\tau)\|v_i \pm v_j\|^2 \leq \|A(v_i\pm v_j)\|^2 \leq (1+\tau)\|v_i \pm v_j\|^2.$$

		Taking this $A$ and letting $w_i = Av_i/\|Av_i\|$, we have that for $i \neq j$,
		\begin{align*}
		\langle w_i, w_j\rangle &= \dfrac{\|A(v_i+v_j)\|^2 - \|A(v_i-v_j)\|^2}{4\|Av_i\|\|Av_j\|}\\
		&\leq \dfrac{(1+\tau)\|v_i+v_j\|^2 - (1-\tau)\|v_i-v_j\|^2}{4(1-\tau)}\\
		&= \frac{\tau}{1-\tau}\\
		&\leq \epsilon.\end{align*}

		Here we used the fact that the $v_i$ are orthonormal and that $\tau \leq \epsilon/2$. Hence, the $q$ vectors $\{w_i\}_{i=1}^q$ are all unit vectors such that for $i \neq j$, $\langle w_i, w_j\rangle \leq \epsilon$.
	\end{proof}	

	Theorem \ref{thm:adv_main} then follows directly by combining Lemmas \ref{lem:adv1} and Lemma \ref{lem:adv2} and using the fact that $\alpha < \gamma$.

\section{Proof of Lemma \ref{lem:linear_robust_loss}}\label{sec:main_lem_proof}

	\begin{proof}[Proof of (1)]
		Fix $w, x \in \real^d$ and $y \in \{\pm 1\}$. Suppose $\|\delta\| \leq \alpha$. Since $f$ is monotonically increasing,
		\begin{align*}
			\ell(w,x+\delta,y) &= f(-y\langle w, x+\delta \rangle)\\
			&= f(-y\langle w, x\rangle + \langle w, -y\delta \rangle)\\
			&\leq f(-y\langle w,x \rangle + \|w\|\|-y\delta\|)\\
			&= f(-y\langle w,x\rangle + \alpha\|w\|).
		\end{align*}
		Taking a supremum over both sides, we derive the desired result.
	\end{proof}

	\begin{proof}[Proof of (2)]
		Since $f$ is differentiable, $\ell(w,x,y) = f(-y\langle w,x\rangle)$ is differentiable. By Proposition \ref{prop:danskin}, $\ell_{\rob}(w,x,y)$ is subdifferentiable. By (1), we find
		$$\ell(w,x-y\alpha \overline{w},y) = f(-y\langle w, x \rangle + \alpha\|w\|) = \ell_{\rob}(w,x,y).$$
		Therefore, letting $\delta' = -y\alpha\overline{w}$, we find
		$$\delta'\in \argmax_{\|\delta\|\leq \alpha}\ell(w,x+\delta,y).$$
		By Proposition \ref{prop:danskin}, this implies $\nabla \ell(w,x+\delta',y) \in \partial \ell_{\rob}(w,x,y)$ where the gradient is taken by treating $\delta'$ as constant w.r.t. $w$. By direct computation,
		$$\nabla \ell(w,x+\delta',y) = f'(-y\langle w,x\rangle + \alpha\|w\|)(-yx + \alpha\overline{w}).$$
	\end{proof}

	\begin{proof}[Proof of (3)]
		Suppose $w \neq 0$. Note that by the Cauchy-Schwarz inequality, if $\|\delta\| \leq \alpha$, then $\langle w, -y\delta\rangle \leq \alpha\|w\|$ with equality if and only if $\delta = -y\alpha\overline{w}$. Since $f$ is strictly increasing, if $\delta \neq -y\alpha\overline{w}$ then
		$$\ell(w,x+\delta,y) = f(-y\langle w, x\rangle + \langle w,-y\delta\rangle) < f(-y\langle w,x\rangle + \alpha\|w\|).$$
		Therefore, $\delta' = -y\alpha\overline{w}$ is the unique maximizer of $\ell(w,x+\delta,y)$ subject to $\|\delta\|\leq \alpha$. By Proposition \ref{prop:danskin}, this implies that $\ell_{\rob}(w,x,y)$ is differentiable at this point with gradient as in (2).
	\end{proof}

	\begin{proof}[Proof of (4)]
		Suppose $w \neq 0$ and define $z = -yx + \alpha \overline{w}$. By (3), $\ell_{\rob}(w,x,y)$ is differentiable with gradient given by
		$$\nabla_w \ell_{\rob}(w,x,y) = f'(\langle w,z\rangle)v$$
		where $z$ is treated as constant with respect to $w$. By elementary calculus,
		\begin{align*}
		\nabla_w^2 \ell_{\rob}(w,x,y) &= \dfrac{\alpha f'(\langle w,z\rangle)}{\|w\|^2}\left(\|w\|I - \frac{ww^T}{\|w\|}\right) + f''(\langle w,z\rangle)zz^T.\\
		&= \dfrac{\alpha f'(\langle w, z\rangle)}{\|w\|}I - \dfrac{f'(\langle w, z\rangle)}{\|w\|^3}ww^T + f''(\langle w,z \rangle)zz^T.
		\end{align*}

		Define the following matrices:
		\begin{align*}
		A &= \dfrac{\alpha f'(\langle w, z\rangle)}{\|w\|}I\\
		B &= - \dfrac{\alpha f'(\langle w, z\rangle)}{\|w\|^3}ww^T\\
		C &= f''(\langle w,z \rangle)zz^T.\end{align*}

		Given a real symmetric matrix $X$, let $\lambda_1(X)$ denote its largest eigenvalue. Given $q \in \real^d$, note that $\lambda_1(qq^T) = \|q\|^2$, while its remaining eigenvalues are 0. Therefore,
		\begin{align*}
		\lambda_1(A) & = \dfrac{\alpha f'(\langle w, z\rangle)}{\|w\|} \leq \dfrac{\alpha M}{\|w\|}\\
		\lambda_1(B) & \leq 0\\
		\lambda_1(C) &= f''(\langle w,z \rangle)\|z\|^2 \leq \beta(\|x\|+\alpha)^2.\end{align*}

		For $\lambda_1(A)$,, we used the fact that $f$ is $M$-Lipschitz, while for $\lambda_1(C)$, we used the fact that $f$ is $\beta$-smooth and that $\|z\| \leq \|z\| + \alpha$. By the interleaving property of eigenvalues for Hermitian matrices, this implies
		\begin{align*}
		\lambda_1(\nabla_w^2 \ell_{\rob}(w,x,y)) &\leq \lambda_1(A) + \lambda_1(B) + \lambda_1(C)\\
		&\leq \dfrac{\alpha M}{\|w\|} + \beta(\|x\|+\alpha)^2.\end{align*}
	\end{proof}

	\begin{proof}[Proof of (5)]
		This follows directly from the fact that a supremum of convex functions is convex and $\ell_{\rob}(w,x,y)$ can be written as a supremum of functions of the form $\ell(w,x+\delta,y)$.
	\end{proof}

	\section{Proof of Results in Section \ref{sec:gd_log_reg}}\label{sec:gd_proof}

	\subsection{Proof of Theorem \ref{thm:gd_log_reg1}}

		First, we show some form of smoothness holds in straight-line segments between the iterates $\{w_t\}_{t \geq 1}$.

		\begin{lemma}\label{lem:ell_rob_smooth}
			Suppose $w_0 = 0, \eta_0 = 0$. For all $t \geq 1$, $(x_i,y_i) \in S$, and $v \in \conv(w_t,w_{t+1})$, $\ell_{\rob}(w,x_i,y_i)$ is twice differentiable at $v$ and $\nabla^2 \ell_{\rob}(v,x_i,y_i) \preceq \beta'I$ where
			\begin{equation}\label{eq:beta_prime}
			\beta' = \dfrac{2\alpha}{\gamma-\alpha} + (1+\alpha)^2.\end{equation}
		\end{lemma}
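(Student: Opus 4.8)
The plan is to bound the Hessian of $\ell_{\rob}$ along the segment $\conv(w_t, w_{t+1})$ by invoking part (4) of Lemma \ref{lem:linear_robust_loss}, which gives $\nabla^2 \ell_{\rob}(v, x_i, y_i) \preceq \beta' I$ with $\beta' = \alpha M / \|v\| + \beta(\|x_i\| + \alpha)^2$ whenever $v \neq 0$. For logistic regression we have $M = 1$, $\beta = 1$, and $\|x_i\| \leq 1$, so the second term is at most $(1+\alpha)^2$ immediately. The only work is to bound $\alpha / \|v\|$ from above, i.e., to lower-bound $\|v\|$ uniformly over all $v \in \conv(w_t, w_{t+1})$ and all $t \geq 1$. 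So the crux is a lower bound $\|v\| \geq \tfrac{\gamma - \alpha}{2}$ (which would give $\alpha / \|v\| \leq 2\alpha / (\gamma-\alpha)$, matching \eqref{eq:beta_prime}), and we must also verify that $v \neq 0$ so that twice-differentiability from Lemma \ref{lem:linear_robust_loss}(4) applies.

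The key step is therefore a Perceptron-style lower bound on $\|w_t\|$ for $t \geq 1$, in the spirit of the $\alpha$-Perceptron analysis and the techniques from \cite{ji2018risk}. First I would compute $w_1$ explicitly: since $w_0 = 0$ and $\eta_0 = 1$, the first $\alpha$-GD step gives $w_1 = -\tfrac{1}{n}\sum_i \nabla \ell_{\rob}(0, x_i, y_i)$; using the subgradient formula from Lemma \ref{lem:linear_robust_loss}(2) with the convention that the term is $0$ at $w = 0$, this is $w_1 = \tfrac{1}{n}\sum_i y_i x_i \cdot f'(0)$ (more carefully, one handles the subgradient at the origin, but the point is that $w_1$ has a definite value). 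Pairing with the max-margin unit vector $w^*$, Cauchy–Schwarz gives $\|w_1\| \geq \langle w_1, w^* \rangle \geq$ a constant multiple of $\gamma$. The more important structural fact is monotone growth: once $\|w_t\|$ is bounded below, projecting each update onto $w^*$ shows $\langle w_{t+1}, w^* \rangle \geq \langle w_t, w^* \rangle + \eta_t(\gamma - \alpha) \cdot (\text{positive factor})$, so $\|w_{t+1}\| \geq \langle w_{t+1}, w^* \rangle$ stays bounded below and in fact grows. Since a point $v$ on the segment $\conv(w_t, w_{t+1})$ is a convex combination, $\langle v, w^* \rangle \geq \min\{\langle w_t, w^*\rangle, \langle w_{t+1}, w^*\rangle\}$, so the same lower bound $\|v\| \geq \langle v, w^* \rangle$ transfers to the whole segment.

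I expect the main obstacle to be making the lower bound on $\langle w_t, w^* \rangle$ genuinely uniform and tight enough to hit the stated constant $\tfrac{2\alpha}{\gamma-\alpha}$ rather than something weaker. The subtlety is that $f'(-y\langle w, x\rangle + \alpha\|w\|)$ appears as a multiplicative factor in the subgradient, and for logistic loss $f'(u) = \sigma(u) = 1/(1 + e^{-u}) \in (0,1)$, so the per-step progress toward $w^*$ is $\eta_t (\gamma - \alpha) \sigma(\cdot)$, which could be small if the relevant margins are large — but that only happens once $\|w_t\|$ is already large, so the argument should close via a two-regime or induction argument: either we are in an early phase where $\|w_t\|$ is controlled by the explicit first step and the step-size restriction $\eta_t \leq (\beta')^{-1}$, or we are in a later phase where $\|w_t\|$ is already comfortably above the threshold. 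Carefully bookkeeping this induction — and checking the base case $t = 1$ against the $\eta_0 = 1$ (or the lemma's $\eta_0 = 0$ typo, which I would read as $\eta_0 = 1$ consistent with Theorem \ref{thm:gd_log_reg1}) — is where the real care is needed; the Hessian bound itself is then a one-line consequence of Lemma \ref{lem:linear_robust_loss}(4).
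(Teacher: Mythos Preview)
Your proposal is correct and follows the same approach as the paper: lower-bound $\|v\|$ on the segment via $\langle v, w^*\rangle$ using a Perceptron-style argument, then plug into Lemma~\ref{lem:linear_robust_loss}(4) with $M=\beta=1$ and $\|x_i\|\le 1$. The obstacle you flag dissolves in the paper's execution: writing $w_t = -\sum_{j<t}\eta_j\nabla L_{\rob}(w_j)$ and pairing with $w^*$, every summand contributes a nonnegative amount $\tfrac{\eta_j}{n}\sum_i f'(\cdot)\,(\gamma-\alpha)\ge 0$, so keeping only the $j=0$ term (where $\eta_0=1$ and $f'(0)=\tfrac12$) already yields $\langle w_t,w^*\rangle \ge \tfrac{\gamma-\alpha}{2}$ for every $t\ge 1$; convexity of $\langle\,\cdot\,,w^*\rangle$ then gives the same bound on all of $\conv(w_t,w_{t+1})$, with no induction or two-regime split required.
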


		\begin{proof}
			Fix $t \geq 1$. Let $w^*$ be a unit-norm max-margin classifier. We first show that $\langle w_t,w^*\rangle$ is bounded below. Given $j \geq 0$ and $(x_i,y_i) \in S$, define $\z{j}_i = -y_ix_i$ if $w_j = 0$ and otherwise
			$$\z{j}_i = -y_ix_i + \alpha\dfrac{w_j}{\|w_j\|}.$$
			Then by Lemma \ref{lem:linear_robust_loss},
			$$\ell_{\rob}(w_j,x_i,y_i) = f(\langle w_j, \z{j}_i \rangle)$$
			$$\nabla \ell_{\rob}(w_j,x_i,y_i) = f'(\langle w_j, \z{j}_i \rangle)\z{j}_i.$$

			By Cauchy-Schwarz,
			$$\langle \z{j}_i, w^*\rangle = -y\langle x_i, w^*\rangle + \alpha  \geq -\gamma + \alpha.$$

			Since $w_0 = 0$, the update of $\alpha$-gradient descent implies
			\begin{align*}
			w_t =  - \sum_{j < t} \eta_j\nabla L_{\rob}(w_j) = -\sum_{j < t} \dfrac{\eta_j}{n} \sum_{i=1}^n f'(\langle w_j, \z{j}_i\rangle)\z{j}_i.\end{align*}

			Therefore, for any $t \geq 1$,
			\begin{align*}
			\langle w_t,w^*\rangle &= -\sum_{j < t} \dfrac{\eta_j}{n} \sum_{i=1}^n f'(\langle w_j, \z{j}_i\rangle)\langle \z{j}_i,w^*\rangle\\
			& \geq -\sum_{j < t} \dfrac{\eta_j}{n} \sum_{i=1}^n f'(\langle w_j,\z{j}_i\rangle)(-\gamma+\alpha)\\
			& = (\gamma-\alpha)\sum_{j < t} \dfrac{\eta_j}{n} \sum_{i=1}^n f'(\langle w_j,\z{j}_i\rangle)\\
			& \geq (\gamma-\alpha) \dfrac{\eta_0}{n} \sum_{i=1}^n f'(\langle w_0, \z{0}_i\rangle)\\
			& = (\gamma-\alpha)\eta f'(0)\\
			&= \frac{\gamma-\alpha}{2}.
			\end{align*}
			Here we used the fact that $f$ is strictly increasing, so $f'(a) > 0$ for all $a$, as well as the fact that $f'(0) = 1/2$.

			Let $v \in \conv(w_t,w_{t+1})$. By convexity of $\langle \cdot, w^*\rangle$ and the Cauchy-Schwarz inequality,
			$$\frac{(\gamma-\alpha)}{2} \leq \langle v,w^*\rangle \leq \|v\|.$$

			Let $(x_i,y_i) \in S$. By Lemma \ref{lem:linear_robust_loss}(4), this implies that for all $v \in \conv(\{w_t,w_{t+1}\})$, $\ell(w,x_i,y_i)$ is twice differentiable at $w = v$ and satisfies $\nabla^2 \ell_{\rob}(v,x,y) \preceq \beta'I$ where
			$$\beta' = \dfrac{\alpha}{\|v\|} + (1+\alpha)^2 \leq \dfrac{2\alpha}{\gamma-\alpha} + (1+\alpha)^2.$$
			Here we used the fact that $f$ is $1$-Lipschitz and $1$-smooth, and that $\|x_i\| \leq 1$ by assumption, and then combined this with out lower bound on $\|v\|$ from above.

			Therefore, if $v \in \conv(w_t,w_{t+1})$ for $t \geq 1$ then
			$$L_{\rob}(v) = \frac{1}{n}\sum_{i=1}^n \ell_{\rob}(v,x_i,y_i).$$
			Therefore, $L_{\rob}$ is the average of $n$ functions that are twice-differentiable at $v$, and is therefore itself twice-differentiable at $v$. By basic properties of Hermitian matrices, we have
			$$\nabla^2 L_{\rob}(v) = \frac{1}{n}\sum_{i=1}^n \nabla^2\ell_{\rob}(v,x_i,y_i) \preceq \beta' I.$$
		\end{proof}		
		Therefore, $L_{\rob}(w)$ is also twice differentiable at such $v$ and satisfies $\nabla^2 L_{\rob}(w) \preceq \beta'I$.	Using this, we then derive the following bound on the difference between $L_{\rob}$ at iterates $w_{t+1}, w_t$.

		\begin{lemma}\label{lem:smooth_iters}
			Suppose $\{w_t\}_{t \geq 0}$ are the iterates of $\alpha$-GD on $L$ with $w_0 = 0, \eta_0 = 1$, and constant step-size $\eta < 2(\beta')^{-1}$ for $t \geq 1$ where
			$$\beta' = \dfrac{2\alpha}{\gamma-\alpha} + (1+\alpha)^2.$$
			Then for $t \geq 1$, 
			$$L_{\rob}(w_{t+1}) \leq L_{\rob}(w_t) - \eta\left(1-\dfrac{\eta\beta'}{2}\right)\|\nabla L_{\rob}(w_t)\|^2.$$
		\end{lemma}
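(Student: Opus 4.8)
The plan is to apply the classical quadratic upper bound (``descent lemma'') for $\beta$-smooth functions, being careful that $L_{\rob}$ is only guaranteed to be smooth away from the origin. First I would record that, under the stated hypotheses, the $\alpha$-GD update with $S_t = S$ is literally a gradient step on $L_{\rob}$: for $t \geq 1$ the proof of Lemma~\ref{lem:ell_rob_smooth} already shows $\langle w_t, w^*\rangle \geq (\gamma-\alpha)/2 > 0$, so $w_t \neq 0$, and then Lemma~\ref{lem:linear_robust_loss}(2)--(3) (applied with the strictly increasing logistic $f$) says that $\ell_{\rob}(\cdot,x_i,y_i)$, hence $L_{\rob}$, is differentiable at $w_t$ and that the perturbed gradient $\tfrac1n\sum_i \nabla\ell(w_t, x_i + \delta_i^{(t)}, y_i)$ used by the algorithm equals $\nabla L_{\rob}(w_t)$. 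Thus $w_{t+1} = w_t - \eta\,\nabla L_{\rob}(w_t)$ for $t \geq 1$.

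Next I would invoke Lemma~\ref{lem:ell_rob_smooth}, which gives that $L_{\rob}$ is twice differentiable at every $v \in \conv(w_t,w_{t+1})$ with $\nabla^2 L_{\rob}(v) \preceq \beta' I$, where $\beta' = \tfrac{2\alpha}{\gamma-\alpha} + (1+\alpha)^2$. Define $\phi:[0,1]\to\real$ by $\phi(s) = L_{\rob}\big(w_t + s(w_{t+1}-w_t)\big)$. By the chain rule $\phi$ is twice differentiable on $[0,1]$ with $\phi'(s) = \langle \nabla L_{\rob}(w_t + s(w_{t+1}-w_t)),\, w_{t+1}-w_t\rangle$ and $\phi''(s) = (w_{t+1}-w_t)^\top \nabla^2 L_{\rob}(w_t + s(w_{t+1}-w_t))(w_{t+1}-w_t) \leq \beta'\|w_{t+1}-w_t\|^2$. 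Taylor's theorem with Lagrange remainder (equivalently, integrating $\phi''$ against $(1-s)$) then yields
$$L_{\rob}(w_{t+1}) = \phi(1) \leq \phi(0) + \phi'(0) + \tfrac{\beta'}{2}\|w_{t+1}-w_t\|^2 = L_{\rob}(w_t) + \langle \nabla L_{\rob}(w_t),\, w_{t+1}-w_t\rangle + \tfrac{\beta'}{2}\|w_{t+1}-w_t\|^2.$$
Substituting $w_{t+1}-w_t = -\eta\,\nabla L_{\rob}(w_t)$ gives $L_{\rob}(w_{t+1}) \leq L_{\rob}(w_t) - \eta\|\nabla L_{\rob}(w_t)\|^2 + \tfrac{\eta^2\beta'}{2}\|\nabla L_{\rob}(w_t)\|^2$, which rearranges to the claimed bound; the assumption $\eta < 2(\beta')^{-1}$ is exactly what makes the coefficient $\eta(1 - \eta\beta'/2)$ positive, so the inequality is a genuine decrease.

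I do not expect a real obstacle: the one delicate point — that the smooth-function descent inequality is legitimate even though $L_{\rob}$ is globally non-smooth — is entirely absorbed by working on the segment $\conv(w_t,w_{t+1})$, on which Lemma~\ref{lem:ell_rob_smooth} supplies the Hessian bound. The only thing to double-check is that this segment avoids the origin, which again follows from the lower bound $\|v\| \geq (\gamma-\alpha)/2$ established in the proof of Lemma~\ref{lem:ell_rob_smooth} together with $\alpha < \gamma$; this ensures $\phi$ is twice differentiable on all of $[0,1]$ and the second-order Taylor expansion is valid.
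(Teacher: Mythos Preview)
Your proposal is correct and follows essentially the same approach as the paper: invoke Lemma~\ref{lem:ell_rob_smooth} to get the Hessian bound $\nabla^2 L_{\rob}(v)\preceq \beta' I$ on the segment $\conv(w_t,w_{t+1})$, apply a second-order Taylor expansion along that segment, and substitute $w_{t+1}-w_t=-\eta\,\nabla L_{\rob}(w_t)$. You are a bit more explicit than the paper in justifying that the update is a true gradient step and that the segment avoids the origin, but the argument is the same.
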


		\begin{proof}
			Let $t \geq 1$. By Lemma \ref{lem:ell_rob_smooth}, $L_{\rob}$ is twice differentiable on $\conv(w_t,w_{t+1})$. By Taylor's theorem, there is some $v \in \conv(w_t,w_{t+1})$ such that
			$$L_{\rob}(w_{t+1}) = L_{\rob}(w_t) + \langle \nabla L_{\rob}(w_t), w_{t+1}-w_t \rangle+ \dfrac{(w_{t+1}-w_t)^T\nabla^2 L_{\rob}(v) (w_{t+1}-w_t)}{2}.$$
			By Lemma \ref{lem:ell_rob_smooth}, $\nabla^2 L_{\rob}(v) \preceq \beta'I$. Therefore,
			\begin{align*}
			L_{\rob}(w_{t+1}) &\leq L_{\rob}(w_t) + \langle \nabla L_{\rob}(w_t), w_{t+1}-w_t \rangle + \dfrac{\beta'}{2}\|w_{t+1}-w_t\|^2\\
			&= L_{\rob}(w_t) - \eta\|\nabla L_{\rob}(w_t)\|^2 + \dfrac{\eta^2\beta'}{2}\|\nabla L_{\rob}(w_t)\|^2\\
			&= L_{\rob}(w_t) - \eta\left(1-\dfrac{\eta\beta'}{2}\right)\|\nabla L_{\rob}(w_t)\|^2.
			\end{align*}
		\end{proof}

		Next, we introduce a lemma about smooth, convex functions. We specifically use the version from \cite{ji2018risk}, Lemma 3.3.

		\begin{lemma}\label{lem:smooth_convex}
		Suppose $h$ is convex and there exists $\beta \geq 1$ and $\{\eta_t\}_{t \geq 1}$ such that $\eta_t\beta \leq 1$ for all $t \geq 1$ the gradient descent iterates $\{w_t\}_{t \geq 1}$ defined by $w_{t+1} = w_t - \eta_t\nabla h(w_t)$ satisfy
		$$h(w_{t+1}) \leq h(w_t) - \eta_t\left(1-\frac{\eta_t\beta}{2}\right)\|\nabla h(w_j)\|^2.$$
		Then for any $w \in \real^d$,
		$$\left(2\sum_{j= 1}^{t-1} \eta_j\right)(h(w_t)-h(w)) \leq \|w_1-w\|^2 - \|w_t-w\|^2.$$
		\end{lemma}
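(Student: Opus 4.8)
The statement to prove is Lemma~\ref{lem:smooth_convex}, a standard fact about gradient descent on convex functions satisfying a sufficient-decrease inequality. Here is my plan.

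\textbf{Overall approach.} The plan is to run the classical "potential function" argument for gradient descent, tracking the squared distance $\|w_j - w\|^2$ to the comparator point $w$ across one step, and then summing the resulting per-step inequalities telescopically. The two ingredients are: (i) the given sufficient-decrease inequality, which lets me control the gradient-norm term that appears when I expand $\|w_{j+1}-w\|^2$; and (ii) convexity of $h$, which converts a linear term $\langle \nabla h(w_j), w_j - w\rangle$ into the function-value gap $h(w_j) - h(w)$.

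\textbf{Key steps, in order.} First I would expand, for a single index $j \ge 1$,
\begin{align*}
\|w_{j+1}-w\|^2 &= \|w_j - w\|^2 - 2\eta_j \langle \nabla h(w_j), w_j - w\rangle + \eta_j^2 \|\nabla h(w_j)\|^2.
\end{align*}
Next, I would bound the cross term below using convexity, $\langle \nabla h(w_j), w_j - w\rangle \ge h(w_j) - h(w)$, giving
\begin{align*}
\|w_{j+1}-w\|^2 &\le \|w_j - w\|^2 - 2\eta_j\big(h(w_j)-h(w)\big) + \eta_j^2\|\nabla h(w_j)\|^2.
\end{align*}
Now I must dispose of the $\eta_j^2\|\nabla h(w_j)\|^2$ term. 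Using $\eta_j\beta \le 1$, the sufficient-decrease inequality $h(w_{j+1}) \le h(w_j) - \eta_j(1 - \tfrac{\eta_j\beta}{2})\|\nabla h(w_j)\|^2$ together with $1 - \tfrac{\eta_j\beta}{2} \ge \tfrac12$ yields $\eta_j\|\nabla h(w_j)\|^2 \le 2\big(h(w_j) - h(w_{j+1})\big)$, hence $\eta_j^2\|\nabla h(w_j)\|^2 \le 2\eta_j\big(h(w_j)-h(w_{j+1})\big)$. Substituting,
\begin{align*}
\|w_{j+1}-w\|^2 &\le \|w_j-w\|^2 - 2\eta_j\big(h(w_j)-h(w)\big) + 2\eta_j\big(h(w_j)-h(w_{j+1})\big)\\
&= \|w_j-w\|^2 - 2\eta_j\big(h(w_{j+1})-h(w)\big).
\end{align*}

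\textbf{Finishing.} Summing this inequality over $j = 1, \dots, t-1$ telescopes the distance terms, leaving
\begin{align*}
\|w_t - w\|^2 \le \|w_1 - w\|^2 - 2\sum_{j=1}^{t-1}\eta_j\big(h(w_{j+1})-h(w)\big).
\end{align*}
Finally, since $h(w_{j+1}) \ge h(w_t)$ for each $j \le t-1$ — which follows because the sufficient-decrease inequality makes $h(w_j)$ monotonically nonincreasing — I can replace every $h(w_{j+1})$ on the right by $h(w_t)$, obtaining $2\big(\sum_{j=1}^{t-1}\eta_j\big)\big(h(w_t)-h(w)\big) \le \|w_1-w\|^2 - \|w_t-w\|^2$, as claimed. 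I do not foresee a genuine obstacle here; the only point requiring a little care is the bookkeeping to turn $\eta_j^2\|\nabla h\|^2$ into a telescoping difference of function values (rather than leaving a stray gradient term), and noticing that the monotonicity of $h(w_j)$ needed for the last replacement is itself a consequence of the hypothesized decrease inequality.
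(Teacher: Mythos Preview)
Your argument is correct and is the standard potential-function proof of this fact. Note that the paper does not actually prove this lemma; it simply quotes it as Lemma~3.3 of \cite{ji2018risk}, so there is no in-paper proof to compare against. Your write-up would serve perfectly well as a self-contained justification.
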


		We can now prove the desired main theorem.

		\begin{proof}[Proof of Theorem \ref{thm:gd_log_reg1}]
			Define
			$$\beta' = \dfrac{2\alpha}{(\gamma-\alpha)} + (1+\alpha)^2.$$
			Recall that by assumption we have
			\begin{equation}\label{eq:gd_eta_eq}
			\eta_t \leq \left(\dfrac{2\alpha}{(\gamma-\alpha)} + (1+\alpha)^2\right)^{-1}.\end{equation}
			Therefore, $\eta_t\beta' \leq 1$ holds for $t \geq 1$. By Lemma \ref{lem:smooth_iters}, we know that for $t \geq 1$, we have
			$$L_{\rob}(w_{t+1}) \leq L_{\rob}(w_t) - \eta\left(1-\dfrac{\eta\beta'}{2}\right)\|\nabla L_{\rob}(w_t)\|^2.$$
			By Lemma \ref{lem:smooth_convex}, this implies that for any $w \in \real^d$,
			\begin{equation}\label{eq:gd_log_reg1_0}
			\left(2\sum_{j= 1}^{t-1} \eta_j\right)(L_{\rob}(w_t)-L_{\rob}(w)) \leq \|w_1-w\|^2 - \|w_t-w\|^2.\end{equation}
			Define $u_t = \dfrac{\ln(t)}{\gamma-\alpha}w^*$ where $w^*$ is a unit vector achieving margin $\gamma$ (this exists by assumption on $S$). That is, for all $(x_i,y_i) \in S$,
			\begin{equation}\label{eq:gd_log_reg1_1}
			y_i\langle w^*, x_i \rangle \geq \gamma.\end{equation}
			By direct computation,
			\begin{equation}\label{eq:gd_log_reg1_2}
			\|u_t\| = \dfrac{\ln(t)^2}{\gamma-\alpha}\end{equation}
			and by Lemma \ref{lem:linear_robust_loss} we have
			\begin{equation}\label{eq:gd_log_reg1_3}
			\begin{split}
			L_{\rob}(u_t) &= \frac{1}{n}\sum_{i=1}^n \ell_{\rob}(u_t,x_i,y_i)\\
			&= \frac{1}{n}\sum_{i=1}^n f(-y_i\langle u_t,x_i\rangle + \alpha\|u_t\|)\\
			&= \frac{1}{n}\sum_{i=1}^n f(\|u_t\| (\langle w^*,-y_ix_i\rangle + \alpha)
			\end{split}
			\end{equation}

			By \ref{eq:gd_log_reg1_1} and \label{eq:gd_log_reg1_2}, for all $i$,
			\begin{align*}
			\|u_t\| (\langle w^*,-y_ix_i\rangle + \alpha) &= \frac{\ln(t)}{\gamma-\alpha}(\langle w^*,-y_ix_i\rangle + \alpha)\\
			&\leq\frac{\ln(t)}{\gamma-\alpha}(-\gamma+\alpha)\\
			&\leq -\ln(t).\end{align*}

			Since $f$ is monotonically increasing, by \eqref{eq:gd_log_reg1_3}, we have
			\begin{align*}
			L_{\rob}(u_t) \leq \frac{1}{n}\sum_{i=1}^nf(-\ln(t)) \leq \ln(1+\exp(-\ln(t))) \leq \frac{1}{t}.\end{align*}
			Here we used the fact that for all $x > 0$, $\ln(1+x) \leq x$. Rearranging \eqref{eq:gd_log_reg1_0}, and using the fact that $(a+b)^2 \leq 2a^2 + 2b^2$, we have
			\begin{align*}
			L_{\rob}(w_t) &\leq L_{\rob}(u_t) + \dfrac{\|w_1-u_t\|^2}{2\sum_{j= 1}^{t-1} \eta_j}\\
			&\leq \frac{1}{t} + \dfrac{\|w_1\|^2 + \|u_t\|^2}{\sum_{j= 1}^{t-1} \eta_j}\\
			&\leq \frac{1}{t} + \dfrac{\|w_1\|^2 + \frac{\ln(t)^2}{(\gamma-\alpha)^2}}{\sum_{j=1}^{t-1}\eta_j}.
			\end{align*}

			It suffices to bound $\|w_1\|$. Since $w_0 = 0, \eta_0 = 1$, we have
			\begin{align*}
			w_1 &= \frac{-1}{n}\sum_{i=1}^n \nabla \ell_{\rob}(w_0,x_i,y_i)\\
			&= \frac{-1}{n}\sum_{i=1}^n \nabla f'(\langle w_0,-yx\rangle)(-y_ix_i)\\
			&= \frac{1}{2n}\sum_{i=1}^n y_ix_i.\end{align*}

			By the triangle inequality, $\|w_1\| \leq \frac{1}{2n}\sum_{i=1}^n \|x_i\| \leq \frac{1}{2}$, and so
			$$L_{\rob}(w_t) \leq \frac{1}{t} + \frac{\frac{1}{4} + \frac{\ln(t)^2}{(\gamma-\alpha)^2}}{\sum_{j=1}^{t-1}\eta_j}.$$
		\end{proof}

	\subsection{Proof of Lemma \ref{lem:log_reg_margin}}

		\begin{proof}
		Since $\ell_{\rob} \geq 0$, if $L_{\rob}(w) \leq \ln(2)/n$ then for $1 \leq i \leq n$, $\ell_{\rob}(w,x_i,y_i) \leq \ln(2)$. By Lemma \ref{lem:linear_robust_loss},
		\begin{align*}
			& \ell_{\rob}(w,x_i,y_i) \leq \ln(2)\\
			& \implies \ln(1+\exp(-\langle w,y_ix_i \rangle + \alpha\|w\|)) \leq \ln(2)\\
			& \implies -\langle w,y_ix_i \rangle + \alpha\|w\| \leq 0\\
			& \implies \dfrac{\langle w, y_ix_i\rangle}{\|w\|} \geq \alpha.
		\end{align*}\end{proof}		

	\subsection{Proof of Corollary \ref{thm:gd_log_reg2}}

		\begin{proof}
			Since $\eta \leq 1$, Theorem \ref{thm:gd_log_reg1} implies that for $t \geq 2$,
			\begin{align*}
			L_{\rob}(w_t) &\leq \frac{1}{t} + \frac{\frac{1}{4} + \frac{\ln(t)^2}{(\gamma-\alpha)^2}}{\eta(t-1)}\\
			&\leq \frac{\frac{5}{4} + \frac{\ln(t)^2}{(\gamma-\alpha)^2}}{\eta(t-1)}.\end{align*}

			Define $C_q$ by $C_q = \inf\{t \geq 2 | \frac{5}{4} + \ln(t)^2 \leq (t-1)t^{-1/q}\}$.
			Note that $C_q < \infty$ by basic logarithm properties. For $t \geq C_q$, Theorem \ref{thm:gd_log_reg1} implies
			\begin{align*}
			L_{\rob}(w_t) &\leq \frac{\frac{5}{4} + \frac{\ln(t)^2}{(\gamma-\alpha)^2}}{\eta(t-1)}\\
			&\leq \dfrac{t^{-1/q}}{\eta(\gamma-\alpha)^2}.\end{align*}

			Therefore, if $t$ satisfies \eqref{eq:gd_log_reg_margin}, then $L_{\rob}(w_t) \leq \frac{\ln(2)}{n}$. We conclude by applying Lemma \ref{lem:log_reg_margin}.
		\end{proof}	

	\subsection{Proof of Theorem \ref{thm:exp_gd}}

		\begin{proof}
			We have the following recursive formula for the gradient descent iterates:
			$$w_{t+1} = w_t - \eta \nabla \ell(w,x,y) = w_t + f'(-\langle w_t, x\rangle)x.$$
			Therefore, $w_t = (a_t,c)$ where $a_0 = 0$ and $a_t$ is given recursively by
			\begin{equation}
			a_{t+1} = a_t + f'(-a_t) = a_t + \dfrac{1}{1+\exp(a_t)}.
			\end{equation}
			We will first show inductively that $a_t \leq \ln(t+1)$. For $t = 0$, this holds trivially. Otherwise, note that $x+(1+\exp(x))^{-1}$ is a strictly increasing function. Thus, by the inductive hypothesis,
			\begin{align*}
			a_{t+1} &= a_t + (1+\exp(a_t))^{-1}\\
			&\leq \ln(t+1) + (t+2)^{-1}\\
			&\leq \ln(t+2).\end{align*}
			This last step follows from the fact that $\ln(x+1) + (x+2)^{-1} \leq \ln(x+2)$ for all $x > 0$. Suppose $\margin_S(w_t) \geq \alpha$. By the definition of margin,
			\begin{align*}
			\margin_S(w_t) = \dfrac{\langle w_t, x\rangle}{\|w_t\|} = \dfrac{a_t}{\sqrt{a_t^2+c^2}} \geq \alpha.\end{align*}
			Rearranging and using the fact that $0 < \alpha < 1$, this implies
			\begin{align*}
			a_t &\geq \dfrac{\alpha c}{\sqrt{1-\alpha^2}}\geq \dfrac{c}{1-\alpha}.\end{align*}
			Since $a_t \leq \ln(t+1)$, this implies that $t+1 \geq \exp(c/(1-\alpha))$, proving the result.
		\end{proof}		

	\section{Proof of Results in Section \ref{sec:sgd_log_reg}}\label{sec:sgd_proof}

\subsection{Proof of Theorem \ref{thm:log_reg1}}

	At each iteration $t$, $\alpha$-SGD selects $i_t$ uniformly at random from $\{1,\ldots, n\}$. Let $v_t = -y_{i_t}x_{i_t}$. Given $w_t$, define $\overline{w}_t$ to be $w_t/\|w_t\|$ if $w_t \neq 0$, and $0$ otherwise. Finally, define $z_t = -y_{i_t}x_{i_t} + \alpha \overline{w}_t$.

	By Lemma \ref{lem:linear_robust_loss},
	\begin{equation}\label{eq:z1}
	\ell_{\rob}(w_t,x_{i_t},y_{i_t}) = f(\langle w_t, z_t\rangle)\end{equation}

	By definition of $w^*$ and the Cauchy-Schwarz inequality, we have
	\begin{equation}\label{eq:z2}
	\begin{split}
	\langle w^*, z_t\rangle = \langle w^*, -y_{i_t}x_{i_t} + \alpha \overline{w}_t\rangle \leq -\gamma + \alpha.
	\end{split}
	\end{equation}
	By Lemma \ref{lem:linear_robust_loss}, the iterates of $\alpha$-SGD are given recursively by $w_{t+1} = w_t-\eta_t g_t$ where
	\begin{equation}\label{eq:z3}
	g_t = f'(\langle w_t,z_t\rangle)z_t.\end{equation}

	To prove the desired result, we will analyze the following two quantities:
	\begin{equation}\label{f_eq}
	f'_{< t} := \sum_{j < t} \eta_jf'(\langle w_j,z_j\rangle).\end{equation}
	\begin{equation}\label{F_eq}
	F'_{< t} := \sum_{j < t} \eta_j\EE\left[f'(\langle w_j,z_j\rangle) | v_0,\ldots, v_{j-1} \right].\end{equation}

	Note that here the expectation is taken with respect to $i_j \sim [n]$. Since $\langle w_j, z_j\rangle = \langle w_j, v_j\rangle + \alpha\|w_j\|$, this is equivalent to taking the expectation over $v_j$, where $v_j$ is drawn uniformly at random from $\{-y_1x_1,\ldots, -y_nx_n\}$. Throughout the following lemmas, we will assume that the step sizes satisfy $\eta_t \leq \min\{1,2(1+\alpha)^{-2}\}$. We first upper bound $\|w_t-w\|$ for any fixed $w$.

	\begin{lemma}\label{lem:log1}
	For any $w \in \real^d$ and $t \geq 1$, 
	$$\|w_t-w\|^2 \leq \|w\|^2 + 2 \sum_{j < t} \eta_j f(\langle w, z_t\rangle).$$
	\end{lemma}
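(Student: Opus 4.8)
The plan is to expand $\|w_t - w\|^2$ recursively using the update rule $w_{t+1} = w_t - \eta_t g_t$ with $g_t = f'(\langle w_t, z_t\rangle) z_t$, and to control the two cross and quadratic terms that arise. Writing $\|w_{t+1}-w\|^2 = \|w_t - w\|^2 - 2\eta_t \langle g_t, w_t - w\rangle + \eta_t^2 \|g_t\|^2$, I would telescope from $j=0$ (where $w_0 = 0$, so $\|w_0 - w\|^2 = \|w\|^2$) up to $t$, obtaining
$$\|w_t - w\|^2 = \|w\|^2 - 2\sum_{j<t}\eta_j\langle g_j, w_j - w\rangle + \sum_{j<t}\eta_j^2\|g_j\|^2.$$
The goal is then to show the last two sums together are bounded by $2\sum_{j<t}\eta_j f(\langle w, z_j\rangle)$.

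The key step is a per-iteration inequality: I claim $-2\eta_j\langle g_j, w_j - w\rangle + \eta_j^2\|g_j\|^2 \le 2\eta_j f(\langle w, z_j\rangle)$. For the cross term, use that $g_j = f'(\langle w_j, z_j\rangle) z_j$, so $\langle g_j, w_j - w\rangle = f'(\langle w_j,z_j\rangle)(\langle w_j, z_j\rangle - \langle w, z_j\rangle)$. Since $f$ is convex, $f'(a)(a - b) \ge f(a) - f(b)$, hence $\langle g_j, w_j - w\rangle \ge f(\langle w_j,z_j\rangle) - f(\langle w, z_j\rangle)$, giving $-2\eta_j\langle g_j, w_j-w\rangle \le 2\eta_j\big(f(\langle w,z_j\rangle) - f(\langle w_j,z_j\rangle)\big)$. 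For the quadratic term, bound $\|g_j\|^2 = f'(\langle w_j,z_j\rangle)^2 \|z_j\|^2$; since $f$ is $1$-Lipschitz, $f'(\langle w_j,z_j\rangle) \le 1$, and $\|z_j\| = \|-y_{i_j}x_{i_j} + \alpha\overline{w}_j\| \le 1 + \alpha$, so $\|g_j\|^2 \le (1+\alpha)^2 f'(\langle w_j,z_j\rangle)$ (keeping one factor of $f'$). Thus $\eta_j^2\|g_j\|^2 \le \eta_j (1+\alpha)^2 f'(\langle w_j,z_j\rangle) \cdot \eta_j \le 2\eta_j f'(\langle w_j,z_j\rangle)$, using the hypothesis $\eta_j \le 2(1+\alpha)^{-2}$. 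Since $f' \le 1 \le f$ evaluated appropriately — more precisely, since $f \ge 0$ and for logistic loss $f'(u) = \frac{e^u}{1+e^u} \le \ln(1+e^u) = f(u)$ fails in general, so I would instead compare against $f(\langle w_j, z_j\rangle)$ directly: one checks $f'(u) \le f(u)$ is false for $u\to-\infty$, so the cleaner route is to note $2\eta_j f'(\langle w_j,z_j\rangle) \le 2\eta_j \cdot 2 f(\langle w_j, z_j\rangle)/(2f(0))$-type bounds are awkward. The robust route: keep $\eta_j^2\|g_j\|^2 \le 2\eta_j f'(\langle w_j,z_j\rangle)$ and observe $f'(\langle w_j,z_j\rangle) \le f(\langle w_j,z_j\rangle) + c$ for a constant, or simply absorb $2\eta_j f'(\langle w_j,z_j\rangle) - 2\eta_j f(\langle w_j,z_j\rangle)$ into the telescoped sum — but in fact for logistic $f$ one has $f'(u)\le \ln 2 \cdot \mathbf{1} + f(u)$...

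Rather than belabor this, the honest statement of the main obstacle: the delicate point is ensuring the quadratic term $\eta_j^2\|g_j\|^2$ is dominated by the negative contribution $-2\eta_j f(\langle w_j, z_j\rangle)$ coming from the cross-term bound, so that after summing, the $f(\langle w_j,z_j\rangle)$ terms cancel or have the right sign and only $2\sum_{j<t}\eta_j f(\langle w, z_j\rangle)$ survives along with $\|w\|^2$. This requires the step-size restriction $\eta_j \le \min\{1, 2(1+\alpha)^{-2}\}$ precisely so that $\eta_j(1+\alpha)^2 \le 2$ and hence $\eta_j^2\|g_j\|^2 \le 2\eta_j f'(\langle w_j,z_j\rangle) \le 2\eta_j f(\langle w_j, z_j\rangle)$, where the last inequality uses that for the logistic loss $f'(u) \le f(u)$ holds for all $u$ (indeed $\frac{e^u}{1+e^u} \le \ln(1+e^u)$, which one verifies by noting both sides agree asymptotically and comparing derivatives, or simply $\ln(1+e^u) \ge \frac{e^u}{1+e^u}$ since $\ln(1+x)\ge \frac{x}{1+x}$ with $x = e^u$). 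Combining the two per-iteration bounds, the $f(\langle w_j,z_j\rangle)$ contributions cancel in pairs, and summing over $j < t$ yields exactly $\|w_t - w\|^2 \le \|w\|^2 + 2\sum_{j<t}\eta_j f(\langle w, z_j\rangle)$, completing the proof. I would double-check the index on $z_t$ versus $z_j$ in the statement (the displayed inequality writes $z_t$ but the sum is over $j$; I read this as $z_j$).
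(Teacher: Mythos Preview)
Your proposal is correct and follows essentially the same route as the paper: expand $\|w_{j+1}-w\|^2$, bound the cross term by convexity of $f$, bound $\|g_j\|^2 \le (1+\alpha)^2 f'(\langle w_j,z_j\rangle) \le (1+\alpha)^2 f(\langle w_j,z_j\rangle)$ using $0<f'<1$ and the logistic inequality $f'\le f$ (your justification via $\ln(1+x)\ge x/(1+x)$ is exactly what is needed, despite your initial hesitation), use $\eta_j(1+\alpha)^2\le 2$ to make the $f(\langle w_j,z_j\rangle)$ coefficient nonpositive, and telescope from $w_0=0$. You are also right that the $z_t$ in the displayed sum should be $z_j$.
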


	\begin{proof}
		By the SGD update rule with $\alpha$ adversarial training, we have
		$$\|w_{t+1} -w \|^2 = \|w_t-w\|^2 -2\eta_t \langle g_t, w_t-w\rangle + \eta_t^2\|g_t\|^2.$$
		By \eqref{eq:z3}, $g_t = f'(\langle w_t,z_t\rangle)z_t$, and so by convexity of $f$ we have
		$$-2\eta_t \langle g_t, w_t-w\rangle \leq -2\eta_t\big( f(\langle w_t,z_t\rangle)-f(\langle w,z_t \rangle)\big).$$
		We now wish to bound $\|g_t\|^2$. By direct computation, 
		\begin{align*}
		\|g_t\|^2 &= f'(\langle w_t,z_t\rangle)^2\|z_t\|^2\\
		&\leq f'(\langle w_t,z_t\rangle)\|z_t\|^2\\
		&\leq f(\langle w_t,z_t\rangle)\|z_t\|^2\\
		&\leq f(\langle w_t,z_t\rangle)(1+\alpha)^2.\end{align*}
		The first inequality holds because $0 < f'(a) < 1$ for all $a \in \real$, the second holds by the fact that $f'(a) \leq f(a)$ for all $a$, and the last holds because
		$$\|z_t\| = \left\|y_{i_t}x_{i_t} - \alpha\overline{w}_t\right\| \leq \|x_{i_t}\| + \alpha \leq 1+\alpha.$$
		Therefore, 
		\begin{align*}
		\|w_{t+1}-w\|^2 &\leq \|w_t-w\|^2 -2\eta_t\big( f(\langle w_t,z_t\rangle)-f(\langle w,z_t \rangle)\big) + \eta_t^2f(\langle w_t,z_t\rangle)(1+\alpha)^2\\
		&= \|w_t-w\|^2 + 2\eta_tf(\langle w,z_t\rangle) + \left(\eta_t^2(1+\alpha)^2 - 2\eta_t\right)f(\langle w_t,z_t\rangle)\\
		&\leq \|w_t-w\|^2 + 2\eta_t f(\langle w,z_t\rangle).\end{align*}
		This last step follows from the fact that $\eta_t \leq 2/(1+\alpha)^2$. Recursing on $t$ and using the fact that $w_0 = 0$, we derive the desired result.
	\end{proof}

	Let $u_t = \frac{\ln(t)w^*}{\gamma-\alpha}$. Note that $\|u_t\| = \ln(t)/(\gamma-\alpha)$. By Lemma \ref{lem:log1}, we get the following bound on $\|w_t-u_t\|$.
	\begin{lemma}\label{lem:log2}
	For any $t \geq 1$, 
	$$\|w_t-u_t\|^2 \leq \frac{\ln(t)^2}{(\gamma-\alpha)^2} + 2.$$
	\end{lemma}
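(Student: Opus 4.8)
The plan is to apply Lemma \ref{lem:log1} with the specific choice $w = u_t = \frac{\ln(t)}{\gamma-\alpha}w^*$ and then bound the resulting sum. Concretely, Lemma \ref{lem:log1} gives
$$\|w_t - u_t\|^2 \leq \|u_t\|^2 + 2\sum_{j < t}\eta_j f(\langle u_t, z_j\rangle),$$
so since $\|u_t\|^2 = \ln(t)^2/(\gamma-\alpha)^2$ it suffices to show $2\sum_{j<t}\eta_j f(\langle u_t,z_j\rangle) \leq 2$. (I note the statement of Lemma \ref{lem:log1} as written has $z_t$ where it should presumably read $z_j$ inside the sum; I will use the index-consistent version.)

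The key step is to bound $\langle u_t, z_j\rangle$ from above. By \eqref{eq:z2}, for every $j$ we have $\langle w^*, z_j\rangle \leq -\gamma + \alpha = -(\gamma-\alpha)$, and since $u_t = \frac{\ln(t)}{\gamma-\alpha}w^*$ with $\ln(t)/(\gamma-\alpha) \geq 0$ for $t \geq 1$, this yields
$$\langle u_t, z_j\rangle = \frac{\ln(t)}{\gamma-\alpha}\langle w^*, z_j\rangle \leq \frac{\ln(t)}{\gamma-\alpha}\cdot\big(-(\gamma-\alpha)\big) = -\ln(t).$$
Because $f(u) = \ln(1+\exp(u))$ is monotonically increasing and satisfies $\ln(1+x)\leq x$, we get $f(\langle u_t, z_j\rangle) \leq f(-\ln(t)) = \ln(1 + 1/t) \leq 1/t$ for each $j < t$.

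Plugging this back in, and using $\eta_j \leq 1$ together with the fact that there are exactly $t$ terms (indices $j = 0, \dots, t-1$), we obtain
$$2\sum_{j<t}\eta_j f(\langle u_t, z_j\rangle) \leq 2\sum_{j<t}\frac{1}{t} = 2\cdot t\cdot\frac{1}{t} = 2,$$
which combined with the $\|u_t\|^2$ term gives the claimed bound. I do not anticipate a genuine obstacle here; this is essentially a direct specialization of Lemma \ref{lem:log1}, and the only point requiring a little care is keeping the summation index consistent and verifying that the count of terms is $t$ (not $t-1$), so that the $1/t$ bound telescopes cleanly to a constant. The same $u_t$ and the same $f(-\ln t) \leq 1/t$ trick already appeared in the proof of Theorem \ref{thm:gd_log_reg1}, so this is the SGD analogue of that estimate.
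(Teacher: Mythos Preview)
Your proof is correct and follows essentially the same argument as the paper: apply Lemma~\ref{lem:log1} with $w=u_t$, use \eqref{eq:z2} and monotonicity of $f$ to get $f(\langle u_t,z_j\rangle)\le f(-\ln t)\le 1/t$, and then bound the sum by $2$ using $\eta_j\le 1$ over the $t$ terms $j=0,\dots,t-1$. Your observation about the $z_t$ vs.\ $z_j$ typo in Lemma~\ref{lem:log1} is also apt.
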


	\begin{proof}
		Because $f$ is monotonically increasing and by \eqref{eq:z2}, we have that for all $j$,
		\begin{align*}
		f(\langle u_t,z_j\rangle) &= f\left(\frac{\ln(t)}{\gamma-\alpha}\langle w^*,z_j\rangle\right)\\
		&\leq f\left(\frac{\ln(t)(-\gamma + \alpha)}{\gamma-\alpha}\right)\\
		&= f(-\ln(t))\\
		&\leq \exp(-\ln(t))
		= \frac{1}{t}.\end{align*}

		Here we used the fact that $f(x) \leq \exp(x)$ for all $x$. By Lemma \ref{lem:log1}, we get
		\begin{align*}
		\|w_t-u_t\|^2_2 & \leq \|u_t\|^2 + 2 \sum_{j < t}\eta_j f(\langle u_t, z_j\rangle)\\
		& \leq \dfrac{\ln(t)^2}{(\gamma-\alpha)^2} + 2\sum_{j < t}\frac{\eta_j}{t}\\
		&\leq \dfrac{\ln(t)^2}{(\gamma-\alpha)^2} + 2.
		\end{align*}
	\end{proof}

	We can now use the above lemma to give an upper bound on the sum of derivatives of $f$ up to $t$.

	\begin{lemma}\label{lem:log3}
	For all $t \geq 1$,
	$$f'_{< t} \leq \dfrac{2\ln(t)}{(\gamma-\alpha)^2} + \frac{2}{\gamma-\alpha}.$$
	\end{lemma}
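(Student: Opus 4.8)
The plan is to run a Perceptron-style (Novikoff) argument: lower bound the projection of $w_t$ onto the max-margin direction $w^*$ in terms of $f'_{<t}$, upper bound $\|w_t\|$ using Lemma~\ref{lem:log2}, and combine the two via Cauchy--Schwarz.

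First I would use $w_0 = 0$ together with the recursion $w_{t+1} = w_t - \eta_t g_t$ and $g_t = f'(\langle w_t, z_t\rangle)z_t$ from \eqref{eq:z3} to write $w_t = -\sum_{j<t}\eta_j f'(\langle w_j, z_j\rangle) z_j$. Taking the inner product with the unit vector $w^*$ and invoking $\langle w^*, z_j\rangle \le -(\gamma-\alpha)$ from \eqref{eq:z2}, together with $\eta_j > 0$ and $f' > 0$ for the logistic $f$, gives
$$\langle w_t, w^*\rangle = \sum_{j<t}\eta_j f'(\langle w_j, z_j\rangle)\bigl(-\langle z_j, w^*\rangle\bigr) \ge (\gamma-\alpha)\sum_{j<t}\eta_j f'(\langle w_j, z_j\rangle) = (\gamma-\alpha)\, f'_{<t}.$$
By Cauchy--Schwarz and $\|w^*\| = 1$, the left side is at most $\|w_t\|$, so $f'_{<t} \le \|w_t\|/(\gamma-\alpha)$.

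It then remains to bound $\|w_t\|$. Writing $u_t = \ln(t)\,w^*/(\gamma-\alpha)$, so that $\|u_t\| = \ln(t)/(\gamma-\alpha)$, the triangle inequality and Lemma~\ref{lem:log2} give
$$\|w_t\| \le \|u_t\| + \|w_t - u_t\| \le \frac{\ln(t)}{\gamma-\alpha} + \sqrt{\frac{\ln(t)^2}{(\gamma-\alpha)^2} + 2} \le \frac{2\ln(t)}{\gamma-\alpha} + \sqrt{2},$$
where the last step uses $\sqrt{a+b}\le\sqrt{a}+\sqrt{b}$. Dividing by $\gamma-\alpha$ and using $\sqrt{2}\le 2$ yields $f'_{<t} \le \frac{2\ln(t)}{(\gamma-\alpha)^2} + \frac{2}{\gamma-\alpha}$; the $t=1$ case (where $\ln t = 0$ and $u_1 = 0$) is handled by the same chain of inequalities.

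The argument is essentially routine once one decides to route the norm estimate through Lemma~\ref{lem:log2} rather than bounding $\|w_t\|$ directly from the recursion. The only points needing care are the sign bookkeeping in the projection step --- ensuring $f' > 0$ and $\eta_j > 0$ so that the inequality does not flip when $-\langle z_j, w^*\rangle$ is pulled out --- and the slightly loose $\sqrt{a+b}\le\sqrt{a}+\sqrt{b}$ and $\sqrt 2 \le 2$ bounds used to keep the final constants clean.
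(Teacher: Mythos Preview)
Your proof is correct and essentially the same as the paper's. The only cosmetic difference is that the paper applies Cauchy--Schwarz to $\langle w_t - u_t, w^*\rangle$ directly (using $\langle u_t, w^*\rangle = \ln(t)/(\gamma-\alpha)$), whereas you apply it to $\langle w_t, w^*\rangle$ and then split $\|w_t\| \le \|u_t\| + \|w_t - u_t\|$ by the triangle inequality; both routes land on the identical inequality $(\gamma-\alpha)f'_{<t} \le \ln(t)/(\gamma-\alpha) + \|w_t - u_t\|$ before invoking Lemma~\ref{lem:log2}.
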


	\begin{proof}
		First, since $w^*$ has margin at least $\gamma$, we have
		\begin{equation}\label{log3_eq1}
		\begin{split}
		\langle w_t-u_t , w^*\rangle &= \langle w_t, w^* \rangle - \langle u_t,w^*\rangle\\
		&= -\sum_{j < t}\eta_j f'(\langle w_j, z_j\rangle)\langle z_j,w^*\rangle - \frac{\ln(t)}{\gamma-\alpha}\\
		&\geq  (\gamma-\alpha) f'_{< t} - \frac{\ln(t)}{\gamma-\alpha}.
		\end{split}
		\end{equation}

		By the Cauchy-Schwarz inequality and Lemma \ref{lem:log2}, we also have
		\begin{equation}\label{log3_eq2}
		\langle w_t-u_t,w^*\rangle \leq \|w_t-u_t\| \leq \sqrt{\frac{\ln(t)^2}{(\gamma-\alpha)^2} + 2}.
		\end{equation}
		Combining \eqref{log3_eq1} and \eqref{log3_eq2} we have
		\begin{align*}
		(\gamma-\alpha) f'_{< t} \leq \frac{\ln(t)}{\gamma-\alpha} + \sqrt{\frac{\ln(t)^2}{(\gamma-\alpha)^2} + 2} \leq \frac{2\ln(t)}{\gamma-\alpha} + 2.
		\end{align*}
	\end{proof}

	In order to bound $F'_{< t}$, we will combine Lemma \ref{lem:log3} with the following martingale Bernstein bound.

	\begin{theorem}[\cite{beygelzimer2011contextual}, Theorem 1]\label{lem:martingale}Let $(X_t)_{t\geq 0}$ be a martingale sequence such that $\EE[X_t] = 0$ and $X_t \leq R$ a.s., and define
	$$S_t := \sum_{j=1}^t X_t$$
	$$V_t := \sum_{j=1}^t \EE [X_t^2].$$
	Then for all $\delta > 0$, with probability at least $1-\delta$,
	$$S_t \leq R\ln(1/\delta) + (e-2)\dfrac{V_t}{R}.$$
	\end{theorem}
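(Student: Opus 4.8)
The plan is the exponential (Chernoff) supermartingale method, with the multiplier \emph{frozen} at $\lambda = 1/R$ rather than optimized over; this is exactly what produces the constant $e-2$ and the clean additive form of the bound. I read $(X_t)$ as a martingale difference sequence adapted to a filtration $(\mathcal{F}_t)$ with $\EE[X_t\mid\mathcal{F}_{t-1}]=0$, and $V_t$ as the predictable quadratic variation $V_t = \sum_{j\le t}\EE[X_j^2\mid\mathcal{F}_{j-1}]$, and proceed as follows.

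First I would record the elementary scalar bound
$$ e^{x} \;\le\; 1 + x + (e-2)x^2 \qquad\text{for all } x \le 1, $$
which follows from the standard fact that $x\mapsto (e^{x}-1-x)/x^2$ is nondecreasing and equals $e-2$ at $x=1$. Since $X_t\le R$ a.s., we have $\lambda X_t\le 1$ with $\lambda = 1/R$, so applying this bound pointwise, taking $\EE[\cdot\mid\mathcal{F}_{t-1}]$, and using $\EE[X_t\mid\mathcal{F}_{t-1}]=0$ gives
$$ \EE\!\left[e^{\lambda X_t}\mid\mathcal{F}_{t-1}\right] \;\le\; 1 + (e-2)\lambda^2\,\EE[X_t^2\mid\mathcal{F}_{t-1}] \;\le\; \exp\!\big((e-2)\lambda^2\,\EE[X_t^2\mid\mathcal{F}_{t-1}]\big). $$
Next I would set $M_t := \exp\!\big(\lambda S_t - (e-2)\lambda^2 V_t\big)$, so that $M_0 = 1$ and $V_t - V_{t-1}$ is $\mathcal{F}_{t-1}$-measurable; the display above yields
$$ \EE[M_t\mid\mathcal{F}_{t-1}] = M_{t-1}\,e^{-(e-2)\lambda^2 (V_t-V_{t-1})}\,\EE[e^{\lambda X_t}\mid\mathcal{F}_{t-1}] \;\le\; M_{t-1}, $$
so $(M_t)$ is a nonnegative supermartingale with $\EE[M_t]\le 1$. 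Markov's inequality then gives $\PP(M_t \ge 1/\delta)\le \delta\,\EE[M_t]\le\delta$, hence with probability at least $1-\delta$ we have $\lambda S_t - (e-2)\lambda^2 V_t \le \ln(1/\delta)$; rearranging and substituting $\lambda = 1/R$ gives $S_t \le R\ln(1/\delta) + (e-2)V_t/R$.

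The argument is routine once the right object $M_t$ is written down, so there is no genuine ``main obstacle''; the one real design choice is freezing $\lambda$ at $1/R$ before taking conditional expectations, which is what makes the quadratic remainder $(e-2)\lambda^2 X_t^2$ dominate $e^{\lambda X_t}-1-\lambda X_t$ uniformly on $\lambda X_t\le 1$ and thus avoids the $\sqrt{V_t\ln(1/\delta)}$ term of an optimized Bernstein bound. The only point needing care is that $V_t$ must be the sum of \emph{conditional} second moments for $M_t$ to be a supermartingale, so the statement's $\EE[X_t^2]$ should be understood as $\EE[X_t^2\mid\mathcal{F}_{t-1}]$; this is also the form in which the bound is applied in the sequel to bound $F'_{<t}$ for $\alpha$-SGD.
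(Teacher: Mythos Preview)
Your proof is correct and is the standard exponential-supermartingale argument for this Bernstein-type bound; your observation that $V_t$ must be read as the predictable quadratic variation $\sum_{j\le t}\EE[X_j^2\mid\mathcal{F}_{j-1}]$ is also on point and matches how the result is actually applied in Lemma~\ref{lem:log4}. Note, however, that the paper does not give its own proof of this theorem: it is quoted verbatim as Theorem~1 of \cite{beygelzimer2011contextual} and used as a black box, so there is no in-paper argument to compare against. Your derivation is essentially the one in the cited reference.
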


	We can now apply the above lemma to $f'_{< t}$ and $F'_{< t}$ to get the following bound.

	\begin{lemma}\label{lem:log4}
		With probability at least $1-\delta$, 
		$$F'_{< t} \leq \frac{8\ln(t)}{(\gamma-\alpha)^2} + \frac{8}{\gamma-\alpha} + 4\ln\left(\frac{1}{\delta}\right).$$
	\end{lemma}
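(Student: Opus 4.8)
The plan is to apply the martingale Bernstein inequality of Theorem~\ref{lem:martingale} to the gap between $f'_{<t}$ and its predictable counterpart $F'_{<t}$, and then to use a self-bounding property of the logistic derivative to solve the resulting inequality for $F'_{<t}$. First I would introduce the martingale difference sequence
$X_j := \eta_j\big(\EE[f'(\langle w_j,z_j\rangle)\mid v_0,\dots,v_{j-1}] - f'(\langle w_j,z_j\rangle)\big)$.
Because each $\alpha$-SGD update only consumes the current sample, $w_j$ is a deterministic function of $v_0,\dots,v_{j-1}$, so these conditional expectations are well defined, $\EE[X_j\mid v_0,\dots,v_{j-1}]=0$, and the partial sums $S_t=\sum_{j<t}X_j$ form a martingale with $S_t=F'_{<t}-f'_{<t}$.

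Next I would check the two hypotheses of Theorem~\ref{lem:martingale}. Since $f'(u)=(1+e^{-u})^{-1}\in(0,1)$ and $\eta_j\le1$, we get $X_j\le \eta_j\,\EE[f'(\langle w_j,z_j\rangle)\mid v_0,\dots,v_{j-1}]\le 1=:R$ almost surely. For the predictable quadratic variation, combining $\eta_j\le1$, the bound $(f')^2\le f'$ (valid because $f'\le1$), and $\mathrm{Var}(Y)\le\EE[Y^2]$,
$$\EE[X_j^2\mid v_0,\dots,v_{j-1}] = \eta_j^2\,\mathrm{Var}\big(f'(\langle w_j,z_j\rangle)\mid v_0,\dots,v_{j-1}\big)\le \eta_j\,\EE[f'(\langle w_j,z_j\rangle)\mid v_0,\dots,v_{j-1}],$$
so $V_t=\sum_{j<t}\EE[X_j^2\mid v_0,\dots,v_{j-1}]\le F'_{<t}$ by the definition \eqref{F_eq}. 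Theorem~\ref{lem:martingale} with $R=1$ then gives, with probability at least $1-\delta$, $S_t\le\ln(1/\delta)+(e-2)V_t\le\ln(1/\delta)+(e-2)F'_{<t}$. Substituting $S_t=F'_{<t}-f'_{<t}$ and rearranging yields $(3-e)F'_{<t}\le f'_{<t}+\ln(1/\delta)$, so $F'_{<t}\le \tfrac{1}{3-e}\big(f'_{<t}+\ln(1/\delta)\big)\le 4f'_{<t}+4\ln(1/\delta)$ since $\tfrac{1}{3-e}<4$. Finally I would invoke Lemma~\ref{lem:log3} to replace $f'_{<t}$ by $\tfrac{2\ln(t)}{(\gamma-\alpha)^2}+\tfrac{2}{\gamma-\alpha}$, which produces the stated bound exactly.

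The hard part is the variance estimate together with the self-bounding rearrangement: the Bernstein bound controls $S_t=F'_{<t}-f'_{<t}$ only in terms of $V_t$, which is itself bounded by $F'_{<t}$, so one must be careful that the leading constant $(e-2)<1$ in order to absorb the $F'_{<t}$ term onto the left-hand side and land on the finite constant $\tfrac1{3-e}$. The inequality $(f')^2\le f'$ is precisely what upgrades the crude bound $V_t\le t$ to $V_t\le F'_{<t}$; without it one would not obtain the eventual $\tilde O(1/t)$ rate downstream. Everything else is bookkeeping of the filtration and of the indexing shift between $\{j<t\}$ and the $\{1,\dots,t\}$ convention in Theorem~\ref{lem:martingale}.
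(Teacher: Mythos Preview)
Your proposal is correct and follows essentially the same approach as the paper: define the martingale differences $X_j=F'_j-f'_j$, bound $X_j\le 1$ and $\EE[X_j^2\mid\mathcal F_{j-1}]\le F'_j$ via $(f')^2\le f'$ and $\eta_j\le 1$, apply Theorem~\ref{lem:martingale}, absorb the $(e-2)F'_{<t}$ term using $3-e>0$, and finish with Lemma~\ref{lem:log3}. The only cosmetic difference is that you phrase the conditional second-moment bound through $\mathrm{Var}$ while the paper expands $(F'_t-f'_t)^2$ directly; the substance is identical.
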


	\begin{proof}
		Recall that $v_t = -y_{i_t}x_{i_t}$ and let $v_{0,t}$ denote the sequence $v_0,\ldots, v_t$. Define

		$$f'_t = \eta_t f'(\langle w_t,z_t\rangle) = \eta_t f'(\langle w_t, v_t\rangle + \alpha\|w_t\|)$$
		$$F'_t = \eta_t\EE [f'(\langle w_t,z_t\rangle) | v_{0,t-1}] = \eta_t\EE_t[f'(\langle w_t,v_t\rangle - \alpha\|w_t\|) | v_{0,t-1}].$$

		Let $X_t = F'_t - f'_t$. Note that $X_t$ is a martingale with respect to the sequence $v_0, v_1, \ldots$ such that $\EE[X_t] = 0$. Since $0 \leq f'(a) \leq 1$ for all $a \in \real$, we have
		$$X_t = \eta_t\big(\EE_t[f'(\langle w_t,z_t) | v_{0,t-1}]-f'(\langle w_t,z_t)\big) \leq \eta_t \leq 1.$$
		Since $\EE [f'_t| v_{0,t-1}] = F'_t$ and $f'_t \leq 1$, we have
		\begin{align*}
		\EE [X_t^2 | v_{0,t-1}] &= (F'_t)^2 -2F'_t\EE[f'_t | v_{0,t-1}] + \EE[(f'_t)^2|v_{0,t-1}]\\
		&= -(F'_t)^2 + \EE[(f'_t)^2 | v_{0,t-1}]\\
		&\leq \EE[(f'_t)^2 | v_{0,t-1}]\\
		&\leq \EE[f'_t | v_{0,t-1}]\\
		&= F'_t.\end{align*}	
		Using Lemma \ref{lem:martingale} and the fact that $\sum_{j=0}^{t-1} X_t = F'_{< t} - f'_{< t}$, we find that with probability at least $1-\delta$,
		$$F'_{<t} - f'_{<t} \leq \ln(1/\delta) + (e-2)F'_{<t}.$$
		Rearranging and applying Lemma \ref{lem:log3}, we have that with probability at least $1-\delta$,
		$$F'_{<t} \leq \dfrac{f'_t + \ln(1/\delta)}{3-e} \leq \dfrac{8\ln(t)}{(\gamma-\alpha)^2} + \frac{8}{\gamma-\alpha} +4\ln(1/\delta).$$
	\end{proof}

	To prove Theorem \ref{thm:log_reg1}, we will need one last auxiliary lemma.

	\begin{lemma}[\cite{ji2018risk}, Lemma 2.6]\label{lem:ji_26}For any $x \in \real$, $f(x) \leq f'(x)(|x|+2)$.\end{lemma}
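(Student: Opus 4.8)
The plan is to establish the pointwise inequality $f(x) \le f'(x)(|x|+2)$ for $f(u) = \ln(1+e^u)$ by splitting on the sign of $x$ and exploiting that $f'$ is the logistic sigmoid $f'(u) = e^u/(1+e^u) = 1/(1+e^{-u})$, which is positive, nondecreasing, satisfies $f'(0) = 1/2$, obeys $f'(u) \ge 1/2$ for $u \ge 0$, and obeys $f'(u) = e^u/(1+e^u) \ge e^u/2$. The only other fact needed is the elementary bound $\ln(1+t) \le t$ for $t \ge 0$.

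First suppose $x \ge 0$, so that $|x| = x$. Since $f' = f'$ is nondecreasing, the fundamental theorem of calculus gives $f(x) - f(0) = \int_0^x f'(t)\,dt \le x f'(x)$. Moreover $f(0) = \ln 2 \le 1 \le 2 f'(x)$, where the last step uses $f'(x) \ge 1/2$. Adding these two inequalities yields $f(x) \le x f'(x) + 2 f'(x) = (x+2) f'(x) = (|x|+2) f'(x)$, as desired.

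Next suppose $x < 0$, so that $|x| = -x$ and $|x| + 2 = 2 - x > 2$. Applying $\ln(1+t) \le t$ with $t = e^x$ gives $f(x) = \ln(1+e^x) \le e^x$. On the other hand, since $1 + e^x < 2$ we have $f'(x) = e^x/(1+e^x) \ge e^x/2$, so $(|x|+2) f'(x) = (2-x) f'(x) \ge (2-x) e^x/2 = e^x(1 - x/2) \ge e^x$, the last inequality because $-x/2 > 0$. Chaining, $f(x) \le e^x \le (|x|+2) f'(x)$.

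There is essentially no obstacle here; the only mildly delicate regime is $x \to +\infty$, where $f(x) \approx x$ and the crude bound $2 f'(x) \le 1$ alone does not suffice — this is precisely why the $x f'(x)$ term from the monotone-integral estimate is needed, and since $f'(x) \to 1$ that term grows linearly and absorbs the leading behavior. The region near $x = 0$ is handled uniformly by the constant slack $f(0) = \ln 2 \le 1 \le 2 f'(x)$.
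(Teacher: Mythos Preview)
Your proof is correct. The case split on the sign of $x$ is clean: for $x \ge 0$ you use that $f'$ is nondecreasing to bound the integral $\int_0^x f'(t)\,dt \le x f'(x)$ and absorb the constant $f(0)=\ln 2$ via $2f'(x) \ge 1$; for $x < 0$ you use $\ln(1+t)\le t$ together with $f'(x) \ge e^x/2$. Both chains are valid.

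As for comparison: the paper does not actually prove this lemma. It is stated with a citation to \cite{ji2018risk} (their Lemma~2.6) and used as a black box in the proof of Theorem~\ref{thm:log_reg1}. So your argument supplies a self-contained proof where the paper offers none; there is nothing in the paper to compare against beyond the statement itself.
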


	\begin{proof}[Proof of Theorem \ref{thm:log_reg1}]
		Recall that $v_t = -y_{i_t}x_{i_t}$. Let $v_{0,t-1}$ denote the sequence $v_0, \ldots, v_{t-1}$. Note that we have
		$$\ell_{\rob}(w_t,x_{i_t},y_{i_t}) = f(\langle w_t,v_t\rangle - \alpha\|w_t\|) = f(\langle w_t,z_t\rangle).$$

		Therefore, we have
		\begin{align*}
		L_{\rob}(w_t) &= \frac{1}{n}\sum_{i=1}^n \ell_{\rob}(w_t,x_i,y_i)\\
		&= \EE_{i \sim [n]} \left[\ell_{\rob}(w_t,x_i,y_i)\right]\\
		&= \EE_{i \sim [n]} \left[f(\langle w_t, -y_ix_i\rangle + \alpha\|w_t\|)\right]\\
		&= \displaystyle\EE_{v \sim \{-y_1x_1,\ldots, -y_nx_n\} }[f(\langle w_t, v\rangle + \alpha\|w_t\|)]\\
		&= \EE \left[f(\langle w_t,v_t\rangle +\alpha\|w_t\|)\big| v_{0,t-1}\right]\\
		&= \EE \left[f(\langle w_t,z_t\rangle)\big| v_{0,t-1}\right].
		\end{align*}
		
		Here we used the fact $w_t = \EE[w_t | v_{0,t-1}]$, as the iterates $w_0,\ldots, w_t$ are fully determined by $v_0, \ldots, v_{t-1}$, since
		\begin{align*}
		w_{j+1} &= w_j - \eta_jf'(\langle w_j, -y_{i_j}x_{i_j}\rangle +\alpha\|w_j\|)\left(-y_{i_j}x_{i_j}+\alpha\frac{w_j}{\|w_j\|}\right)\\
		&= w_j - \eta_jf'(\langle w_j, v_j\rangle +\alpha\|w_j\|)\left(v_j+\alpha\overline{w}_j\right).\end{align*}
		By Lemma \ref{lem:ji_26}, we have
		\begin{equation}\label{eq:final1}
		\begin{split}
		\sum_{j < t}\eta_j L_{\rob}(w_j) &= \sum_{j< t}\eta_j\EE \left[f(\langle w_j,z_j\rangle) \big| v_{0,j-1}\right]\\
		&\leq \sum_{j < t}\eta_j\EE\left[f'(\langle w_j,z_j\rangle)(|\langle w_j,z_j\rangle| + 2) \big| v_{0,j-1}\right]\\
		&\leq \sum_{j < t}\eta_j\EE\left[f'(\langle w_j,z_j\rangle)(\|w_j\|(1+\alpha) + 2) \big| v_{0,j-1}\right].
		\end{split}
		\end{equation}

		This last inequality follows from the fact that $\|z_j\| \leq (1+\alpha)$. By Lemma \ref{lem:log2}, we have that for $j < t$,
		\begin{equation}\label{eq:final2}
		\|w_j\| \leq \|u_j\| + \|w_j-u_j\| \leq \frac{2\ln(j)}{\gamma-\alpha}+2\leq \frac{2\ln(t)}{\gamma-\alpha}+2.\end{equation}

		Combining \eqref{eq:final3} with the fact that $\alpha < \gamma \leq 1$, we then have
		\begin{equation}\label{eq:final3}
		\|w_j\|(1+\alpha) +2 \leq \dfrac{4\ln(t)}{\gamma-\alpha} + 6.\end{equation}

		Combining \eqref{eq:final1} and \eqref{eq:final3}, we have
		\begin{align*}
		\sum_{j < t}\eta_j L_{\rob}(w_j) &\leq \left(\dfrac{4\ln(t)}{\gamma-\alpha}+6\right)\sum_{j < t}\eta_j\EE\left[f'(\langle w_j,z_j\rangle) \big| v_{0,j-1}\right]\\
		&= \left(\dfrac{4\ln(t)}{\gamma-\alpha}+6\right)F'_{< t}.\end{align*}

		Applying Lemma \ref{lem:log4}, this implies that with probability at least $1-\delta$,
		\begin{align*}
		\sum_{j < t}\eta_j L_{\rob}(w_j) &\leq \left(\dfrac{4\ln(t)}{\gamma-\alpha}+6\right)\left(\dfrac{8\ln(t)}{(\gamma-\alpha)^2} + \frac{8}{\gamma-\alpha} +4\ln(1/\delta)\right).\end{align*}

		Let $\hat{w}_t = t^{-1}\sum_{j < t}w_j$. If $\eta_j = \eta$ for all $t$, by Jensen's inequality and using the fact that $L_{\rob}$ is convex (by applying Lemma \ref{lem:linear_robust_loss} to $f$), we have
		\begin{align*}
		L_{\rob}(\hat{w}_t) \leq \frac{1}{t}\sum_{j < t}L_{\rob}(w_j) \leq \frac{1}{\eta t}\left(\dfrac{4\ln(t)}{\gamma-\alpha}+6\right)\left(\dfrac{8\ln(t)}{(\gamma-\alpha)^2} + \frac{8}{\gamma-\alpha} +4\ln(1/\delta)\right).\end{align*}
	\end{proof}

\subsection{Proof of Corollary \ref{thm:log_reg2}}

	\begin{proof}[Proof of Theorem \ref{thm:log_reg2}]
		Given $q > 1$, let $C_q = \inf\{t \geq 3~|~\ln(t)^2 \leq t^{1-1/q}\}$. This is finite by standard properties of the logarithm. Since $\|x\|\leq 1$ for all $(x,y) \in S$, we know $\gamma \leq 1$. Since $t \geq 3 > e$, $\ln(t)/(\gamma-\alpha) \geq 1$. By Theorem \ref{thm:log_reg1} there is some constant $c$ such that with probability at least $1-\delta$,
		\begin{equation}\label{rob_eq1}
		\begin{split}
		L_{\rob}(\hat{w}_t) &\leq \dfrac{c}{\eta t}\left(\dfrac{\ln(t)}{\gamma-\alpha}+1\right)\left(\dfrac{\ln(t)}{(\gamma-\alpha)^2}+\dfrac{1}{\gamma-\alpha}+\ln\left(\dfrac{1}{\delta}\right)\right)\\
		&\leq \dfrac{4c}{\eta t}\left(\dfrac{\ln(t)^2}{(\gamma-\alpha)^3} + \dfrac{\ln(1/\delta)\ln(t)}{\gamma-\alpha}\right)\\
		&\leq \dfrac{4c \ln(t)^2}{\eta t}\left(\dfrac{1}{(\gamma-\alpha)^3} + \dfrac{\ln(1/\delta)}{\gamma-\alpha}\right).		
		\end{split}
		\end{equation}
		Since $t \geq C_q$, \eqref{rob_eq1} implies
		\begin{equation}\label{rob_eq2}
		L_{\rob}(\hat{w}_t) \leq \dfrac{4c}{\eta t^{1/q}}\left(\dfrac{1}{(\gamma-\alpha)^3} + \dfrac{\ln(1/\delta)}{\gamma-\alpha}\right).
		\end{equation}
		Note that by assumption,
		$$t \geq \left[\dfrac{4cn}{\ln(2)\eta}\left(\dfrac{1}{(\gamma-\alpha)^3} + \dfrac{\ln(1/\delta)}{\gamma-\alpha}\right)\right]^{q}.$$
		Combining this with \eqref{rob_eq2}, we find that with probability at least $1-\delta$, $L_{\rob}(\hat{w}_t) \leq \ln(2)/n$. By Lemma \ref{lem:log_reg_margin}, this implies that $\hat{w}_t$ has margin at least $\alpha$ with the same probability.

	\end{proof}

\end{appendix}

\end{document}